\newtheorem{theorem}{\textbf{Theorem}}[section]
\newtheorem{remark}[theorem]{\textbf{Remark}}
\newtheorem{lemma}[theorem]{\textbf{Lemma}}
\newtheorem{corollary}[theorem]{\textbf{Corollary}}
\newtheorem{proposition}[theorem]{\textbf{Proposition}}
\def\aa{\alpha}
\def\dd{\delta}
\def\gg{\gamma}
\def\ll{\lambda}
\def\ss{\sigma}
\def\th{\theta}
\def\ee{\varepsilon}
\def\vp{\varphi}
\newcommand{\ddo}{{\overline \delta}}
\newcommand{\tho}{{\overline{\theta}}}
\newcommand{\cA}{{\cal A}}
\newcommand{\RR}{ \mathbb{R}}
\newcommand{\CC}{ \mathbb{C}}
\newcommand{\ZZ}{ \mathbb{Z}}
\newcommand{\EE}{{\mathbb E}}
\newcommand{\TT}{{\mathbb{T}}}
\newcommand{\cL}{{\mathcal L}}
\newcommand{\cN}{{\mathcal N}}
\newcommand{\cP}{{\mathcal P}}
\newcommand{\cS}{{\mathcal S}}
\newcommand{\cU}{{\mathcal U}}
\newcommand{\bt}{{\widetilde b}}
\newcommand{\Xt}{{\widetilde X}}
\newcommand{\ddt}{{\widetilde \dd}}
\newcommand{\tht}{{\widetilde \th}}
\newcommand{\Ct}{{\widetilde C}}
\newcommand{\Ft}{{\widetilde F}}
\newcommand{\Ht}{{\widetilde H}}
\newcommand{\Wt}{{\widetilde W}}
\DeclareMathOperator{\argmin}{\mathop{\rm argmin}}
\DeclareMathOperator{\argmax}{\mathop{\rm argmax}}
\DeclareMathOperator{\Var}{\mathop{\rm Var}}
\newcommand{\norm}[3][]{{\left\| #2 \right\|_{#3}^{#1}}}
\newcommand{\norminf}[2][]{{\left\| #2 \right\|_{\infty}^{#1}}}
\DeclareMathOperator{\divo} {div}
\newcommand{\lp}{\left(}
\newcommand{\rp}{\right)}
\newcommand{\lc}{\left\{}
\newcommand{\rc}{\right\}}
\newcommand{\lb}{\left[}
\newcommand{\rb}{\right]}
\newcommand{\labs}{\left|}
\newcommand{\rabs}{\right|}
\newcommand{\tr}{\mbox{{\rm tr}}}
\title{Temporal Difference Learning with Continuous Time and State
           in the Stochastic Setting}
\author{%
  Ziad Kobeissi\\
  Inria \& \'Ecole Normale Supérieure\\
  Institut Louis Bachelier, Paris, France\\
  \texttt{ziad.kobeissi@inria.fr} \\
   \And
  Francis Bach\\
  Inria \& \'Ecole Normale Supérieure\\
  PSL Research University, Paris, France\\
  \texttt{francis.bach@inria.fr} \\
}
\begin{document}

\maketitle

\begin{abstract}
    We consider the problem of 
    continuous-time policy evaluation.
    This consists in learning through observations
    the value function associated
    to an uncontrolled
    continuous-time stochastic dynamic and a reward function.
    We propose two original variants of the
    well-known TD(0) method
    using vanishing time steps.
    One is model-free and
    the other is model-based.
    For both methods,
    we prove theoretical convergence rates
    that we subsequently verify through numerical simulations.
    Alternatively, 
    those methods can be interpreted
    as novel reinforcement learning approaches
    for approximating solutions of
    linear PDEs (partial differential equations)
    or linear BSDEs (backward stochastic differential equations).
\end{abstract}

\section{Introduction}
\label{sec:intro}
Consider the value function $V$
obtained from a continuous time
and state stochastic process $(X_t)_{t\geq0}$,
\begin{equation}
    \label{eq:def_V}
        V(x)
        =
        \EE\lb
        \int_0^{\infty}
        e^{-\rho t}r(X_t)\,dt
        \Big|X_0=x
        \rb
        \quad\text{ with }\quad
        dX_t
        =
        b(X_t)dt
        +\sigma(X_t) dW_t,
\end{equation}
where $r,b,\ss:\Omega\to
\RR,\RR^d,\RR^{d\times d_W}$
are respectively the reward,
drift and diffusion functions,
$\rho>0$ is the exponential
discount rate
and $W$ is a $d_W$-dimensional
Brownian motion.
The state $(X_t)_{t\geq0}$
is a continuous stochastic process
valued in a continuous state
space $\Omega$
and satisfies a stochastic differential
equation (SDE).
For readers who are not familiar
with SDEs,
a heuristic interpretation of
\emph{``$X$ is a solution to the
    right-hand side of \eqref{eq:def_V}''}
could be:
given $X_t\in\Omega$ at time $t\geq0$,
between $t$ and $t+dt$
the state is updated by adding
a deterministic term given by
$b(X_t)\,dt$
and a stochastic term of the form
$\sqrt{dt}\,\ss(X_t)\xi$ with
$\xi\approx\cN(0,I_d)$ being independent
of all previous states.
This is the natural extension to continuous time
of Markov Chains with Gaussian transitions
(which are widely used in literature
on stochastic optimisation).
For an introduction to the theory of SDE,
we refer to \cite{kloeden1992stochastic}.

The purpose of the present work
is to analyse methods for
learning the continuous-time
value function~$V$
from discrete-time observations of the
state and the reward,
using vanishing time steps.
Therefore, this enters the field
of reinforcement learning (RL)
and more precisely the well-known
problem of \emph{policy evaluation}.
However, most of the theory of RL
is fundamentally stated with fixed discrete time.
Few extensions to continuous-time problems
exist in the literature,
see \cite{dayan1996improving,doya2000reinforcement,lutter2021value}
for instance,
but we are not aware of any
theoretical research work
dealing with the stochastic setting (i.e., when $\ss\not\equiv 0$).
Such studies are yet of the utmost importance
given that: first, a large part of the
modern applications of RL come
from time discretisations of continuous-time
problems;
second, the constant improvement
of the available computational power makes it possible
to consider finer and finer time discretisations
(leading to the need of new numerical methods
which are robust to decreasing the time steps);
third, working in the deterministic set-up is
not sufficient since
deterministic processes are
never considered in practice in RL
(even when the model is deterministic,
stochastic perturbations are always added
in practice to favor exploration,
see Appendix \ref{subsec:noises}
for more details).
The present work is the first, up to our knowledge,
to investigate this direction
from a theoretical perspective.

The results proved in the present work
emphasise the fact that the presence of
a stochastic part in the dynamics leads
to new technical difficulties
when the time steps are small.
Therefore, standard RL algorithms
have to be properly adapted.
This also leads to new
theoretical analyses.
Like usually,
most of such
an analysis
is only
possible through the restrictive
assumption of 
linear parametrisations.

\subsection{Definitions of the main objects}
We approximate $V$ defined in \eqref{eq:def_V}
by a parametrised function
$x\mapsto v(x,\th)$,
where $\th\in\Theta$ is the learnt
parameter and $\Theta$ is the space of parameters
(usually given by $\RR^{d_{\theta}}$
for some positive integer $d_{\theta}$).

\paragraph{Definitions of the temporal differences.}
Here,
an observation is a quadruple
$(\Delta t,X,X',R)$,
where $\Delta t>0$
is deterministic, 
$X$ and $X'$ are two states
arising at some times
$t$ and $t+\Delta t$,
and $R$ is the running reward
between those times.
See Section \ref{sec:assumptions}
for the precise assumptions
on the observations.
From that,
one may usually compute
$\dd$
the \emph{standard temporal difference}
defined by
\begin{equation}
    \label{eq:def_dd}
    \dd_{\Delta t}
    =
    \textstyle{\frac1{\Delta t}}
    \lp
    v(X,\th)
    -\gamma_{\Delta t}v(X',\th)
    -\Delta t\,R
    \rp,
\end{equation}
where $\gamma_{\Delta t}=e^{-\rho\Delta t}$,
and the scaling $\frac1{\Delta t}$
is chosen so that
the expectation of $\dd$ 
is of order $O(1)$.
This quantity
was initially designed
for discrete-time dynamics
and is derived from
Bellman's programming principle
\cite{bellman1966dynamic}.
In the present work,
one of the main contributions
is to introduce
and analyse
$\ddt$
the \emph{stochastic temporal difference},
\begin{equation}
    \label{eq:def_ddt}
    \ddt_{\Delta t}
    =
    \dd_{\Delta t}
    +\textstyle{\frac1{\Delta t}}Z
    \quad\text{ with }\quad
    Z
    =
    \lp X'
    -X
    -\Delta t\,b(X)\rp
    \cdot
    \nabla_xv(X,\th).
\end{equation}
The two temporal differences $\dd$ and $\ddt$
only differ by the additional term
$Z$, that here is called
\emph{the (stochastic) correction term}
or \emph{the variance-reduction term}.
Observe that computing the stochastic TD 
requires to know
the drift function $b$ at any observation,
but not $\sigma$.
This means that
the expected direction of the dynamics
has to be known,
but not the law of the noises
(which does not have to be gaussian,
see Section \ref{subsec:limitations}).
Such an assumption
is standard when
the physic of the model is known
but some uncertainties may generate
noises,
like measurement noises,
model approximations
(e.g., statistical or
discretisation errors),
unpredictable external factors
(e.g., imperfections of the
ground or wind for robotic)
or idiosyncratic noises
(e.g., in financial models).
In practice, 
stochastic TD is more meaningful for small time steps,
but standard TD can be used in model-free environments.

\paragraph{Definitions of the Algorithms.}
We denote $\dd_k$ (resp. $\ddt_k$)
as the standard (resp. stochastic) TD
at iteration $k$, i.e., computed
with $(\Delta t_k,X_k,X'_k,R_k)$
the $k^{\text{th}}$ observation.
For a sequence of learning rates $(\alpha_k)_{k\geq1}$,
the standard and stochastic TD(0) methods
are defined by
\begin{equation}
    \label{eq:TD0}
    \tag{TD0}
    \begin{aligned}
        \th_{k+1}
        &=
        \th_k
        -\aa_k\dd_{k}\nabla_{\th}v(X_k,\th_k),
        \\
        \tht_{k+1}
        &=
        \tht_k
        -\aa_k\ddt_{k}\nabla_{\th}v(X_k,\tht_k).
    \end{aligned}
\end{equation}
We also introduce the regularised
standard and stochastic 
TD(0) methods as
\begin{equation}
    \label{eq:muTD0}
    \tag{$\mu$-TD0}
    \begin{aligned}
        \th_{k+1}
        &=
        \Pi_{B_M}\big(
        \th_k
        -\aa_k(\dd_{k}\nabla_{\th}v(X_k,\th_k)
        +\mu\th_k)\big),
        \\
        \tht_{k+1}
        &=
        \Pi_{B_M}\big(
        \tht_k
        -\aa_k(\ddt_{k}\nabla_{\th}v(X_k,\tht_k)
        +\mu\tht_k)\big),
    \end{aligned}
\end{equation}
where $\mu\geq0$ and 
$\Pi_{B_M}$ is the projection
on $B_M$ the Euclidean ball of $\RR^d$
centered at $0$ with radius $M$.

\subsection{Motivations}
\label{subsec:motivations}
In the second paragraph of this text,
we already motivated the
need for theoretical studies 
of RL methods which are robust
with respect to the decrease of time steps
in the stochastic setting.
In this section, we give more details on the
precise types of problems that can be treated
using our methods.

\paragraph{Optimal control.}
Observe that there is no control
in \eqref{eq:def_V} and that controlled
dynamical systems will not be considered in the main text
of the present work (only in Appendix \ref{sec:appli_RL}).
Still, our main motivation is to solve optimal control problems
in high-frequency regimes.
Let us point out that a large part of the
modern RL numerical methods are using
TD(0) iterations (or basic extensions of it)
as a subroutine for doing \emph{policy evaluation} (PE).
Those methods are then completed by adding
a \emph{policy improvement} (PI) process
which uses the current approximation of the value function
(or the $Q$-function) to improve the control function.
For more details we refer to \cite{MR3889951}
and the class of \emph{generalised policy iterations}.
We believe that the main difficulty for extending most of the
RL numerical methods to the high-frequency regime
precisely lies in the extension of the PE process,
since PI often boils down to
approximating an \emph{argmax} operator
(see \cite{lutter2021value} for instance)
which is generally less related to the dynamics.
This and the fact that the assumptions and analysis for PI
are different from the ones of PE
explain why we only consider
PE in the present work.
We refer to Appendix~\ref{subsec:PI}
for an informal discussion on
one way to extend our
numerical methods to PI as well.

\paragraph{Solving partial or stochastic
    differential equations (PDEs or SDEs).}
See Appendix \ref{sec:app_notations}
for some standard notations from the PDE literature.
The Feynman-Kac
formula states
that $V$ from \eqref{eq:def_V} satisfies
\begin{equation}
    \label{eq:PDE_V}
    r
    =
    \cL V
    :=
    \rho V
    -\tr\lp\textstyle{\frac{\ss\ss^{\top}}{2}}D^2_{x,x}V\rp
    -b\cdot\nabla_xV.
\end{equation}
Alternatively,
the couple $(Y_t,Z_t):=(V(X_t),\sigma(X_t)^{\top}\nabla_xV(X_t))$
solves the backward SDE (BSDE)
\begin{equation}
    \label{eq:BSDE_Y}
    dY_t
    =
    -(r(X_t)-\rho Y_t)\,dt
    +Z_t\cdot dW_t,
\end{equation}
where we recall that
solutions of BSDEs are couples of stochastic processes
(unlike standard forward SDEs whose solutions
are scalar processes),
see \cite{peng1993backward} for an introduction to BSDEs
and their application to optimal control.
Consequently,
another motivation is that 
the methods analysed 
in the present work
can be viewed as original
numerical methods
for solving linear PDEs and BSDEs
of the forms given above
using only discrete observations of the dynamics.
Using similar arguments as in the previous
paragraph on optimal control,
our methods can be extended to solve
nonlinear PDEs and BSDEs
but a theoretical analysis of such systems
is out of the scope of the present work.
For instance, adding a PI process allows to solve
Hamilton-Jacobi equations and
forward-backward SDE systems (FBSDE) arising in stochastic
control problems (see Appendix~\ref{sec:appli_RL}).
More general nonlinear problems can be treated as well
using other additional processes, like Picard iterations
for instance, see \cite{chen2021learning}.

\paragraph{Some real-world examples.}
Giving an exhaustive list of the applications
for approximating
quantities from \eqref{eq:def_V},
\eqref{eq:PDE_V}, \eqref{eq:BSDE_Y}
and their nonlinear counterparts 
is out of reach since it covers
a very large number of stochastic 
continuous-time problems
(with and without control).
Moreover, we believe that this number
is only going to grow faster in the future
as the available computational power is still growing,
the discretisation steps become finer and finer in practice
and the interest on learning methods is growing 
inside communities working with continuous-time
problems (using PDEs and SDEs for instance).
Here, we give only
three examples in which
high-frequency RL are already studied in the literature.
First, the robotics in real time,
e.g., \cite{lidec2022leveraging},
where the physic of the models
is well known (so is the drift function~$b$)
but some external phenomenon can
only be implemented through additional
noise (e.g.,
wind, imperfections of the ground
or measurement noises).
Second, modern finance models
are often based on SDEs,
see \cite{el1997backward}
for a thorough introduction,
we refer to the surveys
\cite{germain2021neural,hu2023recent}
for machine learning methods
to solve BSDEs arising in finance~;
for instance, an application
is the high-frequency trading,
where the stochastic part comes from
idiosyncratic noises.
Third, 
models of
nuclear fusion in tokamaks,
e.g., \cite{degrave2022magnetic},
requiring
high-frequency controls
of the magnetic field,
while noises may come from
measurement errors or
simplifications
of the physical model
which is too complicated to be accurately
implemented with limited computational power.

\subsection{Main contributions and originalities.}
\label{subsec:contributions}
The main contributions of the present work
are:
\begin{itemize}
    \item 
        For general parametrisation
        of the learnt value function
        (i.e., linear or non-linear),
        we show that
        the standard TD introduced in \eqref{eq:def_dd}
        is not suited to high-frequency regimes
        (since its variance blows up),
        while its stochastic counterpart
        from \eqref{eq:def_ddt} is.
        We refer to Proposition~\ref{prop:varred}.
    \item
        We introduce two original variants of the TD(0) method
        for estimating continuous-time value functions
        or for numerically approximating the solutions of
        linear PDEs or BSDEs.
        One is model-free and the other is model-based.
    \item
        Recall
        that convergence rates
        only exist in the literature under
        the restrictive assumption of a
        linear parametrisation.
        Under such an assumption,
        we prove that standard TD(0)
        converges for some decreasing sequence
        of time steps.
        This is surprising
        when compared with the literature
        on BSDEs since it may be interpreted as:
        the solution of \eqref{eq:BSDE_Y} can be learnt
        via an iterative method which totally omits
        the stochastic term ``$Z_t\cdot dW_t$'',
        see Remark \ref{rk:simple_cvg}.
    \item
        Under a linear parametrisation,
        we prove that stochastic TD(0)
        converges, with a faster convergence rate
        and is more robust with respect
        to the choice of the time steps
        than standard TD(0).
        More precisely, the speed of convergence
        is similar to the one of the simpler algorithm SGD for linear regression
        (up to the state of the art for Theorem \ref{thm:averaged_cvg}).
    \item
        We show numerical simulations with similar
        convergence rates as predicted by the theory.
    \item
        We give an original interpretation
        of TD(0) as a minimising method
        under a particular structure of
        the dynamics (for a constant $\ss$ and
        a drift of the form $b=\nabla_xU$),
        see Proposition~\ref{prop:langevin}.
\end{itemize}

\section{Assumptions}
\label{sec:assumptions}
\paragraph{Assumption on the informations.}
Let us explain more the assumptions
on the observations used for computing
\eqref{eq:TD0} and \eqref{eq:muTD0}.
A sequence $(\Delta t_k,X_k,X'_k,R_k)_{k\geq1}$
of observations is such that
$\Delta t_k>0$ is convergent to zero
and $(X_k,X'_k,R_k)$ are
independent
random variables which can be:
\begin{itemize}
    \item
        {\bf
        Real-world observations:}
        $X_k$, $X_k'$ and $R_k$ are
        the real states and reward
        obtained using
        the continuous-time dynamics
        in \eqref{eq:def_V}
        on a time interval of length $\Delta t$,
        i.e.,
        \begin{equation*}
            (X_k,X'_k,R_k)=\lp \Xt_0,\Xt_{\Delta t_k},
            \frac1{\Delta t_k}\int_0^{\Delta t_k}r(\Xt_t)\,dt\rp
            \,\text{ where }\,
        d\Xt_t
        =
        b(\Xt_t)dt
        +\sigma(\Xt_t) dW_t.
        \end{equation*}
    \item
        {\bf
        Observations from a simulator:}
        $X_k$, $X'_k$ and $R_k$
        are obtained using the
        Euler-Maruyama discretisation
        scheme of the SDE in \eqref{eq:def_V}.
        The step operator is
            $\cS_{\Delta t}:(x,z)\mapsto
            x
            +\Delta t\,b(x)
            +\sqrt{\Delta t}\,
            \ss(x)
            z$,
        then
        we assume
        $X'_k=\cS_{\Delta t_k}(X_k,\xi_k)$ and
        $R_k=r(X_k)$.
\end{itemize}

\paragraph{Assumption on the law of the observations.}
We denote the law of $X_k$ by $m_k$,
recall that $m$
denotes the stationary
measure of the dynamics
\eqref{eq:def_V}.
We assume that 
$m_k$ is convergent to $m$
in the sense of distributions
and that 
there exists a
nonnegative integer $p$,
such that
\begin{enumerate}[label=\bf{A\arabic*}]
    \item
    \label{eq:cvg_mk}
for any $f\in C^{p}(\Omega;\RR)$,
there exists $C_f>0$ such that
    $\labs
    \EE\lb
    f(X_k)
    -f(X)
    \rb
    \rabs
    \leq
    C_f\Delta t_k,$
for $k\geq0$,
where $X$
is distributed
according to $m$.
\end{enumerate}
In practice,
such a condition
is obtained
using ergodic arguments while
following the Markov Chain
(which can be continuous or discrete depending
whether the observations are from the real world
or from a simulator);
see Appendix
\ref{subsec:discrete_MC}
and Theorem \ref{thm:cvg_mk}
for such a result
with $p=4$.

\paragraph{Choice of the boundary conditions.}
In discrete state space,
boundary conditions are in general
missing and unnecessary since the
Markov transition probability
is naturally designed such that
the trajectories stay inside
the state space,
or may only leave through
specific terminal states.
In continuous time and state,
and especially in stochastic settings,
things become much more complicated.
Indeed, boundary conditions of different
natures appear naturally when establishing
the models, each involving different
theoretical and numerical difficulties.
For instance,
homogeneous Neumann conditions
correspond to reflexive walls,
Dirichlet boundary conditions
correspond to exits,
periodic boundary conditions
are used for modeling some standard 
non-euclidean geometries
(like spherical or cylindrical   
coordinates)
and others like
mixed Robin conditions
or state constraints
may correspond to other
physical considerations;
we refer to \cite{evans2010partial}
for more details.
Those conditions may even differ
on different parts of the boundary
or different dimensions.
For those reasons, we have to make
a choice;
if our model does not 
cover all physical aspects
of continuous dynamics,
it is complex enough
to capture the main
ideas for extending
RL methods to continuous models.
We decide to only consider periodic
boundary conditions
and a state space given
by the $d$-dimensional torus, i.e.,
$\Omega=\TT^d=\RR^d/\ZZ^d$.
Nevertheless, we argue that
adapting our arguments to different
boundary conditions is totally feasible
but out of the scope of the present
paper (since it would lead
to unnecessary technical difficulties
that we prefer to avoid
for this work to stay as simple as possible).

\section{Preliminary results in the general case}
\label{sec:TDs}
The quantity
$\EE[\delta_{\Delta t}\big| X]$
(resp. $\EE[\ddt_{\Delta t}\big| X]$
for stochastic TD)
is usually dubbed the Bellman error.
This quantity is equal to zero
if $v(\cdot,\th)$ is exactly the
value function (of the discrete-time dynamics).
Therefore,
a standard way to
check that the learnt function
$v(\cdot,\th)$
is a good approximation
of the value function $V$,
is to check if the Bellman error
is near zero in some sense for small time steps.
However, the Bellman error
is not convenient
to compute from observations,
nor is useful for
learning
(since for any fixed value of $X$,
a lot of observations
are needed).
A common alternative which is easier to compute
is the average TD squared,
it admits the following
decomposition,
\begin{equation}
    \label{eq:mean_sq_TD}
    \underbrace{\EE_{(X,X')}\big[ | \dd_{\Delta t} |^2\big]}_{\text{averaged TD squared}}
    =
    \EE_X\big[{\underbrace{
    \EE_{X'}\big[\dd_{\Delta t}\big| 
    X\big]}_{\text{Bellman error}}}^2\big]
    + \underbrace{\EE_X\big[\Var_{X'}\big( \dd_{\Delta t}
        \big| X\big)\big]}_{\text{perturbating term}}.
\end{equation}
The following proposition 
allows a simple comparison of the
orders of magnitude in the latter equality
and its counterpart with $\ddt$,
showing that $\ddt$ fits better
than $\dd$
in the high-frequency regime.
\begin{proposition}
    \label{prop:varred}
    Assume that $r$, $b$ and $\ss$
    are bounded,
    and that $v$
    admits bounded
    continuous
    derivatives in $x$ 
    everywhere up to order
    two.
    The means and variances of $\dd$ and $\ddt$
    given $X$
    satisfy,
    \begin{align*}
        &\lim_{\Delta t\to0}
        \EE_{X'}[\dd_{\Delta t}|X]
        =
        \lim_{\Delta t\to0}
        \EE[\ddt_{\Delta t}|X]
        =
        \cL v(X,\th)
        -r(X),
        \\
        &\lim_{\Delta t\to0}
        \Delta t\Var_{X'}(\dd_{\Delta t}|X)
        =
        \labs \ss(X)\nabla_xv(X,\th)\rabs^2,
        \\
        &\lim_{\Delta t\to0}
        \Var_{X'}\lp\ddt_{\Delta t}|X\rp
        =
        2\tr\lp(\ss\ss^{\top} D^2_xv(X,\th))^2\rp.
    \end{align*}
\end{proposition}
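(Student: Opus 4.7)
My plan is to expand $v(X',\theta)$ and $X'-X$ in powers of $\sqrt{\Delta t}$ using It\^o's formula. Setting $\cA v := b\cdot\nabla_xv+\tfrac{1}{2}\tr(\ss\ss^{\top}D^2_{x,x}v)=\rho v-\cL v$, one has
\begin{equation*}
v(X',\theta)-v(X,\theta)=\int_0^{\Delta t}\cA v(\Xt_s,\theta)\,ds+\int_0^{\Delta t}\nabla_xv(\Xt_s,\theta)\cdot\ss(\Xt_s)\,dW_s,
\end{equation*}
together with $X'-X=\int_0^{\Delta t}b(\Xt_s)\,ds+\int_0^{\Delta t}\ss(\Xt_s)\,dW_s$ in the real-world case; the simulator case admits the analogous explicit Euler-Maruyama expansion. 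Boundedness of the derivatives of $v$, $b$, $\ss$, $r$ allows one to Taylor-expand every coefficient around its value at $X$ with uniformly controlled $L^p$ remainders.

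\textbf{Conditional means.} Taking conditional expectation and using $\gamma_{\Delta t}=1-\rho\Delta t+O(\Delta t^2)$ together with $\EE[R|X]=r(X)+o(1)$ yields $\EE[\dd_{\Delta t}|X]=\rho v(X,\theta)-\cA v(X,\theta)-r(X)+o(1)=\cL v(X,\theta)-r(X)+o(1)$. For the stochastic TD the additional contribution is $\tfrac{1}{\Delta t}\EE[Z|X]=\tfrac{1}{\Delta t}(\EE[X'|X]-X-\Delta t\,b(X))\cdot\nabla_xv(X,\theta)=O(\Delta t)$, since $\EE[X'|X]-X=\Delta t\,b(X)+O(\Delta t^2)$, so $\ddt_{\Delta t}$ shares the same mean limit.

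\textbf{Variances and main obstacle.} For the standard TD, the dominant stochastic fluctuation of $\dd_{\Delta t}$ comes from $-\tfrac{\gamma_{\Delta t}}{\Delta t}\int_0^{\Delta t}\nabla_xv\cdot\ss\,dW_s$; the It\^o isometry and continuity of $x\mapsto|\ss(x)^{\top}\nabla_xv(x,\theta)|^2$ give $\Delta t\cdot\Var(\dd_{\Delta t}|X)\to|\ss(X)\nabla_xv(X,\theta)|^2$, the drift integral and reward fluctuation being of strictly lower order. For the stochastic TD, the term $\tfrac{1}{\Delta t}\nabla_xv(X,\theta)\cdot\int_0^{\Delta t}\ss(\Xt_s)\,dW_s$ inside $Z/\Delta t$ is precisely designed to cancel this leading It\^o contribution, leaving the residual $-\tfrac{1}{\Delta t}\int_0^{\Delta t}(\nabla_xv(\Xt_s,\theta)-\nabla_xv(X,\theta))\cdot\ss(\Xt_s)\,dW_s$ (plus genuinely lower-order pieces). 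Using the leading It\^o expansion $\nabla_xv(\Xt_s,\theta)-\nabla_xv(X,\theta)\approx D^2_{x,x}v(X,\theta)\ss(X)W_s$ together with the It\^o isometry, one obtains a variance of order $\Delta t^2$ scaled by $1/\Delta t^2$; the resulting finite limit is then identified via the Gaussian quadratic-form identity $\Var(\xi^{\top}A\xi)=2\tr(A^2)$ with $A$ proportional to $\ss^{\top}D^2_{x,x}v\,\ss$, producing the stated constant times $\tr((\ss\ss^{\top}D^2_{x,x}v)^2)$. The main difficulty lies in this last step: one must track second-order It\^o/Taylor terms rigorously, justify uniformly the replacement of $\nabla_xv(\Xt_s,\theta)-\nabla_xv(X,\theta)$ by its leading stochastic approximation, and verify that all remaining contributions (the drift remainder in $Z$, the reward fluctuation, the $\gamma_{\Delta t}-1$ correction, and higher-order It\^o terms) are truly $o(1)$ in variance, which follows from the boundedness hypotheses and standard $L^p$ bounds on stochastic integrals.
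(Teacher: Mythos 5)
Your proposal is correct and follows essentially the same route as the paper's own proof: an It\^o expansion of $v(X',\th)$ to identify the common mean limit $\cL v - r$, the It\^o isometry applied to the leading martingale $\int_0^{\Delta t}\nabla_xv\cdot\ss\,dW_s$ for the $1/\Delta t$ blow-up of $\Var(\dd)$, the observation that $Z$ cancels exactly this term leaving $-\frac1{\Delta t}\int_0^{\Delta t}(\nabla_xv(\Xt_s)-\nabla_xv(X))\cdot\ss\,dW_s$, and a second-order It\^o/Taylor expansion of $\nabla_xv$ combined with the iterated isometry (equivalently the Gaussian quadratic-form identity, which is how the paper handles the simulator case via its expansion lemma). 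The only caveat is cosmetic: both your computation and the paper's appendix yield the limiting constant $\tfrac12\tr\lp(\ss\ss^{\top}D^2_xv)^2\rp$ rather than the factor $2$ displayed in the proposition, so the discrepancy lies in the paper's normalization, not in your argument.
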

In the one hand,
because the variance diverges as $\frac{1}{\Delta t}$,
the latter proposition directly implies
that the perturbating term
in~\eqref{eq:mean_sq_TD}
should converge to infinity
when the time step is small;
thus it would
totally overwhelm the interesting term.
On the other hand,
this does not happen when $\ddt$ replaces
$\dd$ since, in this case, the perturbating term
remains bounded.
More precisely,
at the limit
$\Delta t\to0$,
we get
\begin{align*}
    &\lim_{\Delta t\to 0}
    \EE\lb|\dd_{\Delta t}|^2\rb
    =
    \left\{
        \begin{aligned}
            &+\infty
            \text{ if }
            v(\cdot,\th) \text{ is not constant,}
            \\
            &\EE\big[\lp \rho C
            -r(X)\rp^2\big]
            \text{ if }
            v(\cdot,\th)=C. 
        \end{aligned}
    \right.
    \\
    &\lim_{\Delta t\to 0}
    \EE\big[|\ddt_{\Delta t}|^2\big]
    =
    \EE_X\lb(\cL v(X,\th)-r(X))^2\rb
    +2\EE_X\lb\tr\lp(\ss\ss^{\top} D^2_xv(X,\th))^2\rp\rb.
\end{align*}
An interesting consequence of the former
equality
is that, at the limit $\Delta t\rightarrow0$,
SGD (or any other gradient descent method)
applied to \eqref{eq:mean_sq_TD}
can only converge to a constant.
Instead for stochastic TD,
the latter equality only implies that
SGD would converge to a biased limit,
where the bias can be small in practice.
In Appendix \ref{sec:RG}
we give more details on this method,
named \emph{residual gradient}
\cite{baird1995residual},
we prove convergence results
and some alternatives to reduce the
bias are discussed in Section
\ref{subsec:alternatives_RG}.

    The latter paragraph
suggests that
TD(0) is not a
stochastic gradient method
(otherwise we would not be able
to prove convergence results
as we do in the next section).
Indeed, it is only
a stochastic \emph{semi-gradient}
method
because the term
$\dd\nabla_{\th}v$ in \eqref{eq:def_dd}
is not a gradient in general.

\section{Convergence Results in the Linear Setting}
\label{sec:lin}
From here on,
we assume \eqref{eq:cvg_mk}
and the following
assumptions:
\begin{enumerate}[label=\bf{A\arabic*}]
        \addtocounter{enumi}{1}
    \item
        \label{hypo:v_lin}
        The function $v$
        is linear with respect to $\th\in\RR^{d_{\th}}$,
        i.e.,
        $v(x,\th)=\th^{\top}\vp(x)$
        where $\vp:\Omega\rightarrow\RR^{d_{\th}}$.
    \item
        \label{hypo:rb_reg}
        The functions $r$, $b$, $\ss$
        are $C^p$,
        where $p$ comes from
        \eqref{eq:cvg_mk}
        and $b$ is at least Lipschitz continuous.
    \item        
        \label{hypo:vp_reg}
        The feature vector $\vp$
        is $C^{p+2}$,
        its coordinate functions
        are linearly independent.
\end{enumerate}

\subsection{Identification of the limits}
\label{subsec:limits}
The eventual limit of \eqref{eq:TD0}, named $\th^*$,
is given by
\begin{equation}
    \label{eq:def_th*}
    \EE_{m}\lb\vp(X)\cL \vp(X)^{\top}\rb\th^*
    =
    \EE_{m}\lb r(X)\vp(X)\rb.
\end{equation}
The linear independence assumption
on the coordinate functions of $\vp$
in Assumption \ref{hypo:vp_reg}
implies that 
$\EE\lb\vp(X)\vp(X)^{\top}\rb\in\RR^{d_{\th}\times d_{\th}}$
is positive definite.
Therefore, Lemma \ref{lem:HSA} in the Appendix
implies that
the symmetric part of 
$\EE\lb\vp(X)\cL \vp(X)^{\top}\rb$
is positive definite
as well.
This implies that
$\th^*$ is well defined
and that there exists $M_0>0$
such that $|\th^*|\leq M_0$.
Note that the latter quantity is a 
convenient choice
for $M$ in \eqref{eq:muTD0},
especially
when $\mu$ is small.
When $\mu$ is not small,
we prefer the simpler quantity
$M_{\mu}=
\mu^{-1}\norminf{r}$.
For \eqref{eq:muTD0},
we always assume
that
\begin{equation}
    \label{eq:def_M}
    M
    \geq
    \min(M_0,M_{\mu}).
\end{equation}
In this case,
$\th^*_{\mu}$,
the eventual limit
of \eqref{eq:muTD0},
satisfies
$|\th^*_{\mu}|\leq M$
so it is independent of $M$
because it remains unchanged by 
the projection step $\Pi_{B_M}$.
It is then given by
\begin{equation}
    \label{eq:def_th*mu}
    \lp\mu I_d+\EE_{m}[\vp(X)\cL \vp(X)]\rp\th^*_{\mu}
    =
    \EE_{m}\lb r(X)\vp(X)\rb.
\end{equation}
The bias induced by the regularisation
    is the distance between $\th^*$
    and $\th^*_{\mu}$,
    it is bounded as follows.
\begin{proposition}
    \label{prop:dist_th*mu}
    There exists $C>0$ such that, for any $\mu>0$, we get
        $|\th^*-\th^*_{\mu}|\leq~C\mu$.
\end{proposition}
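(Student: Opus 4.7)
The plan is to view \eqref{eq:def_th*mu} as a linear perturbation of \eqref{eq:def_th*} and to leverage the coercivity of $A:=\EE_m[\vp(X)\cL\vp(X)^\top]$ noted just before the proposition. Setting $b:=\EE_m[r(X)\vp(X)]$ and subtracting \eqref{eq:def_th*mu} from \eqref{eq:def_th*} gives $A\th^*-(\mu I_d+A)\th^*_{\mu}=0$, which I rearrange to $A(\th^*-\th^*_{\mu})=\mu\,\th^*_{\mu}$, and hence
\[
\th^*-\th^*_{\mu}=\mu\,A^{-1}\th^*_{\mu}.
\]
From here the proof reduces to bounding the operator norm of $A^{-1}$ and the Euclidean norm of $\th^*_{\mu}$ in a way that is uniform in $\mu$.

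For the operator-norm step, I would exploit the fact that the symmetric part of $A$ is positive definite, a consequence of Lemma \ref{lem:HSA} already invoked in Section \ref{subsec:limits}. Letting $\lambda_0>0$ denote the smallest eigenvalue of $(A+A^\top)/2$, for every $v\in\RR^{d_{\th}}$ one has
\[
|v|\,|Av|\;\geq\;v^\top A v \;=\; v^\top\Big(\tfrac{A+A^\top}{2}\Big)v\;\geq\;\lambda_0\,|v|^2,
\]
so $|Av|\geq \lambda_0|v|$ and $\|A^{-1}\|_{\mathrm{op}}\leq 1/\lambda_0$. The identical computation applied to $\mu I_d+A$, whose symmetric part has smallest eigenvalue $\mu+\lambda_0\geq\lambda_0$, shows that $\|(\mu I_d+A)^{-1}\|_{\mathrm{op}}\leq 1/\lambda_0$ for every $\mu\geq 0$; combining this with \eqref{eq:def_th*mu} gives $|\th^*_{\mu}|\leq |b|/\lambda_0$, uniformly in $\mu$.

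Plugging both bounds back into the identity above yields $|\th^*-\th^*_{\mu}|\leq (|b|/\lambda_0^2)\,\mu$, proving the proposition with $C:=|b|/\lambda_0^2$, a constant depending only on $r$, $\vp$, and the invariant measure $m$. The main step---really the only nontrivial one---is the uniform-in-$\mu$ bound on $|\th^*_{\mu}|$, which rests on the observation that adding $\mu I_d$ to $A$ can only increase the smallest eigenvalue of the symmetric part. The rest is linear algebra, and the argument requires no further probabilistic input beyond what is already used to set up \eqref{eq:def_th*} and \eqref{eq:def_th*mu}.
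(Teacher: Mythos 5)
Your argument is correct and is essentially the intended one: the paper omits an explicit proof, but the identity $A(\th^*-\th^*_{\mu})=\mu\,\th^*_{\mu}$ together with the coercivity $v^{\top}Av\geq\lambda_0|v|^2$ supplied by Lemma~\ref{lem:HSA} is exactly the mechanism the authors rely on when asserting that $\th^*$ and $\th^*_{\mu}$ are well defined. The uniform-in-$\mu$ bound $|\th^*_{\mu}|\leq|b|/\lambda_0$ is the right observation to make the constant independent of $\mu$, and the rest is routine.
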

If $V$ belongs to the set of parametrised functions,
i.e., there exists $\th_V$ such that $V=v(\cdot,\th_V)$,
then \eqref{eq:def_th*} implies that $\th^*=\th_V$,
so \eqref{eq:TD0} would converge to $V$ eventually.
Otherwise, $v(\cdot,\th^*)$ is known to be a good estimator
of $V$ in practice but cannot be expressed as
the minimiser of a meaningful functional.
Yet, we succeed to do so below
under particular structure assumptions on the dynamics.
\begin{proposition}
    \label{prop:langevin}
    Assume that $\ss$ is constant
    and $b$ is of the form $b=\nabla_xU$
    for some continuously differentiable function $U:\Omega\to\RR$,
    then $\th^*$ is the solution of the following
    minimisation problem,
    \begin{equation*}
        \label{eq:def_th*_lang}
        \th^*
        \in
        \argmin_{\th}
        \EE_{X\sim m}\lb
        \ell(v(X,\th),V(X))
        \rb
        \;\text{ where }\;
        \ell(v,w)
        =
        \rho(v-w)^2
        +\textstyle{\frac{1}{2}}
        |\ss^{\top}\nabla_x(v-w)|^2.
    \end{equation*}
\end{proposition}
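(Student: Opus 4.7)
The plan is to exploit the reversibility of the diffusion under the stated structural assumptions: when $\sigma$ is constant and $b = \nabla_x U$, the invariant measure $m$ admits a Gibbs-type density, and the operator $\cL$ is (up to the $\rho$ term) self-adjoint on $L^2(m)$. The loss $\ell$ is then precisely the quadratic form associated with this self-adjoint operator, so the PDE $\cL V = r$ from \eqref{eq:PDE_V} appears as its Euler--Lagrange equation. Restricting to the linear parametrization $v(\cdot,\theta) = \theta^{\top} \varphi$ turns the infinite-dimensional minimization into a finite-dimensional convex quadratic problem whose unique critical point is given by \eqref{eq:def_th*}.

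The key technical step is the Dirichlet-form identity
\begin{equation*}
\EE_m[\cL w \cdot \psi] \;=\; \rho\, \EE_m[w \psi] \;+\; \tfrac{1}{2}\, \EE_m\big[(\sigma^{\top} \nabla_x w)\cdot(\sigma^{\top} \nabla_x \psi)\big]
\end{equation*}
for smooth $w,\psi: \Omega \to \RR$. I would prove it by integration by parts on the torus $\TT^d$: integrating the second-order part of $\cL$ against $\psi\, m$ produces the symmetric bilinear form $\tfrac{1}{2}\EE_m[(\sigma^{\top} \nabla w)\cdot(\sigma^{\top} \nabla \psi)]$ together with a cross term $\tfrac{1}{2}\int \psi \nabla w^{\top} (\sigma\sigma^{\top}) \nabla m \,dx$. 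This cross term is cancelled by the drift contribution $-\int b\cdot \nabla w \cdot \psi \, m\, dx$ once one invokes the reversibility relation $\nabla \log m = 2(\sigma\sigma^{\top})^{-1} b$, which on the torus holds under the implicit compatibility between $\sigma\sigma^{\top}$ and $D^2 U$ forced by $b = \nabla U$ (e.g., for a scalar diffusion, $m \propto e^{2U/|\sigma|^2}$).

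With this identity available, the rest of the argument is routine. The functional $J(\theta) := \EE_m[\ell(v(\cdot,\theta), V)]$ is a convex quadratic in $\theta$ by Assumption \ref{hypo:v_lin}, so any critical point is the minimizer. Differentiating in $\theta_i$ yields
\begin{equation*}
\tfrac{1}{2} \partial_{\theta_i} J(\theta) \;=\; \EE_m[\rho (v(\cdot,\theta) - V)\varphi_i] \;+\; \tfrac{1}{2}\EE_m\big[(\sigma^{\top} \nabla \varphi_i)\cdot (\sigma^{\top} \nabla (v(\cdot,\theta) - V))\big],
\end{equation*}
which by the Dirichlet identity equals $\EE_m[\cL(v(\cdot,\theta) - V) \cdot \varphi_i]$. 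Using linearity of $\cL$, $v(\cdot,\theta) = \theta^{\top} \varphi$, and the Feynman--Kac relation $\cL V = r$ from \eqref{eq:PDE_V}, setting this gradient to zero recovers exactly $\EE_m[\varphi(X) \cL\varphi(X)^{\top}]\theta = \EE_m[r(X)\varphi(X)]$, i.e., equation \eqref{eq:def_th*}.

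The hard part is the Dirichlet-form identity: identifying the explicit Gibbs density of $m$, handling the boundary terms on $\TT^d$ (automatic by periodicity), and verifying the cancellation of cross-terms by reversibility. Once this is settled, convexity of $J$, the gradient computation, and the matching with \eqref{eq:def_th*} are all short calculations.
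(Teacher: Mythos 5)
Your proof is correct and is essentially the argument the paper intends: the paper gives no standalone proof of this proposition, but your Dirichlet-form identity is exactly the integration-by-parts computation carried out in the first step of the proof of Lemma \ref{lem:HSA} (there only on the diagonal $\psi=w$, using stationarity of $m$; the reversibility coming from $b=\nabla_xU$ is what polarises it to full self-adjointness of $\cL$ on $L^2(m)$, makes $H$ symmetric, and identifies the linear term of the quadratic via $\cL V=r$). Your caveat about a constant \emph{matrix} $\ss$ is well taken: the Gibbs form $m\propto e^{2U/\ss^2}$, and hence the statement itself, really require scalar $\ss$ (or the extra compatibility you mention), which is the convention the paper adopts in its proof section (``$\ss$ is always a constant positive number'').
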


\subsection{The regularised TD(0)}
\label{subsec:simple_cvg}
In this section, we only consider
the regularised algorithms \eqref{eq:muTD0}.
Under a common decreasing assumption
on the learning rate (that it is proportional
to $1/(\mu(k+1))$)
and convenient choices on $\Delta t_k$,
Theorem \ref{thm:simple_cvg} below
states a usual convergence rate in $1/k$
for the stochastic TD(0) method.
This rate can easily be compared to the literature
e.g., \cite{sutton1988learning}.
For the standard TD(0),
we obtain a slower convergence rate
because of $\dd$ 
being not adapted
to small time steps
(as explained in Section~\ref{sec:TDs}).
\begin{theorem}
    \label{thm:simple_cvg}
    Take
    $(\th_k)_{k\geq0}$
    and
    $(\tht_k)_{k\geq0}$
    defined by \eqref{eq:muTD0}
    with
    $\mu>0$,
    $M$ satisfying~\eqref{eq:def_M},
    and $\aa_k=\frac{2}{\mu(k+1)}$.
    There exists $C>0$
    such that, for $k\geq1$,
    
    \vspace*{-0.5cm}
    \raisebox{2ex}{
    \begin{minipage}[b]{0.49\textwidth}
    \begin{equation*}
        \EE\lb
        \labs\th_k-\th^*_{\mu}\rabs^2\rb
        \leq
        \frac{C}{\mu^2k^{\frac23}}
        \;\text{ for }\;
        \Delta t_k= \textstyle{\frac{1}{(k+1)^{\frac13}}},
    \end{equation*}
    \end{minipage}}
    \rule{0.5mm}{0.8cm}
    \raisebox{2ex}{
    \begin{minipage}[b]{0.49\textwidth}
    \begin{equation*}
        \EE\lb
        \labs\tht_k-\th^*_{\mu}\rabs^2\rb
        \leq
        \frac{C}{\mu^2k}
        \;\text{ for }\;
        \Delta t_k\leq \textstyle{\frac{1}{\sqrt{k+1}}}.
    \end{equation*}
    \end{minipage}}
\end{theorem}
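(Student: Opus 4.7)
The plan is to track the squared error $e_k = \th_k - \th^*_{\mu}$ (resp.\ $\tht_k - \th^*_{\mu}$) along the iteration and derive a one-step recursion of the form
\begin{equation*}
    \EE|e_{k+1}|^2 \leq (1-\mu\aa_k)\,\EE|e_k|^2 + B_k + V_k,
\end{equation*}
with explicit bias term $B_k$ and noise term $V_k$, to which a standard comparison lemma for recursions of type $u_{k+1}\leq (1 - c/(k+1))u_k + w_k$ with $c>1$ can be applied. The starting point is that $\th^*_{\mu}\in B_M$ by \eqref{eq:def_M}, so $\Pi_{B_M}(\th^*_{\mu})=\th^*_{\mu}$ and the projection is non-expansive; writing $g_k(\th) = \dd_k \vp(X_k) + \mu\th$ (resp.\ $\ddt_k \vp(X_k) + \mu\th$) we obtain the unprojected estimate
\begin{equation*}
    |e_{k+1}|^2 \leq |e_k|^2 - 2\aa_k \langle g_k(\th_k), e_k\rangle + \aa_k^2\,|g_k(\th_k)|^2.
\end{equation*}

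Next, I would decompose $g_k(\th) = H_\infty(\th-\th^*_{\mu}) + r_k(\th) + n_k(\th)$, where $H_\infty = \mu I + \EE_m[\vp(X)\cL\vp(X)^\top]$ is the \emph{mean} operator, $r_k(\th)$ is a deterministic bias coming from $m_k \neq m$ and from the finite-$\Delta t_k$ approximation, and $n_k(\th)$ is centered noise. By \ref{hypo:vp_reg} and Lemma~\ref{lem:HSA} the symmetric part of $H_\infty$ has smallest eigenvalue $\geq \mu$, giving the strong-monotonicity bound $\langle H_\infty e_k, e_k\rangle \geq \mu |e_k|^2$. For the bias, I would refine the asymptotic statements of Proposition~\ref{prop:varred} into quantitative expansions $\EE[\dd_k \mid X_k] = \cL v(X_k,\th) - r(X_k) + O(\Delta t_k)$ (via an Itô/Taylor expansion of $v(X'_k,\th)$ around $X_k$ and of $r$ along the trajectory), and combine with Assumption \ref{eq:cvg_mk} applied to the functions $(\vp, \cL\vp, r\vp)$ (all of class $C^p$ by \ref{hypo:rb_reg}-\ref{hypo:vp_reg}), yielding $|r_k(\th)| \leq C\Delta t_k$ uniformly in $\th\in B_M$. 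For the noise, Proposition~\ref{prop:varred} gives $\EE|n_k|^2 = O(1/\Delta t_k)$ for standard TD and $\EE|n_k|^2 = O(1)$ for stochastic TD; together with the bounded mean, this yields $\EE|g_k(\th_k)|^2 \leq C/\Delta t_k$ (resp.\ $C$).

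Plugging into the expansion and using $2\aa_k\langle r_k(\th_k), e_k\rangle \leq \mu\aa_k|e_k|^2 + \aa_k C^2\Delta t_k^2/\mu$ produces
\begin{equation*}
    \EE|e_{k+1}|^2 \leq (1-\mu\aa_k)\,\EE|e_k|^2 + \frac{C\aa_k\Delta t_k^2}{\mu} + C\aa_k^2 V_k,
\end{equation*}
with $V_k = 1/\Delta t_k$ or $V_k = 1$. For $\aa_k = 2/(\mu(k+1))$ the factor $(1-\mu\aa_k)$ equals $(k-1)/(k+1)$, so iterating gives $\EE|e_{k+1}|^2 \lesssim k^{-2}\sum_{j\leq k} j\, w_j$ with $w_j = \aa_j\Delta t_j^2/\mu + \aa_j^2 V_j$. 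For stochastic TD with $\Delta t_k\leq (k+1)^{-1/2}$, both contributions give $w_j = O(1/(\mu^2 j^2))$, hence the rate $1/(\mu^2 k)$. For standard TD with $\Delta t_k = (k+1)^{-\beta}$, balancing the bias contribution $k^{-1-2\beta}/\mu^2$ with the variance contribution $k^{\beta-2}/\mu^2$ gives $\beta = 1/3$ and the rate $k^{-2/3}/\mu^2$.

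The main technical obstacle is the quantitative bias bound: Proposition~\ref{prop:varred} is only a limit statement, so one must extract an explicit $O(\Delta t_k)$ rate by carrying Itô's formula one order further (using $v\in C^{p+2}$ with $p\geq 2$ in the feature regularity \ref{hypo:vp_reg}) and combining it with the quantitative ergodic estimate \eqref{eq:cvg_mk}. The subsequent manipulations - identifying the mean operator, exploiting Lemma~\ref{lem:HSA} for strong monotonicity, and solving the recursion - are standard but must be tracked carefully to keep the correct $\mu$-dependence in the final constant $C/\mu^2$.
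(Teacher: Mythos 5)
Your proposal follows essentially the same route as the paper: the paper packages your one-step recursion into an abstract stochastic-approximation result (Theorem \ref{thm:TD0_stoch}) with a non-symmetric drift matrix whose symmetric part is $\geq 2\mu I_d$ (Lemma \ref{lem:HSA}), bias $\ee_k=\Delta t_k$ obtained from the quantitative It\^o/Taylor expansion (Lemma \ref{lem:expand_eps}) combined with \eqref{eq:cvg_mk}, second-moment bound $c_k=C$ for stochastic TD versus $c_k=C/\Delta t_k$ for standard TD, a Young inequality on the bias, and the estimate $\exp(-\mu\sum_{j=i+1}^{k}\aa_j)\leq (i/k)^2$. The only slip is the weight $j\,w_j$ in your iterated bound, which should be $j^2 w_j$ since the product $\prod_{j=i+1}^{k}(1-\tfrac{2}{j+1})$ decays like $(i/k)^2$; your final rate computations are consistent with the correct weight, so the conclusions are unaffected.
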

Recall that $\th_{\mu}^*$
is a biased limit
whose bias is bounded
from Proposition \ref{prop:dist_th*mu}.
Combined with Theorem~\ref{thm:simple_cvg},
we obtain the following
approximation results
of $\th^*$ using \eqref{eq:muTD0}.
\begin{corollary}
    \label{cor:Non_Asymp}
    Under the same assumption as in Theorem \ref{thm:simple_cvg},
    after $K\geq2$ iterations,
    we have 
    
    \raisebox{1.5ex}{
    \begin{minipage}[b]{0.49\textwidth}
    \begin{equation*}
        \labs\th_K-\th^*\rabs^2
        \leq
        \textstyle{\frac{C}{K^{\frac13}}}
        \;\text{ for }\;
        \mu=K^{\frac16},
    \end{equation*}
    \end{minipage}}
    \rule{0.5mm}{0.6cm}
    \raisebox{1.5ex}{
    \begin{minipage}[b]{0.49\textwidth}
    \begin{equation*}
        |\tht_K-\th^*|^2
        \leq
        \textstyle{\frac{C}{\sqrt{K}}},
        \;\text{ for }\;
        \mu=K^{\frac14}.
    \end{equation*}
    \end{minipage}}
\end{corollary}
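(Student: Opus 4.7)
The plan is to view the corollary as a routine bias--variance trade-off: Theorem~\ref{thm:simple_cvg} controls the ``variance'' $\EE[|\th_K-\th^*_\mu|^2]$ in terms of $(\mu,K)$, while Proposition~\ref{prop:dist_th*mu} controls the ``bias'' $|\th^*_\mu-\th^*|$ induced by the $\mu$-regularisation. Choosing $\mu$ as a function of $K$ to balance the two will then yield the advertised rate.

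First I would write
\[
    |\th_K-\th^*|^2
    \;\le\;
    2\,|\th_K-\th^*_{\mu}|^2
    \;+\;
    2\,|\th^*_{\mu}-\th^*|^2,
\]
take expectations, and plug in Proposition~\ref{prop:dist_th*mu} (which gives $|\th^*_\mu-\th^*|^2\le C^2\mu^2$) together with the first bound of Theorem~\ref{thm:simple_cvg} (which gives $\EE[|\th_K-\th^*_\mu|^2]\le C/(\mu^2K^{2/3})$ for $\Delta t_k=(k+1)^{-1/3}$). This yields
\[
    \EE\big[|\th_K-\th^*|^2\big]
    \;\le\;
    \frac{2C}{\mu^2\,K^{2/3}}
    \;+\;
    2C^2\,\mu^2.
\]
Balancing the two terms amounts to choosing $\mu^4\sim K^{-2/3}$, i.e.\ $\mu\sim K^{-1/6}$, for which both summands become $O(K^{-1/3})$; this produces the first claim. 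For the stochastic variant I would repeat the same decomposition, now invoking the second bound of Theorem~\ref{thm:simple_cvg} (with $\Delta t_k\le(k+1)^{-1/2}$), giving $C/(\mu^2K)+C'\mu^2$, and balance with $\mu^4\sim K^{-1}$, i.e.\ $\mu\sim K^{-1/4}$, to obtain the $K^{-1/2}$ rate. (The choices of $\mu$ in the statement should be read as the ones arising from this balance.)

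One point to double-check before invoking Theorem~\ref{thm:simple_cvg} is the condition~\eqref{eq:def_M} on the projection radius. Since $\mu$ tends to $0$ with $K$, it is safer to use $M\ge M_0$ (the bound on $|\th^*|$, which does not depend on $\mu$) rather than $M_\mu=\mu^{-1}\|r\|_\infty$ whose usefulness degrades as $\mu\to0$; this way the radius can be chosen once and for all and is independent of $K$, so the constant $C$ produced by Theorem~\ref{thm:simple_cvg} stays uniform in $K$.

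Beyond that verification, there is essentially no obstacle: the corollary is a straightforward combination of the earlier results, and all the work is hidden in the two quoted ingredients. The only care needed is to track that the constant $C$ absorbs the constants coming from Proposition~\ref{prop:dist_th*mu}, from the $(a+b)^2\le 2a^2+2b^2$ inequality, and from Theorem~\ref{thm:simple_cvg}, none of which depend on $K$ or $\mu$ under the chosen $M$.
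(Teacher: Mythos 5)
Your proposal is correct and is exactly the argument the paper intends (the paper omits an explicit proof, stating only that the corollary follows by combining Theorem~\ref{thm:simple_cvg} with Proposition~\ref{prop:dist_th*mu}, which is precisely your bias--variance decomposition $\EE[|\th_K-\th^*|^2]\le 2\EE[|\th_K-\th^*_\mu|^2]+2|\th^*_\mu-\th^*|^2$ followed by balancing in $\mu$). You are also right that the balance forces $\mu\sim K^{-1/6}$ and $\mu\sim K^{-1/4}$ — the positive exponents in the printed statement are sign typos — and your remark that one should take $M\ge M_0$ in \eqref{eq:def_M} so the projection radius and the constants stay uniform in $K$ is the correct way to handle the only delicate point.
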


\begin{figure}
	\centering
    \noindent
	\includegraphics[width=0.49\linewidth]{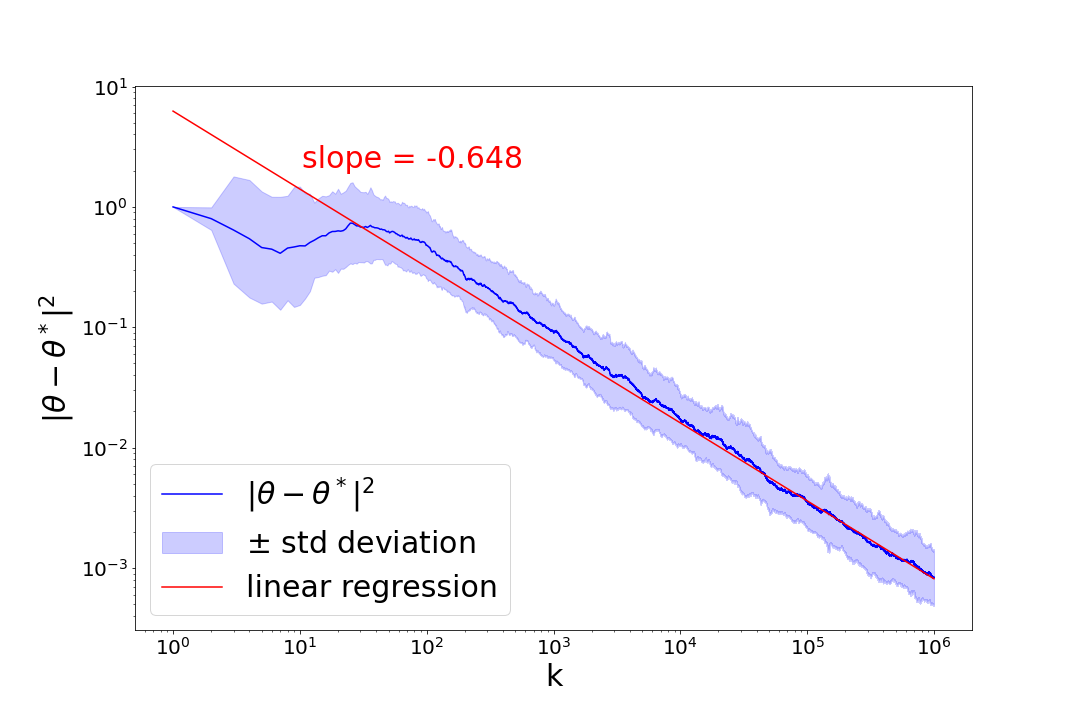}
	\includegraphics[width=0.49\linewidth]{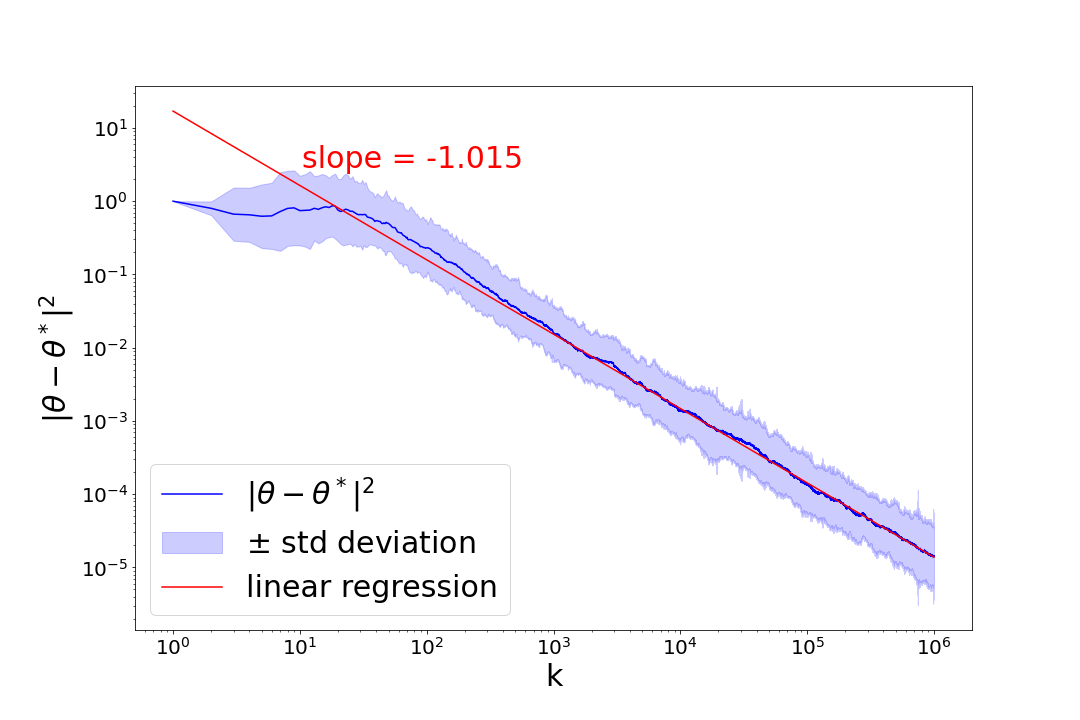}
    \caption{\label{fig:cvg_TD0}
    Empirical errors (averaged over $100$ runs)
    for standard TD(0) (left)
    and stochastic TD(0) (right) under the
    assumptions of Theorem \ref{thm:simple_cvg}.
    The differences with the rates predicted by the theory
    are lower than $3\%$.
    See Appendix \ref{subsec:num_model}
    for details on the model used for the simulations.}
\end{figure}

\begin{remark}
    \label{rk:simple_cvg}
    The fact that standard TD(0) may converge
    is surprising at first sight
    since it may be interpreted as:
    the solution of the BSDE \eqref{eq:BSDE_Y}
    can be learnt without considering
    the stochastic term $Z_t\cdot dW_t$.
    However, there are restrictions
    on the sequence of time steps
    to obtain such a convergence
    results: 
    $\aa_k/\Delta t_k$
    has to tend to zero.
    Such a restriction is totally unnecessary
    for stochastic TD(0) for which
    having both $\Delta t_k$ and
    $\aa_k$ tend to zero is sufficient
    (and $\sum\aa_k$ being divergent).
    We refer to the informal discussion 
    on Appendix \ref{subsec:informal_TD0}
    for some insights on the convergence of standard TD(0)
    and the assumption
    ``$\aa_k/\Delta t_k\to0$''.
    In particular, this assumption is sharp
    since $v(\cdot,\th_k)$ cannot converge to
    a good estimator of $V$ if
    ``$\aa_k/\Delta t_k$'' does
    not tend to zero, 
    see Appendix \ref{subsec:informal_TD0} as well.
\end{remark}

\subsection{Averaging Stochastic TD(0)}
\label{subsec:averaged_cvg}
In this section,
in the same spirit as the results in
\cite{bach2013non} for SGD,
we get the convergence of the stochastic TD(0)
algorithm with an averaging method,
with a constant learning step,
without a strong convexity assumption,
without a regularisation assumption
and without a projection map.
This result cannot be extended
to standard TD(0)
because $\aa/\Delta t_k$ cannot
converge to zero with $\aa>0$ independent of $k$
(see Remark \ref{rk:simple_cvg} for more details).
The averaging method we are using here
to accelerate the convergence
is the Polyak-Juditsky method
\cite{Polyak1992AccelerationOS}
using $\tho_k$ defined by,
for $k\geq1$,
\begin{equation}
    \label{eq:averaging}
    \tho_k
    =
    \frac1k\sum_{i=0}^{k-1}\tht_i, 
\end{equation}
We obtain a convergence rate which is competitive
with the state of the art for the simpler problem
of linear regression using SGD methods
(we cannot expect to beat SGD convergence rates
since SGD can be viewed as a particularly simple
subcase of TD(0),
see Appendix \ref{subsec:SGD} for more details).
\begin{theorem}
    \label{thm:averaged_cvg}
    If $\sum_{i=0}^{\infty}\Delta t_i^2$ is finite,
    there exist $C,R>0$ such that,
    for
    $\aa<R^{-2}$, $k\geq1$,
    \begin{align*}
        \ell(v(\cdot,\bar{\th}_k),v(\cdot,\th^*))
        &\leq
        \frac{C}{\aa k}
        +\frac{C(d+\tr(HH^{-\top}))}{k}.
    \end{align*}
    where $H=\EE\lb\vp(X)\cL\vp(X)^{\top}\rb$
    and $\ell$ is defined in Proposition \ref{prop:langevin}.
    If $\sum_{i=0}^{k-1}\Delta t_i^2\leq a\ln(1+k)$
    for some $a>0$
    and any $k\geq0$,
    then for $\ee>0$,
    there exists $C,R>0$ such that
    for $\aa<R^{-2}$, $k\geq1$,
    \begin{align*}
        \ell(v(\cdot,\bar{\th}_k),v(\cdot,\th^*))
        &\leq
        \frac{C}{\aa k}
        +\frac{C(d+\tr(HH^{-\top}))}{k^{1-\ee}}.
    \end{align*}
\end{theorem}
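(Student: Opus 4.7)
The plan is to adapt the averaged stochastic approximation analysis for linear least squares from \cite{bach2013non} to the present TD(0) setting, with additional care for the non-symmetric semi-gradient operator and for the bias of order $O(\Delta t_k)$ introduced by the finite time steps.

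Under Assumption \ref{hypo:v_lin}, the stochastic TD(0) update takes the linear form $\tht_{k+1}=\tht_k-\aa(A_k\tht_k-c_k)$, where $A_k$ and $c_k$ are measurable with respect to the observation $(\Delta t_k,X_k,X_k',R_k)$. Applying Proposition \ref{prop:varred} componentwise, together with Assumption \ref{eq:cvg_mk}, one obtains $\EE[A_k]=H+O(\Delta t_k)$ and $\EE[c_k]=\EE_m[r\vp]+O(\Delta t_k)$, with $H\th^*=\EE_m[r\vp]$ by \eqref{eq:def_th*}. Crucially, the variance-reduction effect of the correction term $Z$ in $\ddt$ yields a finite $R$ (the one appearing in the statement) such that $\|A_k\|\le R$ almost surely and such that the conditional covariance of $A_k\th^*-c_k$ stays uniformly bounded in $k$ (in contrast with standard TD, where a blow-up of order $1/\Delta t_k$ occurs).

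Setting $u_k=\tht_k-\th^*$, I would rewrite the iteration as $u_{k+1}=(I-\aa H)u_k+\aa\xi_k+\aa\beta_k$, with $\xi_k$ conditionally centered of uniformly bounded covariance, and $\beta_k$ a residual bias of size $O(\Delta t_k)(1+|u_k|)$. By Lemma \ref{lem:HSA} the symmetric part of $H$ is positive definite, which, in the spirit of \cite[Lemma~1]{bach2013non}, allows one to show $\sup_k \EE|u_k|^2<\infty$ whenever $\aa R^2<1$, explaining the threshold in the statement. Summing the recursion, dividing by $k$ and multiplying by $H^{-1}$, one obtains the Polyak--Ruppert identity
\begin{equation*}
H\bar u_k=\tfrac{1}{\aa k}(u_0-u_k)+\tfrac{1}{k}\textstyle{\sum_{i=0}^{k-1}}\xi_i+\tfrac{1}{k}\textstyle{\sum_{i=0}^{k-1}}\beta_i,\quad \bar u_k:=\tho_k-\th^*.
\end{equation*}
Taking expectations of squared norms and using conditional orthogonality of the $\xi_i$'s gives, as in \cite{bach2013non}, the two terms $\tfrac{C}{\aa k}$ and $\tfrac{C(d+\tr(HH^{-\top}))}{k}$. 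The bias contribution is controlled by Cauchy--Schwarz: $\tfrac{1}{k^2}|\sum_i\beta_i|^2\leq \tfrac{1}{k}\sum_i|\beta_i|^2\leq \tfrac{C}{k}\sum_i\Delta t_i^2$, which is $O(1/k)$ under the first hypothesis and $O(\log(1+k)/k)\leq C/k^{1-\ee}$ under the second. A final comparison of $\EE_m[\ell(v(\cdot,\tho_k),v(\cdot,\th^*))]$ (a quadratic form in $\bar u_k$ with matrix $M=\rho\EE[\vp\vp^{\top}]+\tfrac12\EE[\nabla_x\vp\,\ss\ss^{\top}\nabla_x\vp^{\top}]$) with $\|\bar u_k\|^2$ through the boundedness of $\vp$ and $\nabla_x\vp$ closes the argument.

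The main obstacle is the a priori second-moment bound $\sup_k\EE|u_k|^2<\infty$. Unlike plain SGD, the TD(0) semi-gradient noise couples the non-symmetric operator $A_k$ with the current iterate $\tht_k$, and there is no projection step to keep $\tht_k$ bounded. Establishing the bound therefore requires a Lyapunov argument that genuinely exploits the positive-definiteness of the symmetric part of $H$ (Lemma \ref{lem:HSA}) together with the almost-sure bound on $\|A_k\|$ furnished by the variance-reduction term $Z$ in $\ddt$. The $\Delta t_k$-induced bias accumulated in $\sum_i\beta_i$ is precisely what dictates the summability conditions on $(\Delta t_i^2)_i$ in the two hypotheses of the theorem.
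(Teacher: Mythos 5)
Your overall plan (Polyak--Ruppert averaging in the style of \cite{bach2013non}, with the $O(\Delta t_k)$ bias tracked separately) is the right family of argument, but the step you yourself flag as ``the main obstacle'' is a genuine gap, and the ingredients you propose to close it do not work. First, the claim that $\|A_k\|\le R$ almost surely is false: by Lemma \ref{lem:expand_eps} the effective operator contains the term $\frac{\ss^2}{2}\xi_k^{\top}D^2_xv\,\xi_k$, which is unbounded for Gaussian $\xi_k$; only second-moment bounds of the form $\EE[A_k^{\top}A_k]\le R^2(\cdot)$ are available (cf.\ \eqref{eq:boundHX2} and Lemma \ref{lem:cov_vp}). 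Second, and more importantly, the uniform bound $\sup_k\EE|u_k|^2<\infty$ cannot be obtained under the stated condition $\aa<R^{-2}$ without invoking the smallest eigenvalue of $S$: already for plain least-squares SGD with constant step $\aa\le R^{-2}$ one only gets $\EE|u_{k+1}|^2\le\EE|u_k|^2+C\aa^2$, so $\EE|u_k|^2$ may grow linearly in $k$ when $H$ is ill-conditioned. A Lyapunov argument that absorbs the $\aa^2\EE[A_k^{\top}A_k]$ term and the $O(\aa\Delta t_k)|u_k|^2$ bias term must use $S\ge\lambda_{\min}(S)I_d>0$ and a step size $\aa\lesssim\lambda_{\min}(S)/R^2$, which degrades both the admissible range of $\aa$ and the constant $C$ (the proof in Appendix \ref{proof:averaged_cvg} explicitly keeps $C$ independent of $\lambda_{\min}(S)$, in keeping with the ``no strong convexity'' claim of Section \ref{subsec:averaged_cvg}). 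Since your noise $\xi_k$ contains $(\EE[A_k]-A_k)u_k$ and your bias $\beta_k$ is of size $O(\Delta t_k)(1+|u_k|)$, every subsequent estimate (bounded conditional covariance, the Cauchy--Schwarz control of $\frac1{k^2}|\sum_i\beta_i|^2$) inherits this unproved moment bound.

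The paper avoids the issue entirely by a different decomposition: it expands $\eta_k=\th_k-\th^*$ as $\eta_k=\sum_{r=0}^{k-1}\eta_k^r$, where each $\eta_k^r$ follows the \emph{deterministic} recursion $\eta_{k+1}^r=(I_d-\aa H)\eta_k^r+\chi_k^r+\Delta t_k\psi_k^r$ and the noise terms $\chi_k^r,\psi_k^r$ are driven by $\eta_k^{r-1}$. One then proves covariance bounds $\EE[\eta_k^r\otimes\eta_k^r]\le 3C_k\aa^rR^{2r}I_d$ decaying geometrically in $r$ precisely when $\aa R^2<1$ --- this geometric decay in $r$ replaces the uniform-in-$k$ moment bound you need. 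The bias is absorbed through a weighted Cauchy--Schwarz with weight $b_k=\ee\Delta t_k^2$, producing the factor $C_k\propto\exp(\ee\sum_{i<k}\Delta t_i^2)$ that generates the two summability hypotheses of the theorem. Finally, your plan does not say how you control $H^{-1}(I_d-(I_d-\aa H)^{k-i})$ for the \emph{non-symmetric} matrix $H$; this requires bounding the imaginary parts of the spectrum of $H$ by its real parts (Lemmas \ref{lem:HSA} and \ref{lem:bound_I_gH}), which is where the factor $\tr(HH^{-\top})$ actually comes from. To repair your proof you would either have to carry out the $\eta_k^r$ expansion (at which point you are reproducing the paper's argument) or accept a strictly weaker theorem with $\aa$ and $C$ depending on $\lambda_{\min}(S)$.
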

The proof is adapted from \cite{bach2013non}
with the extra difficulties
that the linear operators applied to $\th_k$
in \eqref{eq:TD0}
are different for each $k\geq0$,
that they are not symmetric,
and that their symmetric part has no
interesting property (only the symmetric part
of the expectation
of its limit when $k\to\infty$ has useful properties).
Moreover, our sequence of stochastic estimators
have vanishing biases that introduce new terms in the proof,
this leads to the necessity to add a
growth assumption on  $\sum_i\Delta t_i^2$.

\begin{figure}
	\centering
    \noindent
	\includegraphics[width=0.49\linewidth]{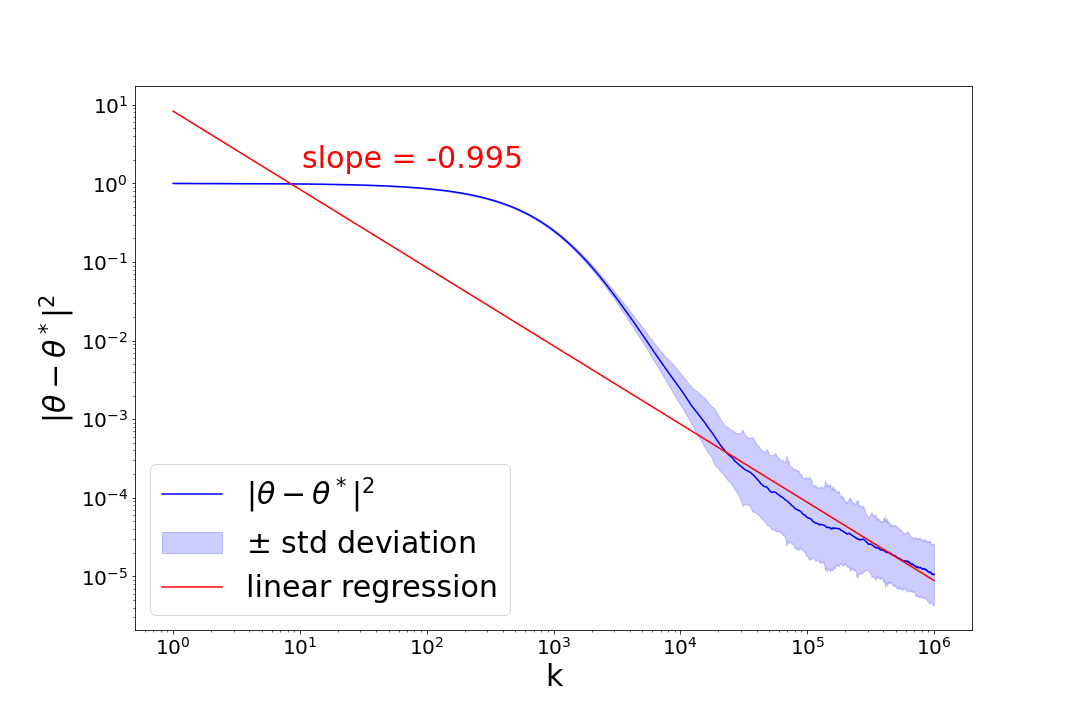}
	\includegraphics[width=0.49\linewidth]{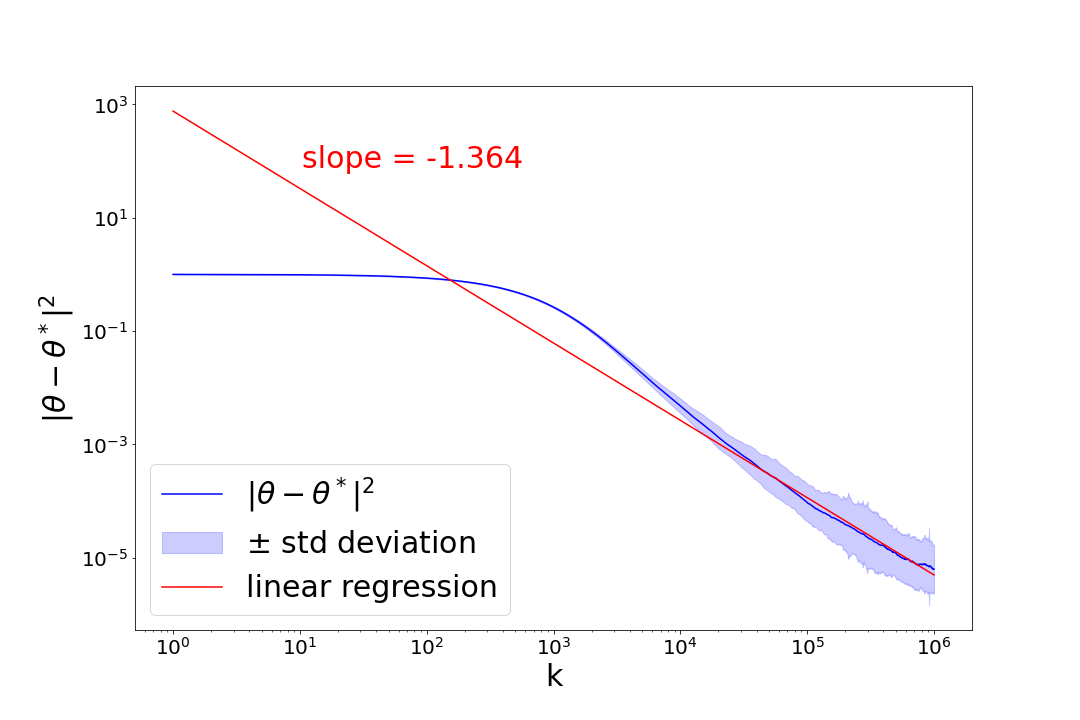}
    \caption{\label{fig:cvg_polyak}
    Empirical errors (averaged over $100$ runs)
    for averaged stochastic TD(0)
    under the
    assumptions of Theorem \ref{thm:averaged_cvg}.
    The learning step is $\alpha=10^{-3}$.
    On the left, we use
    $\Delta t_k=(k+1)^{-\frac12}$
    and obtain the rate predicted by Theorem \ref{thm:averaged_cvg}
    with an error of $0.5\%$.
    On the right, we use 
    $\Delta t_k=(k+1)^{-1}$ and obtain a better rate:
    we believe that this
    is due to $H$
    being symmetric positive definite and
    well-conditioned here but not 
    in Theorem~\ref{thm:averaged_cvg}.
    See Appendix \ref{subsec:num_model}
    for details on the simulations.}
\end{figure}

\section{Extensions and limitations} 
\subsection{Extensions}
\label{subsec:extensions}
\paragraph{The time-dependent case.}
In the present analysis,
we only considered the stationary case,
see \eqref{eq:def_V},
mainly because this is the more standard
setting in RL.
However, we argue that all our results
straightforwardly extend to the time-dependent case,
i.e., when $V$ is given by
\begin{equation*}
        V(t_0,x)
        =
        \EE\lb
        \int_{t_0}^{T}
        r(t,X_t)\,dt
        \Big|X_0=x
        \rb
        \quad\text{ with }\quad
        dX_t
        =
        b(t,X_t)dt
        +\sigma(t,X_t) dW_t.
\end{equation*}
In this case, an observation 
is a quintuple $(t_k,\Delta t_k,X_k,X'_k,R_k)$
where $t_k$ is the time at which $X_k$ is observed.
It allows to solve BSDEs with a new approach.
Indeed, BSDE solvers from the literature
generally divide into two groups.
The first one consists of methods which are
nonlocal with respect to time, i.e.,
the updates use entire trajectories
from $0$ to $T$, e.g.,
\cite{han2016deep,gobet2005sensitivity,
    lehalle1998piecewise,psaltis1988multilayered}.
The methods from the second group
solve a sequence of regression problems
starting from the terminal time $T$
and using a backward induction 
on a fixed time grid, e.g.,
\cite{bachouch2022deep,germain2021neural}.
Our method enter none of these groups
since it is local in time,
it is mesh-free
and it do not rely on backward
inductions.

\paragraph{Nonlinear problems.}
Recall that our main motivation is precisely to
deal with the nonlinear counterparts
of \eqref{eq:def_V}, \eqref{eq:PDE_V}
and \eqref{eq:BSDE_Y}.
We refer to Section \ref{subsec:motivations}
for more details.

\paragraph{Other RL methods.}
We believe that the main ideas of the present
work may be largely repeated
for most of the RL methods using temporal
differences or some of their variants.
For instance,
we refer to Appendix \ref{sec:RG}
for such an extension to residual gradient
\cite{baird1995residual}.
We argue that they may be extended to
methods based on the $Q$-function as well
(which include the majority of the state-of-the-art
RL methods), up to an additional effort since
the limit of the $Q$-function is independent
of its second argument when the time steps
tend to zero, we refer to \cite{tallec2019making}
for a way to overcome this difficulty.

\subsection{Limitations}
\label{subsec:limitations}
\paragraph{About the independence
assumption.}
For observations
from a simulator,
independent samples are
easy to obtain.
For real-world observations,
the independence assumption
is false in general
but almost independent samples
are usually obtained by
shuffling a large dataset:
we believe that it is sufficient
in practice to obtain similar
convergence rates as in the independent situation.
Alternatively,
one might assume
that the observations
are not independent but
are obtained from
following the Markov Chain
(which is rarely done in practice since
datasets are usually shuffled before use).
Theoretically, this alternative assumption
is generally restrictive
since it relies on ergodic limits
and very small learning rates,
making the number of necessary samples
increases dramatically in practice.
We believe that comparable results
under this assumption can be derived as well,
but it is out of the scope of the present paper.
We refer to \cite{TD0RKHS} for convergence rates
under this assumption 
and a comparison with the independence setting,
in the case of a non-parametric TD(0) method.

\paragraph{Few words about the noises.}
In the present work,
we make the assumption
that the noise has independent increments:
this is a limitation
which is almost always assumed in the RL literature,
even in the discrete-time setting.
We make a second assumption, that the noise is continuous
with respect to the time: this is a limitation
since jumps are not allowed, but it
still covers a very large number
of applications from the real world
(e.g., any physical or chemical system).
Given these two assumptions,
at the limit when $\Delta t$ tends to zero,
noises have to be Gaussian by Donsker's theorem
(intuitively, there is a \emph{Gaussianisation}
phenomenon of the noise when $\Delta t$
tends to zero).
For observations from a simulator,
we assumed the noises to be Gaussian
only because it is the simplest way to
have consistent noises
with respect to changes of time steps
(since the sum of independent Gaussians
is Gaussian).
However, our analysis straightforwardly extends
to any family of noises that is
(or at least becomes at the limit $\Delta t\to0$)
consistent with respect to changes of time steps.

\section{Related works}
\label{sec:biblio}

\paragraph{TD learning and SGD.}
The TD algorithm was introduced in the tabular
case by~\cite{sutton1988learning},
with later convergence results for linearly dependent features~\cite{dayan1992convergence}.
Asymptotic stochastic approximation results
were derived by~\cite{jaakkola1993convergence} for the tabular case,
and by~\cite{schapire1996worst} when using linear approximations, with a non-asymptotic analysis in the
\textit{i.i.d.}~sampling case~\cite{lakshminarayanan2018linear}.
The analysis of TD requires tools
from stochastic approximation~\cite{benveniste2012adaptive},
which have mainly been derived 
for stochastic gradient descent (SGD)~\cite{bottou2018optimization} and reused here.
The convergence results  presented in the present
paper may be compared
to standard results on RL
algorithms,
see \cite{bellman1966dynamic,kirk1970optimal}
for TD($0$).
The proof of Theorem \ref{thm:averaged_cvg}
is adapted from the literature on SGD
\cite{Polyak1992AccelerationOS,bach2013non}
to the non-symmetric setting induced by TD(0).
In particular,
\cite{bach2013non} states the state-of-the-art results
concerning convergence of SGD methods
in the non-strongly convex setting,
here we reach similar convergence rates
on the more difficult optimisation
problem raised by TD(0).

\paragraph{Continuous-time RL.}
Continuous-time RL 
started with \cite{baird1993advantage},
who proposed a continuous-time
counterpart to $Q$-learning;
it was later extended by
\cite{tallec2019making}.
From a different perspective,
\cite{bradtke1994reinforcement}
extended classical RL algorithms
to continuous-time discrete-state
Markov decision processes.
Then,
using deterministic dynamics
given by ordinary differential equations,
and based on the Hamilton-Jacobi-Bellman
equation,
\cite{doya2000reinforcement}
derived algorithms for both
policy evaluation
and policy improvement.
Similar deterministic approaches
of continuous-time RL have recently been
explored by
\cite{lutter2021value,yildiz2021continuous}.
In order to balance between exploration
and exploitation,
\cite{wang2020reinforcement}
added an entropy-regularisation term
to a continuous optimisation problem,
the authors concluded that Gaussian controls
are optimal for their relaxed problems,
leading to a similar SDE system as the one
studied in the present work.

\paragraph{Learning methods for solving PDEs and SDEs.}
Solving partial differential equations
using learning algorithms is a natural idea.
Indeed, classical methods 
such as finite differences,
finite elements, or Galerkin methods
cannot be computed for dimensions higher than three
because of the size of the grid becoming too large.
Some mesh-dependent learning algorithms have been developed,
see \cite{lagaris1998artificial,lagaris2000neural,malek2006numerical},
but they suffer from the same computational difficulties
in high dimensions as the classical methods.
There has been a surge of works during the last five years
for solving high-dimensional PDEs or SDEs using deep learning,
let us cite \cite{khoo2021solving,sirignano2018dgm} for the
\emph{Deep Galerkin Method},
or \cite{beck2019machine,han2017deep,han2018solving}
for methods based on
BSDEs or FBSDEs;
we refer to the surveys
\cite{beck2020overview,germain2021neural,hu2023recent} and the references therein 
for more results on deep learning
methods for PDEs, BSDEs or FBSDEs.
The methods presented in the present work
are different to all the above-mentioned
methods since
they are mesh-free, local in time
and do not rely on backward inductions.

\section{Conclusion}
\label{sec:conclusion}
In the present work,
we prove that standard reinforcement
learning methods based on the temporal difference
are not adapted to solve stochastic continuous-time
optimisation problems (see Proposition~\ref{prop:varred}),
nor their discretisations using small time steps.
We then propose two original numerical methods
based on the well-known TD(0) algorithm
using vanishing time steps.
We prove theoretical convergence rates
(Theorem \ref{thm:simple_cvg} and \ref{thm:averaged_cvg})
under the assumption of linear parametrisations
(which is always needed in the literature for proving
explicit rates).
We subsequently verify those rates through numerical
simulations (Figures \ref{fig:cvg_TD0} and \ref{fig:cvg_polyak}).
The first method is model-free,
its convergence rate is slower than
the standard rates for discrete-time settings
because it uses a quantity which is not suited
to small time steps (see Section \ref{sec:TDs}).
Moreover, additional care has to be taken
concerning the choice of the sequence of time steps
(see Remark \ref{rk:simple_cvg}).
The second method is model-based
(it only requires to know the drift function $b$),
it admits better rates of convergence and is more
robust to changes of the time steps.
More precisely,
Theorem \ref{thm:simple_cvg}
shows convergence rates for both algorithms
using standard decreasing
learning rates,
a strong convexity assumption (induced by
the regularisation parameter $\mu$)
and a projection step.
Theorem \ref{thm:averaged_cvg}
shows a fast-convergence rate
for the model-based method
with a constant learning rate,
without strong-convexity assumption
and using an averaging method.
This rate is similar to the state
of the art for the simpler
linear regression problem with SGD
\cite{bach2013non}.

Alternatively, the two methods analysed in the present
work are novel approaches to numerically
solve linear PDEs and BSDEs using observations;
their main advantage on the existing methods
are that they are local in time,
mesh-free and that they do not rely on any backward induction.
Those problems (linear or nonlinear)
are central in the vast 
domain of mathematical modelling
and have uncountable applications,
we refer to Section \ref{subsec:motivations}
for a more thorough discussion and
some examples.

\paragraph{Acknowledgements.}
We thank Justin Carpentier for
fruitful discussions related to this work.
This work was funded in part by the French
government under management of Agence Nationale
de la Recherche as part of the “Investissements d’avenir” program,
reference ANR-19-P3IA-0001(PRAIRIE 3IA Institute).
We also acknowledge support from the European Research Council (grant SEQUOIA 724063).

\bibliographystyle{plain}
\bibliography{biblioML}

\newpage
\appendix
\section{Some standard notations from PDE literature}
\label{sec:app_notations}
In this section, we recall the definition
of some standard differential operators.
For $n,m\geq1$,
let $f:\RR^n\to\RR^m$
be a function which admits
partial derivatives in any direction
up to order two,
for $x\in\RR^n$ we define:
\begin{itemize}
    \item
        the first-order derivative (or Jacobian) of $f$ at $x$
        as $D_xf(x)\in\RR^{m\times n}$
        such that $D_xf(x)_{i,j}=\partial_{x_j}f_i(x)$;
    \item
        if $m=1$,
        the gradient of $f$ at $x$
        as $\nabla_xf(x)\in\RR^{n}$
        such that $\nabla_xf(x)_{j}=\partial_{x_j}f(x)$;
    \item
        if $n=m$,
        the divergence of $f$ at $x$
        as $\divo(f)(x)=\sum_{j=1}^n\partial_{x_j}f_j(x)$;
    \item
        if $m=1$,
        the second order derivative (or Hessian)
        of $f$ at $x$
        as $D^2_xf(x)\in\RR^{n\times n}$
        such that $D^2_xf(x)_{i,j}=\partial_{x_i}\partial_{x_j}f(x)$;
    \item
        if $m=1$,
        the Laplacian of $f$ at $x$,
        as $\Delta_x f(x)=\sum_{i=1}^n\partial_{x_i}\partial_{x_i}f(x)$.
\end{itemize}
Occasionally, the Hessian and the Laplacian might be used
even if $m>1$.
For the Laplacian,
it only consists in applying the Laplacian coordinate-wise.
For the Hessian, it outputs a tensor in dimension three,
such that $D^2_xf(x)_{i,j,k} = 
        \partial_{x_j}\partial_{x_k}f_i(x)$.


\section{Numerical simulations}

\subsection{Description of the model}
\label{subsec:num_model}
Let us describe all the details 
of the numerical simulations leading
to Figure \ref{fig:cvg_TD0} and \ref{fig:cvg_polyak}.
In dimension $d=1$, the one-dimensional
torus is thought as the set $[-0.5,0.5]$
completed with periodic boundary conditions.
We take
$\ss$ and $b$ as in Proposition
\ref{prop:langevin},
and $U$ and $r$ given by
\begin{equation*}
    U
    =
    -\textstyle{\frac{\ss^2}2}
    \ln(2-\cos(2\pi x))
    \;\text{ and }\;
    r(x)
    =
    \lp
    \rho
    +\frac{4\pi^2\ss^2}{2-\cos(2\pi x)}\rp
    \sin(2\pi x).
\end{equation*}
We refer to the top of
Figure \ref{fig:MC} for a graphical representations
of $r$ and $-U$.
The function $-U$ has to be thought of as
a potential function,
i.e., the dynamic
heads toward the minimum of $-U$.
\begin{figure}
	\centering
	\includegraphics[width=\linewidth]{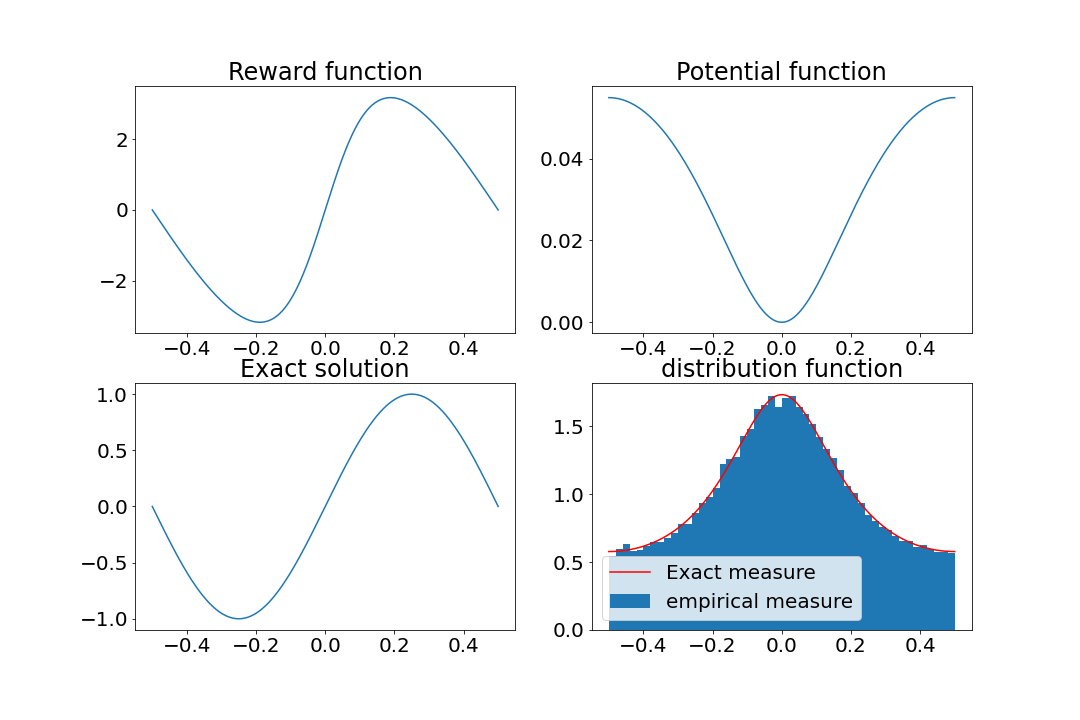}
    \caption{\label{fig:MC}
        Here are some graphical representations
        of the model described in Section \ref{subsec:num_model}.
        Namely,
        the reward function is at the top left;
        the potential is at the top right;
        the exact solution $V$ is at the bottom left;
        the measure $m$ and an empirical approximation
        of it with $10^5$ samples 
        divided in
        $100$ subintervals
        are at the bottom right.
        We used $\rho=1$ and $\ss^2=0.1$.
    }
\end{figure}

One can easily check that $V$ is then given by
\begin{equation*}
    V(x)
    =
    \sin(2\pi x),
\end{equation*}
by checking that \eqref{eq:PDE_V} is satisfied.
The invariant measure $m$ of the dynamics
in \eqref{eq:def_V} satisfies
\begin{equation*}
    m(x)
    =
    \frac{e^{\frac{2U(x)}{\ss^2}}}
    {\int_{\Omega}e^{\frac{2U(y)}{\ss^2}}dy}
    =
    \frac{\sqrt{3}}{2-\cos(2\pi x)}.
\end{equation*}
The function
$V$ and $m$ are represented
at the bottom of Figure \ref{fig:MC}.

The repartition function $F$ of the distribution $m$ satisfies
\begin{equation*}
    F(x)
    =
    m\big([-0.5,x]\big)
    =
    \pi\arctan(\sqrt{3}\tan(\pi x)).
\end{equation*}
Observe that, if $u$ is a uniform
random variable on $[0,1]$
then $F^{-1}(u)$ is
distributed according to $m$.
Therefore, it becomes easy to sample
i.i.d. observations from $m$ given that, for $z\in[0,1]$,
\begin{equation*}
    F^{-1}(z)
    =
    \pi^{-1}\arctan\lp\textstyle{\frac1{\sqrt3}\tan(\pi(z-\frac12))}\rp.
\end{equation*}
The feature vector $\vp$ for the learning is
\begin{equation*}
    \vp(x)
    =
    \big(1,\sin(2\pi x),\cos(2\pi x)\big)^{\top},
\end{equation*}
so that $V$ belongs to the set
of parametrised functions.
The parameters for the learning are
$\aa_k=\frac2{k+1}$,
$\Delta t_k=\sqrt[3]{\aa_k}$
for standard TD(0)
and
$\Delta t_k=\sqrt{\aa_k}$
for stochastic TD(0).
We take $\mu=0$ (i.e., no regularisation)
but we obtain the convergence rates
of Theorem \eqref{thm:simple_cvg}
like if $\mu$ was equal to $0.5$
because the matrix
$H=\EE\lb\vp(X)\cL\vp(X)^{\top}\rb$
is positive symmetric definite with
$H\geq 0.5 I_d$.

\newpage

\section{Insights and informal discussions}
\label{sec:informal}
\subsection{Insights on the convergence of standard TD(0)}
\label{subsec:informal_TD0}
As we already pointed out
in Section \ref{subsec:contributions}
and Remark \ref{rk:simple_cvg},
the very fact that the model-free TD(0) method
may converge is already surprising
as it may be interpreted as:
        the solution of \eqref{eq:BSDE_Y} can be learnt
        via an iterative method which totally omits
        the stochastic term ``$Z_t\cdot dW_t$''.

The present section aims
at giving insights on
why this method should converge
in a particular setting,
i.e.,
when $\alpha_k/\Delta t_k$ converges to zero.
Conversely, we will see that
it cannot converge outside of the latter
setting.
All the arguments here are very informal
and should not be considered as rigorous
(we refer to
Appendix~\ref{proof:cumrew} for the rigorous proof).

Let us replace the discrete iteration
variable $k$ by a continuous variable $s\in[0,\infty)$
with the standard analogous 
\begin{equation*}
    s
    \approx
    \sum_{i=0}^{k-1}
    \aa_i
    \;\text{ and }\;
    \Delta s
    \approx
    \alpha_k.
\end{equation*}
We assume that all indexes $k$
may be changed into the continuous
variable $s$
and that we can give a sense,
at least informally,
to continuous-iteration observations
$(\Delta t_s,X_s,X'_s,r_s)_{s\geq0}$.
We get,
\begin{align*}
        d\th_s
        \approx
        \Delta \th_s
        &\approx
        \th_{s+\Delta s}-\th_s
        \\
        &\approx
        -\Delta s
        \,\delta_s
        \nabla_{\th}v
        \\
        &\approx
        -\Delta s\,
        (\cL v - r_s
        +(\Delta t_s)^{-\frac12}(\ss^{\top}\nabla_xv)\cdot\xi_s)
        \nabla_{\th}v
        \\
        &\approx
        -\Delta s\,(\cL v - r_s)
        \nabla_{\th}v
        +\sqrt{\textstyle{\frac{\Delta s}{\Delta t_s}}}
        (\ss^{\top}\nabla_xv)\cdot(\sqrt{\Delta s}\,\xi_s)\nabla_{\th}v
    \end{align*}
where the third line is obtained using
Lemma \ref{lem:expand_eps},
with $\xi_s$ a normally distributed
random variable.

Then, let us use
a new continuous-iteration Brownian motion
$(\Wt_s)_{s\geq0}$
(which is naturally different from
$(W_t)_{t\geq0}$ the continuous-time
Brownian motion from the dynamics
in \eqref{eq:def_V})
and its standard discretised approximation
$d\Wt_s\approx \sqrt{\Delta s}\,\xi_s$.
We get that $(\th_s)_{s\geq0}$
is a solution to the SDE
\begin{equation}
    \label{eq:SDE_th}
        d\th_s
        =
        -(\cL v -r_s)\nabla_{\th}v\, ds
        +\text{\guillemotleft}
            \textstyle{\sqrt{\frac{\aa_k}{\Delta t_k}}}
            \text{\guillemotright}
        \nabla_\th v\nabla_xv^{\top}\ss\,d\Wt_s,
\end{equation}
The first consequence of the latter calculus
is that $\th$ cannot converge
to a deterministic limit
if the diffusion term
        $\text{\guillemotleft}
            \textstyle{\sqrt{\frac{\aa_k}{\Delta t_k}}}
            \text{\guillemotright}
        \nabla_\th v\nabla_xv^{\top}\ss$
        does not converge to zero.
This should only happen if
$\textstyle{\sqrt{\frac{\aa_k}{\Delta t_k}}}$
tends to zero.
Observe that we did not use
any assumption on the parametrisation
$v(\cdot,\th)$.
In particular,
the informal derivation
of \eqref{eq:SDE_th}
holds for nonlinear parametrisation,
so does the latter conclusion.

From now on, we explicitly
assume that $v$ is linear in $\th$,
i.e., $v(\cdot,\th)=\th^{\top}\vp$
and that
$\textstyle{\sqrt{\frac{\aa_k}{\Delta t_k}}}$
does tend to zero.
Consequently, the dynamics of $\th_s$
tends to be deterministic
at the limit and we get
    \begin{align*}
        \frac{d}{ds}\EE\lb|\th_s-\th^*|^2\rb
        &=
        -2(\th-\th^*)^{\top}\EE_{X\sim m}\lb\vp(\cL v(X,\th)-r(X))\rb
        \\
        &=
        -2(\th-\th^*)^{\top}
        \EE\lb\vp\cL \vp^{\top}\rb
        (\th-\th^*),
    \end{align*}
where the definition of $\th^*$ in \eqref{eq:def_th*}
is used to get the last line.
Recall that the symmetric part of
$\EE\lb\vp\cL \vp^{\top}\rb$
is positive definite by Lemma \ref{lem:HSA}.
We just proved that        
$\EE\lb|\th_s-\th^*|^2\rb$
is a Lyapunov function,
which implies that $\th_s$
should converge to $\th^*$.

\subsection{Comparison between TD(0) and SGD: the limit $\rho\rightarrow\infty$}
\label{subsec:SGD}
We already mentioned that
TD(0) may be seen as an extension
of SGD.
A way to make the latter statement
rigorous here is to consider
the limit when $\rho$ tends to infinity.
Indeed,
from \eqref{eq:def_V}
we easily obtain that,
for $x\in\Omega$,
\begin{equation*}
    \lim_{\rho\to\infty} \rho V(x)
    =
    r(x),
\end{equation*}
and that $\ell/\rho$
converges to the square loss,
where $\ell$ is defined in Proposition
\ref{prop:langevin}.
In addition,
if $\rho\Delta t$ tends to zero,
from a fixed
starting state $X=x\in\Omega$,
we get
\begin{equation*}
    \ddt_{\Delta t}
    =
    \rho v(x,\th)
    -r(x)
    +o(1).
\end{equation*}
Formally,
we can thus view
the least-square regression of $r$
using SGD
as a particularly simple limit case of TD(0).

In the discrete-time setting,
the latter analysis is even easier
and only consists in taking
the discrete discount factor
(named $\gamma$ in \eqref{eq:def_dd}) equal to zero.
In this situation, the fact that SGD
is an instance of TD(0) in a particularly simple
setting (some difficult terms
are removed, such that the
one making TD(0) be a bootstrapping
algorithm)
appears more clearly.

This explains why we do not
expect the convergence rates
proved in the present work to 
beat the state of the art
of the literature on SGD.
Let us recall that
Theorem \ref{thm:averaged_cvg}
does yield a similar convergence
rate as the state-of-the-art
results on mere convex problem
with averaging method
\cite{bach2013non}.

\subsection{Convergence of the Discrete Markov Chain}
\label{subsec:discrete_MC}
In this section,
we give a condition under which
the convergence results \eqref{eq:cvg_mk}
holds.
Here, we consider the case
of observations obtained from a
simulator, like stated in Section
\ref{sec:assumptions}.
Moreover,
we assume that $m_k$, the law of
$X_k$, is in fact the
stationary measure of the discretisation
of the dynamic in \eqref{eq:def_V}
by the Euler-Maruyama scheme.
In this case,
the sequence $(m_k)_{k\geq0}$ is weakly convergent
to $m$ and a convergence rate is given in the following theorem.
\begin{theorem}[Theorem $14.5.1$
    from \cite{MR1214374}]
    \label{thm:cvg_mk}
    For $f\in C^4(\Omega;\RR)$,
    there exists $C>0$ depending
    only on the $C^4$-norm of $f$ such that
    $\labs\EE\lb f(X_k)-f(X)\rb\rabs\leq C\Delta t_k$.
\end{theorem}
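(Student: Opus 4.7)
The plan is to follow the classical Talay--Tubaro analysis of the weak error of the Euler--Maruyama scheme, applied to the invariant measures rather than to finite-horizon expectations. Let $\cA$ denote the infinitesimal generator of the continuous-time SDE in \eqref{eq:def_V}, i.e.\ $\cA\phi = b\cdot\nabla_x\phi + \tfrac12\tr(\ss\ss^{\top}D^2_x\phi)$, and let $P^{\Delta t}$ denote the one-step Euler--Maruyama transition operator $(P^{\Delta t}\phi)(x) = \EE[\phi(\cS_{\Delta t}(x,\xi))]$ with $\xi\sim\cN(0,I_d)$. By definition, $m$ satisfies $\int \cA\phi\,dm = 0$ for every smooth $\phi$, and $m_k$ satisfies $\int (P^{\Delta t_k}\phi - \phi)\,dm_k = 0$ for every sufficiently smooth $\phi$.

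First, I would solve the Poisson equation $\cA u = f - \bar f$ on $\Omega = \TT^d$, where $\bar f := \int f\,dm$. Under the uniform ellipticity of $\ss\ss^{\top}$ (which is implicit in the Euler--Maruyama convergence framework of \cite{kloeden1992stochastic}), and because the centered right-hand side integrates to zero against $m$, this elliptic equation is solvable on the torus. Schauder-type regularity then yields $u \in C^6$ with $\|u\|_{C^6}$ controlled by $\|f\|_{C^4}$ and the $C^4$ norms of $b$ and $\ss$; this loss of two derivatives is precisely why the hypothesis $f \in C^4$ is needed.

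Next I would Taylor-expand the one-step increment. Writing $\cS_{\Delta t}(x,\xi) - x = \Delta t\, b(x) + \sqrt{\Delta t}\,\ss(x)\xi$ and expanding $u(\cS_{\Delta t}(x,\xi))$ to order four in the increment, taking expectations, and using that the odd moments of $\xi$ vanish, I obtain the pointwise identity
\begin{equation*}
(P^{\Delta t}u)(x) - u(x) \;=\; \Delta t\, \cA u(x) \;+\; \Delta t^2\, \cR_{\Delta t}(x),
\end{equation*}
where $\cR_{\Delta t}$ is a polynomial expression in $b$, $\ss$ and derivatives of $u$ of order at most four, bounded uniformly in $\Delta t \in (0,1]$ and $x\in\Omega$ by a constant depending only on the $C^4$ norm of $u$, and hence on $\|f\|_{C^4}$.

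Finally, I would integrate against $m_k$ and exploit stationarity: since $\int(P^{\Delta t_k}u - u)\,dm_k = 0$, dividing by $\Delta t_k$ gives $\int \cA u\,dm_k = -\Delta t_k\int \cR_{\Delta t_k}\,dm_k$. Substituting $\cA u = f - \bar f$ and recalling $\int \bar f\,dm_k = \bar f = \int f\,dm$ yields $\int f\,dm_k - \int f\,dm = O(\Delta t_k)$, which is exactly the claim. The main technical obstacle is Step 1: constructing $u$ with the required regularity, which relies on the ellipticity/hypoellipticity of $\cA$ and the Fredholm cancellation $\int (f - \bar f)\,dm = 0$; everything downstream is a routine Taylor expansion together with the invariance identities for $m$ and $m_k$.
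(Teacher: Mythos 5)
The paper does not actually prove this statement: it is imported verbatim as Theorem $14.5.1$ of \cite{MR1214374} (the weak order-one convergence of the Euler--Maruyama scheme), so there is no internal proof to compare against. Your argument is a correct, standard, self-contained route --- the Talay--Tubaro/Poisson-equation method adapted to invariant measures: solve $\cA u = f-\bar f$ on the torus, expand the one-step operator to obtain $P^{\Delta t}u-u=\Delta t\,\cA u+O(\Delta t^{2})$ with the remainder controlled by the fourth derivatives of $u$ (hence by $\|f\|_{C^{4}}$ via elliptic regularity), and integrate against the stationarity identity $\int(P^{\Delta t_k}u-u)\,dm_k=0$. This is arguably better matched to the statement than the literal citation, since the quoted theorem of \cite{MR1214374} bounds the finite-horizon weak error $|\EE[g(X^{\Delta t}_{T})]-\EE[g(X_{T})]|$, whereas comparing the two \emph{invariant} measures requires either exactly your Poisson-equation argument or an ergodicity estimate uniform in $\Delta t$; your sketch supplies the missing bridge. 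Two points you should make explicit if this is written out in full: you need uniform ellipticity of $\ss\ss^{\top}$ (consistent with the paper, which takes $\ss$ a positive constant in its proofs) both for the Schauder estimate on $u$ and for the existence and uniqueness of $m_k$ for the Euler chain on $\TT^{d}$; and the Fredholm solvability condition must be checked against the invariant density $m$, which spans the kernel of the adjoint $\cA^{*}$ --- this is precisely the cancellation $\int(f-\bar f)\,dm=0$ that you invoke, so the argument closes.
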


\subsection{Formal discussion on the origins of the noises}
\label{subsec:noises}
In standard discrete-time reinforcement
learning, two kinds of noises are often considered.
The first one comes from the transition probability
of the MDP, this is the intrinsic noise of the model.
The second kind of noise is an artificial noise
which is generally added to favor exploration.
Exploration is used in order to exit
non-interesting local minima and converge
to more robust solutions.

Here, the noise in \eqref{eq:def_V}
is represented by a Brownian motion,
but we never explained if this noise
was intrinsic to the model or artificially added
for exploration.
In this section, we show that it can be any or both
of the two propositions.
More precisely, we introduce
the following three classes of models:
\begin{enumerate}[label=\bf{M\arabic*}]
    \item
        \label{class:int}
        stochastic models of the form of
        \eqref{eq:def_V} with an intrinsic noise,
    \item
        \label{class:art}
        deterministic models with linear dynamics
        with respect to the controls,
        and an artificial noise added for exploration,
        regularisation
        or for smoothing the control 
        (see \cite{lidec2022leveraging}).
    \item
        \label{class:both}
        stochastic models of the form of \eqref{eq:def_V},
        with linear dynamics with respect to the controls,
        and an artificial noise. 
\end{enumerate}
For the first class of models \ref{class:int},
the noise is part of the model and cannot be tuned.
The last two classes seem more interesting
in the framework of RL, and more specifically
in the theoretical study of the exploration/exploitation
trade-off.
In class \ref{class:art},
the dynamics has the following form,
\begin{equation*}
    \label{eq:exploration}
    \frac{d}{dt}x_t
    =
    A(x_{t})u(x_{t})
    +B(x_{t}),
\end{equation*}
where $u$ is the control function,
$A$ and $B$ are respectively matrix-valued and vector-valued functions.
Then, in order to encourage exploration,
instead of choosing a deterministic control function
(e.g., being greedy with respect to some criterion),
one generally adds noises
in the choice of $u$.
Gaussian noises are often considered in discrete dynamics
because of their
simplicity to sample,
or because they are the minimisers
of some entropy-relaxations of the optimisation problems
(see \cite{wang2020reinforcement} for instance).
Therefore, at least at the discrete level,
it is natural to change $u$
into its noisy counterpart $u+\ss(x,u)\xi_i/\sqrt{\Delta t}$.
This leads to the following dynamics,
\begin{equation*}
    X_{t_{i+1}}
    =
    X_{t_{i}}
    +\Delta t\lp A(X_{t_i})u(X_{t_i})
    +B(X_{t_i})\rp
    +\sqrt{\Delta t}A(X_{t_i})\ss(X_{t_i},u(X_{t_i}))\xi_i,
\end{equation*}
which admits a similar form as
the observations from a simulator
in Section \ref{sec:assumptions},
with $A\ss$ replacing $\ss$.
This time, the noise
is tunable and a particular
interesting regime consists in letting
$\ss$ tends to zero.
The class \ref{class:both}
consists in a mix between the two other classes,
with $A\ss_{\text{art}}+\ss_{\text{int}}$
replacing $\ss$ this time,
where $\ss_{\text{art}}$ and $\ss_{\text{int}}$ are
the artificial and intrinsic noises respectively.
The noise is tunable in some measure but the regime
$\ss\to0$ is in general prohibited.

\section{Links optimal control problems}
\label{sec:appli_RL}
\subsection{A short review of the optimal control problem in continuous time}
\label{subsec:review_OC}
Let us consider the controlled counterpart of 
\eqref{eq:def_V},
\begin{equation}
    \label{eq:controlled_SDEX}
    dX_t
    =
    b(X_t,u(X_t))dt
    +\ss(X_t,u(X_t)) dW_t,
\end{equation}
where $u:\Omega\to \cA$ is a control function
and $\cA$ is set of admissible controls.
The controller aims at maximising
the following quantity
over the set of admissible functions $u$,
\begin{equation}
    \label{eq:def_J}
    J(u)
    =
    \EE\Big[
    \int_0^{\infty} e^{-\rho t} r(X_t,u(X_t))dt\Big],
\end{equation}
where $X_0$ is distributed according to some
probability measure $\mu_0\in\cP(\Omega)$.
Contrary to the ones in
the main text,
the prototypes of the drift, diffusion and reward
functions are given by
$b:\Omega\times\cA\to\RR^d$,
$\ss:\Omega\times\cA\to\RR^d$
and 
$r:\Omega\times\cA\to\RR$,
i.e., they may depend on the control.

A natural approach from optimal control
theory is to introduce the value
$V^u$ associated to a specific control function
$u$, defined by:
\begin{equation*}
    \label{eq:def_Vu}
    V^u(x)
    =
    \EE\Big[
    \int_0^{\infty} e^{-\rho t} r(X_t,u(X_t))dt\Big|X_0=x\Big].
\end{equation*}
Then, solving the optimisation problem \eqref{eq:def_J}
boils down to compute $V^*$ and $u^*$,
respectively
the optimal value function
and the optimal control,
satisfying
\begin{equation*}
    V^*(x)
    =
    \max_u V^u(x)
    \qquad
    \text{ and }
    \qquad
    u^*(x)
    \in
    \argmax_u
    V^u(x).
\end{equation*}
Moreover, under mild regularity
assumption, $V^*$
can be characterised as the
solution of a partial differential equation,
named Hamilton-Jacobi-Bellman (HJB) equation
(see \cite{bellman1966dynamic}
for more details),
given by
\begin{equation}
    \label{eq:def_general_HJB}
    \rho V^*
    -
    \max_{u\in \cA}
    H(x,\nabla_xV^*(x),D^2_{x}V^*(x),u)
    =
    0,
\end{equation}
where the Hamiltonian
$H$ is defined by,
for $p\in\RR^d$, $z\in\RR^{d\times d}$
and $u\in \cA$,
\begin{equation}
    \label{eq:def_H}
    H(x,p,z,u)
    =
    r(x,u)
    +p\cdot b(x,u)
    +\frac12\tr\lp\lp\ss\ss^{\top}\rp(x,u)z\rp.
\end{equation}
Moreover, the optimal
control belongs to the $\argmax$
the Hamiltonian,
i.e.,
\begin{equation*}
    u^*(x)
    \in
    \argmax_u
    H(x,\nabla_xV^*(x),D^2_{x}V^*(x),u).
\end{equation*}

\paragraph{Uncontrolled diffusion.}
When the diffusion $\ss$ does not depend
on the control,
The HJB equation \eqref{eq:def_general_HJB}
can be rewritten as
\begin{equation*}
    \rho V^*
    -\tr\lp\textstyle{\frac{\ss\ss^{\top}}{2}}D^2_{x,x}V\rp
    -\Ht(x,\nabla_xV^*)
    =
    0,
\end{equation*}
where, here, for $p\in\RR^d$,
the reduced Hamiltonian
is defined by,
\begin{equation*}
    \Ht(x,p)
    =
    \max_{u\in\RR^d}
    \lc p\cdot b(x,u) + r(x,u)\rc.
\end{equation*}
In this case and under
standard assumptions
on $b$ and $L$,
the BSDE \eqref{eq:BSDE_Y}
is replaced by the FBSDE system
\begin{equation*}
    \left\{
    \begin{aligned}
        dX_t
        &=
        \nabla_p\Ht(X_t,Z_t)\,dt
        +\ss(X_t)dW_t,
        \\
        dY_t
        &=
        -(r(X_t)
        -\rho Y_t)\,dt
        +Z_t^{\top}\ss(X_t)\,dW_t.
    \end{aligned}
    \right.
\end{equation*}
Unlike the couple of SDEs from
\eqref{eq:def_V} and
\eqref{eq:BSDE_Y},
the latter system is coupled
through its both equations,
making it much more complex to solve.
Observe that it cannot be solved
neither forward nor backward,
it generally requires a fix point
argument on the whole system.

\paragraph{A simple example.}
Let us consider the simpler case
where $\ss$ is a constant positive
real number, $b$ is given
by $b(x,u)=u$,
$r$ is concave with respect to $u$,
and the control space
is $\cA=\RR^d$.
In this case,
the HJB equation \eqref{eq:def_general_HJB}
can be rewritten as
\begin{equation*}
    \label{eq:simple_HJB}
    \rho V^*
    -{\textstyle\frac{\ss^2}2}
    \Delta_x V^*
    -\Ht(x,\nabla_xV^*)
    =
    0,
\end{equation*}
where, here, for $p\in\RR^d$,
the reduced Hamiltonian
is defined by,
\begin{equation*}
    \Ht(x,p)
    =
    \max_{u\in\RR^d}
    \lc p\cdot u + r(x,u)\rc.
\end{equation*}
In particular, $\Ht$
is the Legendre's transform
(or convex conjugate)
of $-r$ with respect to
its second argument,
let us recall that $-r$
is assumed to be convex with respect to $u$.
In this case,
the optimal control $u^*$ admits the following
closed form,
\begin{equation}
    \label{eq:opt_cont}
    u^*(x)
    =
    \nabla_p \Ht(x,\nabla_xV^*(x)).
\end{equation}
\begin{remark}
    If $\Ht$ and $\nabla_p\Ht$
    admit a known closed form,
    the latter example is 
    particularly simple.
    This conclusions easily extend 
    to the case of $b$ being
    a more general affine function
    with respect to $u$,
    i.e., of the following form,
    \begin{equation*}
        b(x,u)
        =
        b_0(x)
        +b_1(x)u,
    \end{equation*}
    where $b_0:\Omega\to\RR^d$
    and $b_1:\Omega\to\RR^{d\times d}$ are
    vector-valued and matrix-valued functions
    respectively.

    Hopefully, this class of control
    problems is in fact of high interest since
    it actually contains a lot of models from
    modern control theory
    and reinforcement learning.
    One may for instance consider
    the cases where the dependence of $r$
    with respect to $u$ is:
    either a characteristic function
    of a compact subset of $\RR^d$;
    or a power function of $|u|$
    (in the next paragraph, we present
    the quadratic case).
\end{remark}

\paragraph{The quadratic case.}
Under the same assumptions as in the latter
example, let us consider the particular case
when $r$ is separated with a quadratic part in $u$.
More precisely, let us consider $r$ to be given by,
\begin{equation*}
    r(x,u)
    =
    -\frac{|u|^2}2
    +r_0(x),
\end{equation*}
where $r_0:\Omega\to\RR$
is the state reward function.
In this case, the conditions
on $V^*$ and $u^*$
may be written as follows,
\begin{align*}
    &\rho V^*(x)
    -{\textstyle\frac{\ss^2}2}
    \Delta_x V^*(x)
    -{\textstyle\frac{1}2}
    |\nabla_xV^*(x)|^2
    =
    r_0(x),
    \\
    &u^*(x)
    =
    \nabla_xV^*(x).
\end{align*}
Finally, let us mention that the
latter problem is not
strictly speaking a linear-quadratic problem
even if $b$ is linear
and $r$ is quadratic
with respect to $u$.
Indeed, linear-quadratic control problems
requires $r$ to be quadratic
with respect to the couple $(x,u)$
(and $b$ linear with respect to $(x,u)$),
which is not the case here.

\subsection{Informal extension to solve control problems}
\label{subsec:PI}
Let us assume that
the observations are from a simulation,
as defined in Section \ref{sec:assumptions}.
The analysis in the present section
can then easily be repeated for
real-world observations
with different notations.
Using
the Euler-Maruyama discretisation
scheme on \eqref{eq:controlled_SDEX}
we obtain the
controlled discrete step operator
\begin{equation*}
    \label{eq:controlled_SDEX_dis}
    X_{t_{i+1}}
    =
    S_{\Delta t}(X_{t_i},u(X_{t_i}),\xi_i)
    :=
    X_{t_{i}}
    +\Delta t\, b(X_{t_i},u(X_{t_i}))
    +\sqrt{\Delta t}\ss(X_{t_i},u(X_{t_i}))\xi_i.
\end{equation*}
This time, the latter step operator
does not characterise a Markov chain,
but a Markov decision process
associated with the reward function $r$.

In reinforcement learning,
such MDP are usually solved using 
iterative algorithms in the class of
\emph{generalised policy iteration} (GPI)
methods \cite{MR3889951}.
Those algorithms
are generally based on the
value function
or the $Q$-function.
They consist in alternating
two interactive updates:
the \emph{policy evaluation} 
and the \emph{policy improvement}.
Here, using the framework introduced
in the present work,
the policy evaluation consists in
computing an approximation
of the value function $V^u$
associated to some control function
$u$.
Conversely,
the policy improvement consists
in updating the control
function $u$ in order
to maximise its associated
value function.
These two processes
are therefore antagonist
in the sense that
updating $u$ makes our current
approximation of $V^u$
being less accurate;
and vice-versa, when
$V^u$ is updated, the optimal response
to it changes as well.

In the present work,
outside of the present section,
we focus on the policy evaluation process.
However, one may easily figure out
how it may be extended to GPI methods
in order to solve MDPs with vanishing time steps.
The simplest example consist in the
\emph{policy iteration} method
which is described below.

\paragraph{Theoretical policy iteration.}
It consists in computing
an approximation of the value function
between any update of the control function.
Namely, starting from an initial arbitrary
control function $u^0$,
we compute the approximating
sequences
$(V^{\ell})_{\ell\geq0}$ and
$(u^{\ell})_{\ell\geq0}$
as follows:
\begin{equation*}
    V^{\ell}
    =
    V^{u^{\ell}}
    \qquad
    \text{ and }
    \qquad
    u^{\ell+1}
    \in
    \argmax_u H(x,\nabla_xV^{\ell}(x),D^2_{x}V^{\ell}(x),u),
\end{equation*}
where $V^{u^{\ell}}$ is the value function
associated to the control function $u^{\ell}$,
and $u^{\ell+1}$ is the best response
given the value function~$V^{\ell}$.
Let us recall that $H$ is defined
in \eqref{eq:def_H} as the Hamiltonian.

This algorithm is convergent with a super-linear
convergence rate, see~\cite{puterman1979convergence,
    puterman1981convergence,kerimkulov2020exponential}.
In particular, this method
may be seen as
a Newton algorithm
applied to some
infinite-dimensional fixed-point operator.

As the terminology \emph{theoretical} suggests,
the latter method is not implementable in practice
with finite computational power,
because of our assumption
on continuous state and control
spaces.
This is not the case of the following iterative
method.

\paragraph{Approximate policy iteration.}
This method is inspired by the latter one,
theoretical policy iteration.
However, this time
the policy-evaluation step
is only made using functional approximation
using one of the two TD(0) methods
(model-based or model-free)
presented in the present work.

We assume that the policy improvement
step can be done (resp. approximatively solved),
for any value function $V:\Omega\to\RR$.
More precisely,
there exists an operator 
$\cU$ taking $V$ as an argument
and outputting a control function $u=\cU(V)$,
such that $u(x)$ is a maximiser (resp. almost a maximiser)
of $u'\mapsto H(x,\nabla_xV(x),D^2_{x}V(x),u')$.
For instance,
$\cU$ may admit a closed form if
the system reduces to \eqref{eq:opt_cont}
as in the example of the previous section.
Otherwise, one may use an approximating iterative
method to construct $\cU$, for instance with
an actor-critic method.

Like in the main text, we consider
a parametrised value function
$x\mapsto v(x,\th)$
for some parameter
$\th\in\Theta$.
Starting from an arbitrary initial parameter
$\th^0$,
let $(\th^{\ell})_{\ell\geq0}$
be a sequence
of parameters
such that $x\mapsto v(x,\th^{\ell})$
is approximating the above sequence
$(V^{\ell})_{\ell\geq0}$
in the theoretical policy iteration method.
The sequence of control functions
is defined by,
\begin{equation*}
    u^{\ell+1}
    =
    \cU(v(\cdot,\th^{\ell})).
\end{equation*}
Then, at iteration $\ell\geq1$,
we compute $\th^{\ell}$
using the stochastic TD(0) method
from
\eqref{eq:TD0}
(alternatively we could have chosen the
standard TD(0) method),
i.e.,
\begin{equation*}
    \th^{\ell}
    =
    \lim_{k\to\infty} \th^{\ell}_k,
    \:\text{ where }
    \th^{\ell}_{k+1}
    =
    \th^{\ell}_k
    -\aa^{\ell}_k
    \ddt^{\ell}_k
    \nabla_{\th}v(X^{\ell}_k,\th^{\ell}_k),
\end{equation*}
using the following 
definitions of
the counterpart of
\eqref{eq:def_ddt},
\begin{align*}
    &\ddt^{\ell}_k
    =
    \ddt_{\Delta t_k}(X^{\ell}_k,\Xt^{\ell}_k,\th^{\ell}_k)
    :=
    (\Delta t^{\ell}_k)^{-1}
    \big(  v(X^{\ell}_k,\th^{\ell}_k)
    - \gg_{\Delta t_k} v(\Xt^{\ell}_k,\th^{\ell}_k)
    - r(X^{\ell}_k,u(X^{\ell}_k))\Delta t^{\ell}_k
    +Z_k
    \big),
    \\
    &\text{where }
    Z^{\ell}_k
    =
    \lp \Xt^{\ell}_k
    -X^{\ell}_k
    -b(X^{\ell}_k,u^k(X^{\ell}_k))\Delta t^{\ell}_k\rp
    \cdot
    \nabla_xv(X^{\ell}_k,\th^{\ell}_k),
    \\
    &\text{and }
    \Xt^{\ell}_k
    =
    S_{\Delta^{\ell}_k}
    (X^{\ell}_k,u^{\ell}(X^{\ell}_k),\xi^{\ell}_k).
\end{align*}
For a fixed $\ell\geq1$,
the sequences
$(\Delta t^{\ell}_k)_{k\geq0}$,
$(X^{\ell}_k)_{k\geq0}$,
$(\xi^{\ell}_k)_{k\geq0}$ and
$(\aa^{\ell}_k)_{k\geq0}$
satisfy similar assumptions
as their counterparts in the main text.

\paragraph{Other RL methods for solving MDPs.}
Like the latter adaptation to continuous time of
the approximate policy iteration method,
most of the RL algorithms using temporal difference
may be adapted using the current framework
in order to be more robust to vanishing time steps.
This includes in particular approximate
value iteration, Q-learning, SARSA, actor-critic methods
and others\dots.
The changes only consists in replacing any
temporal difference in an algorithm by
one of the two TD(0) algorithms presented here.

\begin{remark}
    For the model-based algorithm,
adding the variance-reduction
correction can only benefit to
the policy evaluation process.
This explains why we chose
to focus only on policy evaluation
algorithms like TD learning
in the present work.
Another reason for not considering
the policy improvement process
is that the necessary assumptions
for making its analysis
are different from the ones considered here.
Therefore, we believe that
dealing with the policy improvement process
in continuous time in a separate future contribution
will allow
a better understanding of each work and more flexibility
to extend our results.
\end{remark}

\section{The residual gradient method}
\label{sec:RG}
\subsection{Extensions of the main results to the residual gradient method}
\label{subsec:RG}
In this section, 
we state similar results
as Theorems
\ref{thm:simple_cvg}
and \ref{thm:averaged_cvg}
while replacing the TD(0) methods 
by residual gradient (RG) methods.
We want to insist that,
in Section \ref{sec:RG},
the notations
$\th^*$, $\th^*_{\mu}$,
$(\th_k)_{k\geq0}$ and $(\tht_k)_{k\geq0}$
stand for different quantities
than in the rest of the article.
This is due to the iterations
and limits
from RG methods being different from
the ones from TD(0).
Let us start by defining those
quantities here.
The standard and stochastic RG methods
write as
\begin{equation}
    \label{eq:RG}
    \tag{RG}
    \begin{aligned}
        \th_{k+1}
        &=
        \th_k
        -\aa_k\dd_{k}\nabla_{\th}\dd_k,
        \\
        \tht_{k+1}
        &=
        \tht_k
        -\aa_k\ddt_{k}\nabla_{\th}\ddt_k,
    \end{aligned}
\end{equation}
where $\aa_k$ is the learning rate.
We also introduce the regularised
standard and stochastic 
RG methods as
\begin{equation}
    \label{eq:muRG}
    \tag{$\mu$-RG}
    \begin{aligned}
        \th_{k+1}
        &=
        \Pi_{B_M}\lp
        \th_k
        -\textstyle{\frac{\aa_k}2}\nabla_{\th}
        (|\dd_k|^2
        +\mu|\th_k|^2)\rp,
        \\
        \tht_{k+1}
        &=
        \Pi_{B_M}\lp
        \tht_k
        -\textstyle{\frac{\aa_k}2}\nabla_{\th}
        (|\ddt_k|^2
        +\mu|\tht_k|^2)\rp,
    \end{aligned}
\end{equation}
where $\mu\geq0$ and 
$M$ is assumed to be a known
upper bound of $\th^*_{\mu}$
which is defined later as
the limit of the $\mu$-regularised method.
Let us define $F_{\mu}$ the RG cost,
\begin{equation*}
    F_{\mu}(\th)
    =
    \underbrace{\EE_X\lb(\cL v(X,\th)-r(X))^2\rb}_{\text{interesting term}}
    + \underbrace{\textstyle{\frac{1}2}\EE_X\lb\tr\lp(\ss\ss^{\top} D^2_xv(X,\th))^2\rp\rb}_{\text{bias from RG being a gradient method}}
    +\underbrace{\textstyle{\frac{\mu}2}|\th|^2}_{\text{bias from the regularisation}},
\end{equation*}
Observe that the third term in the definition of $F_{\mu}$
is a similar bias as
the one we have for TD(0),
it can be tuned in a similar
fashion as in Corollary \ref{cor:Non_Asymp}.
However, the second term is a bias coming from 
RG being a gradient method
(more details in Section \ref{sec:TDs})
and is more problematic since it cannot
be reduced, but if $\ss$ can be tuned.

We then define $\th^*_{\mu}$ as the eventual limit of RG
(which admits a double bias),
\begin{equation*}
    \th^*_{\mu}
    =
    \argmin_{\th\in\Theta}F_{\mu}(\th).
\end{equation*}
We define $F=F_0$
and $\th^*=\th^*_{0}$.
The following theorem is
the counterpart to RG of Theorem \ref{thm:simple_cvg},
it concerns the convergence rate of the regularised
stochastic RG method.
\begin{theorem}
    \label{thm:simple_cvg_RG}
    Assume
    \ref{hypo:v_lin},
    \ref{hypo:rb_reg},
    \ref{hypo:vp_reg},
    $\mu>0$,
    $\aa_k=\frac{2}{\mu(k+1)}$
    and
    $\Delta t_k\leq c/\sqrt{k+1}$,
    for some $c>0$ and
    for any $k\geq0$.
    The sequence
    $(\tht_k)_{k\geq0}$ is convergent,
    and there exists $C>0$
    such that, for $k\geq1$,
    \begin{equation*}
        \EE\lb\labs\tht_k-\th^*_{\mu}\rabs^2\rb
        \leq
        \frac{C}{\mu^2k}.
    \end{equation*}
\end{theorem}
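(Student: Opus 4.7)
The plan is to view \eqref{eq:muRG} as a projected stochastic gradient descent with biased gradient on a $\mu$-strongly convex objective, and to run the standard SGD recursion in the same style as the proof of Theorem~\ref{thm:simple_cvg}. Three ingredients are needed: strong convexity of the limiting objective, a quantitative bias bound on the noisy gradient, and a uniform second-moment bound on it.

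First I would check the convexity structure. Since $v(x,\th)=\th^\top\vp(x)$ is linear in $\th$, both $\cL v(\cdot,\th)$ and the entries of $D_x^2 v(\cdot,\th)$ depend linearly on $\th$. Hence the first two summands of $F_\mu$ are convex quadratic forms in $\th$, and $F_\mu$ is $\mu$-strongly convex; its unique minimiser $\th^*_\mu$ lies in $B_M$ by assumption, so projection onto $B_M$ is non-expansive with respect to $\th^*_\mu$.

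Next I would analyse the stochastic direction $g_k:=\ddt_k\nabla_\th\ddt_k + \mu\tht_k$. Since $\ddt_k$ is affine in $\th$, I can write $\ddt_k = a_k^\top\tht_k + c_k$ with $a_k,c_k$ depending only on the $k$-th observation, and then
\begin{equation*}
g_k = a_k a_k^\top \tht_k + c_k a_k + \mu\tht_k.
\end{equation*}
A stochastic Taylor expansion of the observation over the interval of length $\Delta t_k$, carried out exactly as in the proof of Proposition~\ref{prop:varred}, shows that $\EE[a_ka_k^\top]$ and $\EE[c_k a_k]$ admit finite limits as $\Delta t_k\to 0$ with residuals of size $O(\Delta t_k)$. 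Combined with \ref{eq:cvg_mk}, which replaces the law $m_k$ of $X_k$ by $m$ up to the same rate, this yields
\begin{equation*}
\EE[g_k\mid\tht_k] = \nabla F_\mu(\tht_k) + e_k(\tht_k),\qquad\sup_{\tht_k\in B_M}|e_k(\tht_k)|\le C\Delta t_k.
\end{equation*}
A parallel calculation, using the fourth moment of $X'_k-X_k$ together with boundedness of $r,b,\ss,\vp$ and their low-order derivatives and $\tht_k\in B_M$, gives $\EE[|g_k|^2\mid\tht_k]\le C$.

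Finally I would unroll the usual SGD recursion. Non-expansiveness of $\Pi_{B_M}$ and $\th^*_\mu\in B_M$ give
\begin{equation*}
\EE[|\tht_{k+1}-\th^*_\mu|^2\mid\tht_k] \le |\tht_k-\th^*_\mu|^2 - 2\aa_k\langle\nabla F_\mu(\tht_k),\tht_k-\th^*_\mu\rangle + 2\aa_k|e_k(\tht_k)||\tht_k-\th^*_\mu| + \aa_k^2\EE[|g_k|^2\mid\tht_k].
\end{equation*}
Using $\mu$-strong convexity on the inner product and Cauchy--Schwarz on the bias term, this becomes, with $u_k:=\EE|\tht_k-\th^*_\mu|^2$,
\begin{equation*}
u_{k+1} \le (1-2\mu\aa_k)u_k + C\aa_k\Delta t_k + C\aa_k^2.
\end{equation*}
For $\aa_k=2/(\mu(k+1))$ and $\Delta t_k\le c/\sqrt{k+1}$, the perturbation terms are $O(k^{-3/2})$ and $O(k^{-2})$ respectively, and a standard induction gives $u_k\le C/(\mu^2 k)$; convergence of $(\tht_k)$ in $L^2$ to $\th^*_\mu$ follows.

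The main obstacle is the bias bound on $\EE[g_k\mid\tht_k]$. The definition \eqref{eq:def_ddt} of $\ddt_k$ contains a $1/\Delta t_k$ prefactor together with the correction $Z$, so in both $a_ka_k^\top$ and $c_ka_k$ the naively $O(1/\Delta t_k)$ contributions must cancel to leading order, leaving a residual of exact order $\Delta t_k$. This is the same stochastic-Taylor calculation that underlies Proposition~\ref{prop:varred}, now applied to a product of two factors rather than to a single square, and it is precisely what makes the assumption $\Delta t_k\lesssim 1/\sqrt{k+1}$ sufficient, via $\aa_k\Delta t_k\lesssim k^{-3/2}$, to match the $\mu$-contraction.
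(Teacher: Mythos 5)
Your overall strategy is the same as the paper's: recast \eqref{eq:muRG} as a projected SGD with a biased gradient on the $\mu$-strongly convex objective $F_\mu$, establish a bias of order $\Delta t_k$ and a bounded second moment via the stochastic Taylor expansion of $\ddt_k$ (the paper packages this as Lemma \ref{lem:expand_eps} and a general SGD lemma, Theorem \ref{thm:cvg_SGD}), and then unroll the recursion. The identification of the bias cancellation (the $1/\Delta t_k$ and $1/\sqrt{\Delta t_k}$ contributions having zero conditional mean, by parity of the Gaussian) and the second-moment bound are fine as a sketch.

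However, there is a genuine quantitative gap in your last step. You bound the bias contribution by Cauchy--Schwarz alone, obtaining the recursion
\begin{equation*}
u_{k+1}\le(1-2\mu\aa_k)u_k+C\aa_k\Delta t_k+C\aa_k^2 ,
\end{equation*}
whose forcing term $C\aa_k\Delta t_k$ is of order $\mu^{-1}k^{-3/2}$. No induction can extract a $1/k$ rate from this: trying $u_k\le D/k$ forces $Ck^{-3/2}\le 3D k^{-2}$, i.e.\ $C\sqrt{k}\le 3D$, which fails for large $k$; the true rate of that recursion is only $O(\mu^{-1}k^{-1/2})$. The paper avoids this by applying a Young inequality to the bias term, writing
\begin{equation*}
2\aa_k\,|e_k(\tht_k)|\,|\tht_k-\th^*_\mu|
\le
\tfrac{\mu}{2}\aa_k|\tht_k-\th^*_\mu|^2
+\tfrac{2}{\mu}\aa_k|e_k(\tht_k)|^2 ,
\end{equation*}
which sacrifices half of the strong-convexity contraction but turns the forcing term into $C\mu^{-1}\aa_k\Delta t_k^2=O(\mu^{-2}k^{-2})$; summed against the contraction weights this gives the claimed $C/(\mu^2k)$. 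This is exactly where the hypothesis $\Delta t_k\le c/\sqrt{k+1}$ is calibrated: it makes the \emph{squared} bias, not the bias itself, summable at the right rate. Your proof becomes correct once this single inequality is inserted before the induction.
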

We refer to Section \ref{proof:simple_cvg_RG}
for the proof.

Unlike Theorem \ref{thm:simple_cvg},
the latter theorem does
not stand any convergence result
for the standard RG method.
This is because standard RG method
may only converge to a constant
function and not to a good estimator of $V$,
more details are given in Section
\ref{sec:TDs}.

Then, we state below 
the counterpart to RG of Theorem \ref{thm:averaged_cvg},
it concerns the convergence rate of the unregularised
RG method with constant learning step and an
averaging method.
\begin{theorem}
    \label{thm:averaged_cvg_RG}
    Assume
    \ref{hypo:v_lin},
    \ref{hypo:rb_reg}
    and
    \ref{hypo:vp_reg}.
    and that $\th^*$ is bounded.
    If $\sum_{i=0}^{\infty}\Delta t_i^2$ is finite,
    there exist $C,R>0$ such that,
    the following inequality holds for
    $\aa<R^{-2}$, $k\geq1$,
    \begin{align*}
        \EE\lb |\cL v(X,\bar{\th}_k)
        -\cL v(X,\th^*)|^2\rb
        &\leq
        \frac{C}{\aa k}
        +\frac{Cd}{k},
    \end{align*}
    where
    $\bar{\th}_k=\frac1k\sum_{i=0}^{k-1}\tht_i$, for $k\geq1$.

    If instead we assume that
    $\sum_{i=0}^{k-1}\Delta t_i^2\leq a\ln(1+k)$
    for some $a>0$
    for any $k\geq0$,
    then for any $\ee>0$
    there exists $C,R>0$ such that
    for $\aa<R^{-2}$, $k\geq0$,
    the latter inequalities are replaced
    with the following ones
    respectively
    \begin{align*}
        \EE\lb |\cL v(X,\bar{\th}_k)
        -\cL v(X,\th^*)|^2\rb
        &\leq
        \frac{C}{\aa k}
        +\frac{Cd}{k^{1-\ee}}.
    \end{align*}
\end{theorem}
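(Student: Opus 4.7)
The plan is to cast stochastic RG under \ref{hypo:v_lin} as constant-step SGD on a quadratic functional whose Hessian is \emph{symmetric}, and then apply a bias-tolerant variant of the Polyak--Juditsky analysis. First, write $\ddt_k = A_k^\top \th - y_k$, where $A_k \in \RR^{d_\th}$ and $y_k \in \RR$ depend only on the $k$-th observation $(\Delta t_k, X_k, X_k', R_k)$; then \eqref{eq:RG} is exactly $\tht_{k+1} = \tht_k - \aa\, A_k(A_k^\top \tht_k - y_k)$, i.e.\ constant-step SGD on $F_k(\th) := \frac12 \EE[(A_k^\top \th - y_k)^2]$ with Hessian $H_k := \EE[A_k A_k^\top] = H_k^\top$. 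By Proposition~\ref{prop:varred} and \eqref{eq:cvg_mk} applied to the smooth coefficients entering $A_k A_k^\top$ and $y_k A_k$, one has $H_k \to H = \nabla^2 F$ and $b_k := \EE[y_k A_k] \to b = H\th^*$ with $\|H_k - H\| + |b_k - b| \leq C\Delta t_k$. The symmetry of $H$ (it is a genuine covariance) is the structural reason why no $\tr(HH^{-\top})$ factor appears here, leaving only the clean $d/k$ term.

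Next I set $\eta_k := \tht_k - \th^*$ and decompose the iteration as
\begin{equation*}
\eta_{k+1} \;=\; (I - \aa H)\eta_k + \aa\,\xi_k + \aa\,\zeta_k,
\end{equation*}
with $\xi_k$ a martingale increment (centered given past iterates) and $\zeta_k := (H - H_k)\tht_k + (b_k - b)$ a bias induced by $\Delta t_k \neq 0$. A standard Lyapunov estimate on a quadratic under $\aa < R^{-2}$, with $R$ controlling the second moment of $A_k$, gives $\sup_k \EE|\eta_k|^2 < \infty$ and hence $\EE|\zeta_k|^2 \leq C\Delta t_k^2$. Telescoping as in \cite{bach2013non} and solving for the averaged iterate $\bar\eta_k := \frac{1}{k}\sum_{i<k}\eta_i$ yields $H\bar\eta_k = \frac{1}{\aa k}(\eta_0 - \eta_k) + \frac{1}{k}\sum_{i<k}(\xi_i + \zeta_i)$, and therefore
\begin{equation*}
\EE\bigl[\bar\eta_k^\top H \bar\eta_k\bigr] \;\leq\; \frac{C}{\aa k} \;+\; \frac{C}{k^2}\,\EE\Big|\textstyle\sum_{i<k}\xi_i\Big|_{H^{-1}}^2 \;+\; \frac{C}{k^2}\,\EE\Big|\textstyle\sum_{i<k}\zeta_i\Big|_{H^{-1}}^2.
\end{equation*}
A bias--variance computation controls the martingale sum by $Cd/k$ (the weighted trace reduces to $\tr(I) = d$ thanks to symmetry of $H$), and the bias sum is crudely bounded by $\frac{C}{k^2}\sum_{i<k}\Delta t_i^2$. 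Finally, since $M := \EE[\cL\vp\, \cL\vp^\top] \preceq C H$ (the extra term in $F$ is PSD), the target quantity $\EE|\cL v(X,\bar\th_k) - \cL v(X,\th^*)|^2 = \bar\eta_k^\top M \bar\eta_k$ is dominated by $\EE[\bar\eta_k^\top H \bar\eta_k]$.

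In the first case, the bias sum is $O(1/k^2)$ and is absorbed into $C/(\aa k)$, producing the stated rate. In the logarithmic case, the naive bound gives $C\log(k)/k^2$, which is also absorbable, but the real subtlety --- and the main obstacle --- is that $\zeta_i$ is \emph{not} independent of past iterates through its $\tht_i$ factor, so a naive application of the independent-sum variance bound is invalid. I would resolve this by a bootstrap: assume inductively that $\EE|\eta_k|^2 \leq \psi(k)$ for a candidate $\psi$, refine $\EE|\zeta_k|^2 \leq C\Delta t_k^2\,\psi(k)$, feed this back into the averaged bound to get a new $\psi$, and iterate until the self-consistent inequality saturates at $\psi(k) \sim k^{\ee - 1}$; the $\ee$ loss is precisely the price paid for this coupling. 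Apart from this bootstrap, the analysis is a strictly easier, symmetric instance of the proof of Theorem~\ref{thm:averaged_cvg}.
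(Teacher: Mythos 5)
Your high-level architecture is the same as the paper's: the paper proves this theorem by rerunning the proof of Theorem~\ref{thm:averaged_cvg} (the Polyak--Juditsky/averaged-SGD analysis with vanishing biases of order $\Delta t_k$), noting that for RG all the operators $\EE[A_kA_k^{\top}]$ are symmetric, which is exactly why $\tr(HH^{-\top})$ collapses to $d$; your identification of that mechanism, and of the final reduction $\EE|\cL v(X,\bar\th_k)-\cL v(X,\th^*)|^2=\bar\eta_k^{\top}M\bar\eta_k$ with $M\preceq CH$, is correct. However, there is a genuine gap at the central technical step. You claim that ``a standard Lyapunov estimate on a quadratic under $\aa<R^{-2}$ gives $\sup_k\EE|\eta_k|^2<\infty$'' and then feed this into the martingale bound $\tr\lp H^{-1}\EE[\xi_i\xi_i^{\top}]\rp\leq Cd$. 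Without strong convexity (and the constants here must not depend on the smallest eigenvalue of $H$, otherwise the whole point of the averaging result is lost), the scalar recursion for $\EE|\eta_k|^2$ only yields $\EE|\eta_{k+1}|^2\leq\EE|\eta_k|^2-c\,\aa\,\EE[\eta_k^{\top}H_k\eta_k]+C\aa^2$, whose additive $C\aa^2$ term accumulates linearly in $k$ unless one shows that the noise does not excite the near-flat directions of $H$. That is an operator-level statement (the noise covariance is dominated by $H$, and the bias covariance by $\Delta t_k^2 H$, cf.\ Lemma~\ref{lem:cov_vp}) which must be propagated through the full covariance matrix $\EE[\eta_k\otimes\eta_k]$, not through its trace alone. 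This is precisely what the paper's multi-level decomposition $\eta_k=\sum_r\eta_k^r$ with the geometric bounds $\EE[\eta_k^r\otimes\eta_k^r]\preceq 3C_k\aa^rR^{2r}I_d$ delivers in the first step of the proof of Theorem~\ref{thm:averaged_cvg}; your single-level decomposition skips it, and the claim it replaces is not ``standard.''

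Two smaller issues. First, the $\zeta_i$ are neither centered nor orthogonal, so $\EE\big|\sum_{i<k}\zeta_i\big|_{H^{-1}}^2$ is bounded by $\lp\sum_{i<k}(\EE|\zeta_i|^2_{H^{-1}})^{1/2}\rp^2$, not by $\sum_{i<k}\Delta t_i^2$; and bounding $|\cdot|_{H^{-1}}$ by $|\cdot|$ reintroduces $\lambda_{\min}(H)^{-1}$, so you need the operator domination $\EE[\zeta_i\zeta_i^{\top}]\preceq C\Delta t_i^2H$ here too. Second, your bootstrap for the logarithmic case misattributes the source of the $\ee$ loss: the iterates $\EE|\eta_k|^2$ do not decay like $k^{\ee-1}$ (they merely stay bounded, or grow like the factor $C_k=\exp(\ee\sum_{i<k}\Delta t_i^2)\sim k^{a\ee}$); the $k^{-(1-\ee)}$ rate comes from this growth factor multiplying the covariance bounds in the variance term, not from a self-consistent decay of the iterate second moment. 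With the multi-level covariance bounds in place, both points are resolved exactly as in the proof of Theorem~\ref{thm:averaged_cvg}, and the symmetric case is indeed easier, as you anticipated.
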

Observe that the left-hand sides in the latter
theorem are less convenient to work
with than the ones in Theorem \ref{thm:averaged_cvg};
this is another advantage in favor of
the semi-gradient method TD(0)
over real gradient methods like RG.

The proof of Theorem \ref{thm:averaged_cvg_RG}
is a simple adaptation of the ones
of Theorem \ref{thm:averaged_cvg}, it is even
simpler
since here all the linear operators 
are symmetric.

\subsection{Possible extensions of RG}
\label{subsec:alternatives_RG}
One important weakness of RG algorithm
is the presence of
the term
$\EE_X\lb\tr\lp(\ss\ss^{\top} D^2_xv(X,\th))^2\rp\rb$
in the definition of $F_{\mu}$.
In this section, we propose four alternatives
to remove it.
More precisely,
we will try to minimise
$\Ft_{\mu}$
instead of
$F_{\mu}$,
where 
$\Ft_{\mu}$ is
defined by
\begin{equation*}
    \Ft_{\mu}(\th)
    =
    \EE_X\lb(\cL v(X,\th)-r(X))^2\rb
    +\frac{\mu}2|\th|^2,
\end{equation*}
and $\th^*_{\mu}$
is now defined by 
$\th^*_{\mu}=\argmin \Ft^*_{\mu}$.
Similarly,
we take $\Ft=\Ft_0$
and $\th^*=\th^*_0$.

In particular, Theorems \ref{thm:simple_cvg_RG}
and \ref{thm:averaged_cvg_RG} hold with the following four
alternatives.
\\[.35cm]
\textbf{Multi-step RG.}
This algorithm consists in the following induction relation,
\begin{equation}
    \label{eq:MSRG}
    \tag{MS-RG}
    \tht_{k+1}
    =
    \tht_k
    -\textstyle{\frac{\aa_k}2}\nabla_{\th}
    \lp|\ddo_k|^2
    +\mu|\tht_k|^2\rp,
    \;
    \text{ with }
    \ddo_k
    =
    \frac1{n_k}\sum_{i=0}^{n_k-1}
    \ddt_{\Delta t_k}(X_{k,t_i},X_{k,t_{i+1}},\th_k),
\end{equation}
where
$X_{k,0}=X_k$
and
    $X_{k,t_{i+1}}
    =
    S_{\Delta t_k}(X_{k,t_i},\xi_{k,i})$,
for $0\leq i<n_k$
and $n_k\geq1$ a sequence converging to infinity.
The conclusions of Theorem \ref{thm:simple_cvg_RG} then hold
with
        $\aa_k=\frac{4}{\mu(k+1)}$,
        $n_k\geq c^{-1}\sqrt{k+1}$,
        $n_k\Delta t_k\leq c/\sqrt{k+1}$,
        and $\ss_k\leq ck^{-\frac18}$,
        for some $c>0$,
        for any $k\geq0$.
In particular the proofs or the counterparts
to the multistep setting
of Theorems 
\ref{thm:simple_cvg_RG}
and \ref{thm:averaged_cvg_RG},
are similar to the originals
but using
Lemma \ref{lem:expand_eps_n_sq} below
instead of Lemma \ref{lem:expand_eps}.
\\[.35cm]
\textbf{Vanishing viscosities.}
This algorithm consists in the following induction relation,
\begin{equation}
    \label{eq:ssRG}
    \tag{$\ss$RG}
    \tht_{k+1}
    =
    \tht_k
    -\textstyle{\frac{\aa_k}2}\nabla_{\th}
    \lp|\ddt_k|^2
    +\mu|\tht_k|^2\rp,
    \;
    \text{ with }
    \ddt_k
    =
    \ddt_{\ss_k,\Delta t_k}(X_k,\xi_k,\th_k),
\end{equation}
where $\ddt_{\ss_k,\Delta t_k}$ 
is $\ddt_{\Delta t_k}$
where $\ss$ has been replaced by $\ss_k$
in
the dynamics
and
\eqref{eq:def_ddt}.
Here, we assume that we may choose the intensity of
the noise, which is only possible when the noise
have been added artificially to a deterministic problem
(see class \ref{class:art}),
which is in general interesting for the three
following reasons: allowing exploration,
having regular continuous-time solutions
and having full-supported invariant measures
of the dynamics.

The conclusions of Theorem \ref{thm:simple_cvg_RG} then hold
with
        $\aa_k=\frac{4}{\mu(k+1)}$,
        $\Delta t_k\leq c/\sqrt{k+1}$,
        and $\ss_k\leq ck^{-\frac18}$,
        for $k\geq0$.
\\[.35cm]
\textbf{Using mini-batches.}
Another alternative consists in
using mini-batches,
i.e.,
\begin{equation}
    \label{eq:MBRG}
    \tag{MB-RG}
    \th_{k+1}
    =
    \th_k
    -\textstyle{\frac{\aa_k}2}\nabla_{\th}
    \lp|\ddo_k|^2
    +\mu|\tht_k|^2\rp,
    \;
    \text{ with }
    \ddo_k
    = 
    \frac1{N_k}
    \sum_{i=1}^{N_k}
    \ddt_{\Delta t_k}(X_k,\xi_{k,i},\th_k),
\end{equation}
where $(N_k)_{k\geq0}$ are the size of the mini-batches.
The conclusions of Theorem \ref{thm:simple_cvg_RG} then hold
with
        $\aa_k=\frac{4}{\mu(k+1)}$,
        $\Delta t_k\leq c/\sqrt{k+1}$,
        and $N_k\geq c^{-1}\sqrt{k}$,
        for $k\geq0$.
\\[.35cm]
\textbf{Changing the law of the noise.}
Note that the perturbating term
from \eqref{eq:mean_sq_TD}
comes from the variance of 
a term involving
$\xi_k^{\top}D^2_xv\xi_k-\Delta_xv$.
Let us make the simple observation
that, in dimension $d=1$,
the latter expression is null
if $\xi_k$ is a Rademacher random variable.
This argument can  be generalised
to dimension $d>1$.
Since $D^2v(X_k,\th_k)$ is symmetric,
we can find $P$ an orthogonal matrix and
$D$ a diagonal matrix such that
$D^2_xv(X_k,\th_k)=P^{\top}DP$.
Therefore, it we can take
$\xi_k = P^{\top}\zeta_k$
where $\zeta_k$ is a random vector,
each of its coordinate being an
independent Rademacher random variable.

Using Donsker's theorem
\cite{donsker1951invariance},
the random process at the limit is still
a Brownian motion even if the increments
before convergence are not Gaussian anymore.
However, the weak convergence of the sequence
$(m_k)_{k\geq0}$ is slower here:
$\Delta t_k$ is replaced by $\Delta t_k^{\frac12}$
(this is a consequence of
the central limit theorem).
The conclusions of Theorem \ref{thm:simple_cvg_RG} then hold
with
        $\aa_k=\frac{4}{\mu(k+1)}$
        and
        $\Delta t_k\leq c/(k+1)$,
        for $k\geq0$.

    \section{Proof of the main results}
    \label{proof:cumrew}

        Here, $C$ is a constant that can change
        from line to line and is independent from
        $(\aa_k)_{k\geq0}$, $(\th_k)_{k\geq0}$ and $\mu$.

        We will also make the simplification assumption
        that $\ss$ is always a constant positive number,
        only to simplifies the notations.
        No additional difficulties (like regularity)
        come from such a simplification
        since the learnt function $v$ is assumed to be regular.

    \subsection{Proof of Theorem \ref{thm:simple_cvg}}
    \label{proof:simple_cvg}
    
    In order to prove Theorem \ref{thm:simple_cvg},
    we will need the following theorem first.
    \begin{theorem}
        \label{thm:TD0_stoch}
        Let $A\in\RR^{d\times d}$ be a square matrix
        such that $A+A^{\top}\geq 2\mu I_d$ for some $\mu>0$,
        and $b,\th^*\in\RR^d$ such that
        $A\th^*=b$ and $|\th^*|\leq M$ for some $M\geq0$.
        For $\th_0\in\Theta$,
        define the sequence $(\th_k)_{k\geq0}$
        by induction as
        \begin{equation*}
            \th_{k+1}
            =
            \Pi_{B(0,M)}\lp\th_k-\aa_kg_k\rp,
        \end{equation*}
        for $k\geq0$,
        where 
        $\aa_k>0$
        is convergent to zero and $\sum_{k\geq0}\aa_k=\infty$,
        $\labs\EE\lb g_k|\th_k\rb-A\th_k+b\rabs \leq (1+|\th_k|)\ee_k$,
        $\ee_k\geq0$ is convergent to zero,
        and $\EE\lb|g_k|^2\big|\th_k\rb\leq c_k(1+|\th_k|^2)$
        for some $c_k\geq0$.
        Then $(\th_k)_{k\geq0}$ is convergent in expectation
        to $\th^*$ and
        \begin{equation*}
            \EE\lb|\th_k-\th^*|^2\rb
            \leq
            4M^2e^{-\mu\sum_{i=0}^{k-1}\aa_i}
            +C(1+M^2)
            \sum_{i=0}^{k-1}
            \aa_i\lp\aa_i+\mu^{-1}\ee_i^2\rp
            e^{-\mu\sum_{j=i+1}^{k-1}\aa_j}.
        \end{equation*}

    \end{theorem}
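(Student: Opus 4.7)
\textbf{Proof plan for Theorem \ref{thm:TD0_stoch}.}

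The plan is to set up a standard one-step recursion for $u_k := \EE[|\th_k-\th^*|^2]$ and then iterate it. Since $\th^* \in B(0,M)$ and the projection onto a convex set is nonexpansive, we have $|\th_{k+1}-\th^*|^2 \leq |\th_k-\aa_k g_k - \th^*|^2$. Expanding the square and taking conditional expectation with respect to $\th_k$ yields
\begin{equation*}
\EE[|\th_{k+1}-\th^*|^2 \mid \th_k] \leq |\th_k-\th^*|^2 - 2\aa_k \langle \EE[g_k\mid\th_k], \th_k-\th^*\rangle + \aa_k^2 \EE[|g_k|^2\mid\th_k].
\end{equation*}

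For $k\geq 1$ we have $|\th_k|\leq M$ (from the projection), so $|\th_k-\th^*|\leq 2M$ and, writing $\EE[g_k\mid\th_k] = A\th_k - b + r_k$ with $|r_k|\leq (1+M)\ee_k$, I would use $A\th^* = b$ to get
\begin{equation*}
\langle \EE[g_k\mid\th_k], \th_k-\th^*\rangle = \langle A(\th_k-\th^*), \th_k-\th^*\rangle + \langle r_k, \th_k-\th^*\rangle.
\end{equation*}
The hypothesis $A+A^{\top}\geq 2\mu I_d$ gives $\langle A(\th_k-\th^*), \th_k-\th^*\rangle \geq \mu|\th_k-\th^*|^2$, and Young's inequality bounds $|\langle r_k,\th_k-\th^*\rangle|$ by $\tfrac{\mu}{2}|\th_k-\th^*|^2 + \tfrac{(1+M)^2\ee_k^2}{2\mu}$. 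The variance term is handled directly: $\EE[|g_k|^2\mid\th_k]\leq c_k(1+M^2)$, which I will absorb into $C(1+M^2)$ (assuming, as the form of the conclusion requires, that $c_k$ is uniformly bounded).

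Combining these bounds and taking expectation produces the recursion
\begin{equation*}
u_{k+1} \leq (1-\mu\aa_k)\,u_k + C(1+M^2)\aa_k\bigl(\aa_k + \mu^{-1}\ee_k^2\bigr).
\end{equation*}
I would then iterate this recursion from $k=0$ (using $u_0\leq 4M^2$, up to an initial projection step if $|\th_0|>M$) and apply the standard inequality $1-\mu\aa_j\leq e^{-\mu\aa_j}$ to turn the product of contraction factors into the exponential $e^{-\mu\sum_j \aa_j}$ appearing in the statement. This directly gives the claimed non-asymptotic bound; almost-sure/$L^2$ convergence to $\th^*$ then follows because $\aa_k, \ee_k \to 0$ and $\sum \aa_k = \infty$, so the exponential term vanishes and the weighted sum (of order $\max_i(\aa_i+\mu^{-1}\ee_i^2)$) vanishes as well by a Kronecker-type argument.

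The main obstacle is really just bookkeeping: ensuring that $\aa_k$ is eventually small enough that $1-\mu\aa_k \geq 0$ (otherwise the one-step recursion must be used in the form $u_{k+1}\leq |1-\mu\aa_k| u_k + \ldots$, with only terms after some index $k_0$ contributing to the contraction), and verifying that the Young-inequality split still yields the exact constants written in the theorem. The strong-monotonicity step is the one genuine use of the hypothesis on $A$; everything else is standard stochastic approximation with a biased estimator.
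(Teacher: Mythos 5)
Your proposal follows essentially the same route as the paper's proof: nonexpansiveness of the projection, the one-step expansion with conditioning on $\th_k$, the use of $A\th^*=b$ together with $A+A^{\top}\geq 2\mu I_d$, a Young inequality on the bias term, and iteration of the resulting recursion via $1-\mu\aa_k\leq e^{-\mu\aa_k}$ (which holds for all real arguments and for $y_k\geq0$, so the sign bookkeeping you worry about is not actually needed). The one caveat you correctly flag — that the stated conclusion absorbs $\sup_k c_k$ into $C$, hence implicitly assumes $c_k$ uniformly bounded — is also present in the paper, whose intermediate bound keeps $c_i\aa_i$ before dropping $c_i$ in the final display.
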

    \begin{proof}
        Up to starting the
        iterative algorithm from $\th_1$ instead of $\th_0$,
        we may assume that $|\th_k|\leq M$ for every $k\geq0$.
        For $k\geq0$,
        let us denote $y_k=\EE\lb|\th_k-\th^*|^2\rb$.
        We recall that
        $|\Pi_{B(0,M)}(\th)-\th^*|\leq|\th-\th^*|$
        for any $\th\in\Theta$,
        since $\th^*\in B(0,M)$.
        This and the induction relation satisfied
        by $\th_k$, imply
        \begin{align*}
            y_{k+1}
            &=
            \EE\lb\labs\Pi_B\lp\th_k-\aa_kg_k\rp-\th^*\rabs^2\rb
            \\
            &\leq
            \EE\lb\labs\th_k-\th^*-\aa_kg_k\rabs^2\rb
            \\
            &\leq
            y_k
            -2\aa_k\EE\lb(\th_k-\th^*)^{\top}g_k\rb
            +\aa_k^2\EE\lb \labs g_k\rabs^2\rb
            \\
            &\leq
            y_k
            -2\aa_k\EE\lb(\th_k-\th^*)^{\top}\EE\lb g_k|\th_k\rb\rb
            +\aa_k^2\EE\lb\EE\lb \labs g_k\rabs^2|\th_k\rb\rb
            \\
            &\leq
            y_k
            -2\aa_k\EE\lb(\th_k-\th^*)^{\top}(A\th_k-b)\rb
            +2\aa_k\ee_k\EE\lb|\th_k-\th^*|(1+|\th_k|)\rb
            +c_k\aa_k^2\EE\lb(1+|\th_k|^2)\rb
            \\
            &\leq
            y_k
            -\aa_k\EE\lb(\th_k-\th^*)^{\top}(A+A^{\top})(\th_k-\th^*)\rb
            +\mu\aa_k\EE\lb|\th_k-\th^*|^2\rb
            +2(1+M^2)\mu^{-1}\aa_k\ee_k^2
            +c_k(1+M^2)\aa_k^2
            \\
            &\leq
            (1-\mu\aa_k)y_k
            +(1+M^2)\aa_k\lp C\mu^{-1}\ee_k^2+c_k\aa_k\rp
            \\
            &\leq
            e^{-\mu\aa_k}y_k
            +(1+M^2)\aa_k\lp C\mu^{-1}\ee_k^2+c_k\aa_k\rp.
        \end{align*}
        where we used a Young inequality to get to the fifth line.
        Therefore, we obtain,
        \begin{equation*}
            y_k
            \leq
            e^{-\mu\sum_{i=0}^{k-1}\aa_i}
            y_0
            +C(1+M^2)
            \sum_{i=0}^{k-1}
            \aa_i\lp c_i\aa_i+\mu^{-1}\ee_i^2\rp
            e^{-\mu\sum_{j=i+1}^{k-1}\aa_j},
        \end{equation*}
        which leads to the desired inequality
        using $y_0\leq (|\th_0|+|\th^*|)^2\leq4M^2$.
    \end{proof}

    \begin{proof}[Proof of Theorem \ref{thm:simple_cvg}]
        The proof only consists in checking that we can
        apply Theorem \ref{thm:TD0_stoch}.
        Let us start by the proof for stochastic TD(0).
        Using the same notation as in Theorem
        \ref{thm:TD0_stoch}, we define,
        \begin{equation*}
            A
            =
            \EE\lb\vp(X)\cL(X)\rb+\mu I_d,
            \quad
            b
            =
            \EE\lb r(X)\vp(X)\rb,
            \;\text{ and }
            g_k
            =
            \ddt_k\vp(X_k)
            +\mu\th_k.
        \end{equation*}
        Then, we get
        \begin{align*}
            \EE\lb g_k|\th_k\rb
            &=
            \EE\lb\vp(X_k)\lp \cL\vp(X_k)
            +R_{0,k}
            +\Delta t_k^{\frac12}R_{1,k}
            +\Delta t_kR_{2,k}\rp\rb\th_k
            +\mu\th_k
            +\EE\lb\vp(X_k)r(X_k)\rb
            \\
            &=
            \EE\lb\vp(X_k)\cL\vp(X_k)\rb\th_k
            +\mu\th_k
            +\EE\lb\vp(X_k)r(X_k)\rb
            +\Delta t_k\EE\lb \vp(X_k) R_{2,k}^{\top}\rb\th_k,
        \end{align*}
        where
        $R_{0,k}=R_0(X_k,\xi_k)$,
        $R_{1,k}=R_1(X_k,\xi_k)$ and
        $R_{2,k}=R_2(\Delta t_k,X_k,\xi_k)$
        are given in Lemma \ref{lem:expand_eps}.
        From \eqref{eq:cvg_mk},
        we get
        \begin{equation*}
            \labs\EE\lb\vp(X_k)\cL\vp(X_k)\rb
            -A\rabs
            \leq
            C,
            \;\text{ and }
            \labs\EE\lb\vp(X_k)r(X_k)\rb -b\rabs
            \leq
            C.
        \end{equation*}
        Therefore, we obtain
        $\labs\EE\lb g_k|\th_k\rb-A\th_k-b\rabs
        \leq C(1+|\th_k|)\ee_k$ with $\ee_k=\Delta t_k$.
        The fact that $\EE\lb|g_k|^2\big|\th_k\rb\leq c_k(1+|\th_k|^2)$,
        with $c_k=C$ being independent of $k$,
        is straightforward. 
        Finally, $A+A^{\top}\geq2\mu I_d$
        comes from Lemma \ref{lem:HSA}.
        Theorem \ref{thm:TD0_stoch}
        and the inequalities $|\th^*|\leq C\mu^{-1}$
        and $\displaystyle{\exp\lp-\sum_{j=i+1}^k\frac1j\rp}
        \leq i/k$ for $k> i\geq0$,
        conclude the proof for stochastic TD(0).

        For standard TD(0),
        the proof is similar but we have to change $c_k$
        into $c_k=C(\Delta t_k)^{-1}$.
    \end{proof}

\subsection{Proof of Theorem \ref{thm:averaged_cvg}}
    \label{proof:averaged_cvg}

We start with the following definitions,
\begin{align*}
    S
    &=
    \rho\EE\lb
    \vp(X)\vp(X)^{\top}\rb
    +\frac{\ss^2}2\EE\lb
    D_x\vp(X)D_x\vp(X)^{\top}\rb
    \\
    A
    &=
    \EE\lb
    \vp(X)\lp\frac{\ss^2}2\nabla_x\ln(m)
    +b\rp D_x\vp(X)^{\top}\rb
    \\
    H(x)
    &=
    \vp(x)\cL\vp(x)^{\top}
    \\
    H_k(x)
    &=
    H(x)
    +\EE\lb H(X)-H(X_k)\rb
    \\
    H
    &=
    \EE\lb H(X)\rb.
\end{align*}

\begin{proof}[Proof of Theorem \ref{thm:averaged_cvg}]

    Here, $C>0$ stands for a generic constant
    which value may change from line
    to line,
    it depends on the constants in the assumptions
    and is independent of $k$, of the smallest
    eigenvalue of $S$ and of~$\aa$.

    Using Lemma \ref{lem:expand_eps},
    we get
    \begin{equation*}
        \th_{k+1}
        =
        \th_k
        -\aa \vp(X_k)
        \lp
        \cL\vp(X_k)
        +R_0(X_k,\xi_k)
        +\Delta t_k^{\frac12}R_1(X_k,\xi_k)
        +\Delta t_kR_2(\Delta t_k,X_k,\xi_k)
        \rp^{\top}
        \th_k
        +\aa\vp(X_k)r(X_k),
    \end{equation*}
    where $R_0(x,\xi)^{\top}\th
    =\frac{\ss^2}2\lp
    \xi^{\top}D^2_xv(x,\th)\xi
    -\Delta_x v(x,\th)\rp$,
    and $R_1$ and $R_2$
    can be red in Lemma \ref{lem:expand_eps},
    and we get
    $\EE_{\xi}[R_0(x,\xi)]=\EE[R_1(x,\xi)]=0$.
    Take $\eta_k=\th_k-\th^*$,
    it satisfies the following
    induction relation,
    \begin{equation*}
        \eta_{k+1}
        =
        \lp I_d-\aa H_k(X_k)\rp\eta_k
        -\aa\lp H_k(X_k)\th^*+\vp(X_k)r(X_k)\rp
        -\aa\lp H-\EE[H(X_k)]
        +\Delta t_k\vp(X_k)R_{2,k}^{\top}\rp(\eta_k+\th^*),
    \end{equation*}
    where $H_k(x)=\vp(x)(\cL\vp(x)
    +R_{0,k}+\Delta t_k^{\frac12}R_{1,k})^{\top}
    +H-\EE[H(X_k)]$,
    in particular $\EE[H_k(X_k)]=H$.
    One may easily check that $\eta_k$ can be rewritten
    as $\eta_k =\sum_{r=0}^{k-1}\eta_k^r$,
    where $\eta_k^r$ is defined by,
    \begin{equation}
        \label{eq:def_etar}
        \begin{aligned}
        \eta_{k+1}^r
        &=
        (I_d-\aa H)\eta_{k}^r
        +\chi^r_k
        +\Delta t_k\psi^r_k,
        \\
        \eta_0^0
        &=
        \eta_0,
        \quad
        \quad
        \quad
        \eta_0^r
        =
        0
        \text{ if }
        r\geq1,
    \end{aligned}
    \end{equation}
    where $\chi^r_k$ and $\psi^r_k$ are
    defined by
    \begin{equation}
    \begin{aligned}
        \chi_k^0
        &=
        \aa(H-H_k(X_k))\th^*
        +\aa\lp
        \vp(X_k)r(X_k)
        -\EE\lb\vp(X_k)r(X_k)\rb
        \rp,
        \\
        \psi_k^0
        &=
        \aa\Delta t_k^{-1}
        \lp\EE[H(X_k)]-H\rp\th^*
        +\aa\Delta t_k^{-1}
        \EE[\vp(X_k)r(X_k)-\vp(X)r(X)]
        -\aa\vp(X_k)R_{2,k}^{\top}\th^*,
        \\
        \chi_k^{r+1}
        &=
        \aa(H-H_k(X_k))\eta^{r}_k,
        \\
        \psi_k^{r+1}
        &=
        \aa\lp\Delta t_k^{-1}
        (\EE[H(X_k)]-H)
        -\vp(X_k)R_{2,k}^{\top}\rp\eta^{r}_k,
    \end{aligned}
    \end{equation}
    where we used that $\EE\lb\vp(X)\cL v(X,\th^*)\rb=0$
    to get the second line.
    One may notice that $\eta^r_k=0$
    if $r\geq k$.

    \emph{First step: getting bounds on the covariance matrices of $\chi^k_r$ and $\psi^k_r$.}
    Here, we prove by induction on $r$ and $k$ that
    \begin{align*}
        \EE\lb\eta_k^r\otimes\eta_k^r\rb
        &\leq
        3C_k\aa^{r}R^{2r}I_d,
        \\
        \EE\lb\chi_k^r\otimes\chi_k^r\rb
        &\leq
        C_k\aa^{\max(r+1,2)}R^{2r}S,
        \\
        \EE\lb\psi_k^r\otimes\psi_k^r\rb
        &\leq
        \ee
        C_k\aa^{\max(r+1,2)}R^{2r}S,
    \end{align*}
    where
    $R^2=3\Ct\lp
    \norminf{\cL\vp+\EE[ R_0(\cdot,\xi_0)]}
    +\Delta t_0^{\frac12}\norminf{R_1(\cdot,\xi)]}
    +2\ee^{-1}\sup_{k\geq0}
    \norminf{R_2(\Delta t_k,\cdot,\xi)]}+2\ee^{-1}\rp$,
    $0<\ee<\Delta t_0^{-2}$ is a constant that will be defined later,
    $\Ct$ is the constant from Lemma \ref{lem:cov_vp}
    and
    $C_k=\lp|\th^*|^2+\eta_0^{\top}S\eta_0\rp\exp(\ee\sum_{i=0}^{k-1}\Delta t_i^2)$.

    For $k\geq0$, and $r\geq1$,
    let us prove the results for $(k+1,r)$
    while assuming that it
    holds for $(k,r)$, $(k,r-1)$
    and $(k+1,r-1)$.
    For $b_k=\ee\Delta t_k^{2}$,
    we get from \eqref{eq:def_etar}
    and \eqref{eq:bound_I_gH1},
    \begin{align*}
        \EE\lb\eta^r_{k+1}\otimes\eta^r_{k+1}\rb
        &\leq
        \lp1+b_k\rp\EE\lb(I_d-\aa H)\eta^r_k\otimes\eta^r_k
        (I_d-\aa H^{\top})\rb
        +\EE\lb\chi^r_k\otimes\chi^r_k\rb
        +\Delta t_k^2(1+b_k^{-1})\EE\lb\psi^r_k\otimes\psi^r_k\rb
        \\
        &\leq
        3C_k\aa^{r}R^{2r}
        \lp1+b_k\rp(I_d-\aa H)
        (I_d-\aa H^{\top})
        +C_k\aa^{r+1}R^{2r}S
        +\ee C_k\Delta t_k^2\aa^{r+1}R^{2r}(1+b_k^{-1})S
        \\
        &\leq
        3C_k\aa^rR^{2r}
        (1+\ee\Delta t_k^2)(I_d-\aa S)
        +\aa^{r+1}R^{2r}C_k(2+\ee\Delta t_k^2) S
        \\
        &\leq
        3C_k\aa^rR^{2r}(1+\ee\Delta t_k^2)I_d
        \leq
        3C_ke^{\ee\Delta t_k^2}\aa^rR^{2r}I_d
        =
        3C_{k+1}\aa^rR^{2r}I_d.
    \end{align*}
    Then, concerning $\chi^r_{k+1}$,
    using Lemma \ref{lem:cov_vp}, we get
    \begin{align*}
        \EE\lb\chi^r_{k+1}\otimes\chi^r_{k+1}\rb
        &\leq
        3C_{k+1}\aa^{r+1}R^{2r-2}
        \EE\lb (H-H_k(X_k))(H-H_k(X_k))^{\top}\rb
        \\
        &\leq
        3C_{k+1}\aa^{r+1}R^{2r-2}
        \EE\lb H_k(X_k)H_k(X_k)^{\top}\rb
        \\
        &\leq
        3C_{k+1}\aa^{r+1}R^{2r-2}
        \norminf{\cL\vp+\EE[ R_0(\cdot,\xi_k)+\Delta t_k^{\frac12}R_1(\cdot,\xi^k)]}
        \EE\lb \vp(X_k)\otimes\vp(X_k)^{\top}\rb
        \\
        &\leq
        C_{k+1}\aa^{r+1}R^{2r}S.
    \end{align*}
    Finally, using Lemma \ref{lem:cov_vp} once again for $\psi^r_{k+1}$, we get,
    \begin{align*}
        \EE\lb\psi^r_{k+1}\otimes\psi^r_{k+1}\rb
        &\leq
        6C_{k+1}\aa^{r+1}R^{2r-2}
        \lp
        \Delta t_k^{-2}(\EE[H(X_k)]-H)(\EE[H(X_k)]-H)^{\top}
        +\EE\lb|R_{2,k}|^2\vp(X_k)\otimes\vp(X_k)\rb
        \rp
        \\
        &\leq
        \ee C_{k+1}\aa^{r+1}R^{2r}S.
    \end{align*}
    It remains to prove the inequalities
    for $k=0$ and $r=0$.
    Concerning $r=0$, the proof is similar
    but we use the boundedness of $\th^*$ and $r$
    instead of the induction assumption.
    Then $k=0$ and $r\geq1$ is straightforward
    since $\eta^r_0=\chi^r_0=\psi^r_0=0$.
    
    \emph{Second step: getting a bound on $\EE\lb\lp\bar{\eta}_k^r\rp^{\top}S\bar{\eta}_k^r\rb$}.
    Namely, we will prove that
    \begin{equation*}
        \EE\lb\lp\bar{\eta}_k^r\rp^{\top}S\bar{\eta}_k^r\rb
        \leq
        \frac{C\aa^{\max(r-1,0)}R^{2r}}k
        \tr(I_d+H^{-\top}H)
        \lp
        \frac1k\sum_{i=0}^{k-1}C_i
        +\frac1k\lp\sum_{i=0}^{k-1}\Delta t_iC_i^{\frac12}\rp
        +\ddt_{r=0}\aa^{-1}\rp,
    \end{equation*}
    for some constant $C>0$.
    First, we notice that
    \begin{align*}
        \eta_k^r
        &=
        \lp I_d-\aa H\rp^{k-1}\eta_0^r
        +\sum_{i=0}^{k-1}\lp I_d-\aa H\rp^{k-1-i}
        \lp \chi^r_i+\Delta t_i\psi^r_i\rp
        \\
        \bar{\eta}_k^r
        &=
        \frac1{\aa k}H^{-1}\lp I_d-\lp I_d-\aa H\rp^{k}\rp\eta_0^r
        +\frac1{\aa k}\sum_{i=0}^{k-1}
        \lp I_d -\lp I_d-\aa H\rp^{k-i}\rp H^{-1}
        \lp \chi^r_i+\Delta t_i\psi^r_i\rp,
    \end{align*}
    this and the fact that $\EE\lb\chi_i^r\rb=0$ imply 
    \begin{multline*}
        \EE\lb\lp\bar{\eta}_k^r\rp^{\top}S\bar{\eta}_k^r\rb
        \leq
        \frac3{\aa^2k^2}
        (\eta_0^r)^{\top}
        \lp I_d-\lp I_d-\aa H\rp^{k}\rp^{\top}H^{-\top}S
        H^{-1}\lp I_d-\lp I_d-\aa H\rp^{k}\rp\eta_0^r
        \\
        +\frac3{\aa^2k^2}
        \sum_{i=0}^{k-1}
        \EE\lb(\chi_i^r)^{\top}
        \lp I_d -( I_d-\aa H^{\top})^{k-i}\rp
        H^{-\top}SH^{-1}
        \lp I_d -\lp I_d-\aa H\rp^{k-i}\rp(\chi_i)^r\rb
        \\
        +\frac{3}{\aa^2k^2}
        \sum_{0\leq i,j\leq k-1}
        \Delta t_i\Delta t_j
        \EE\lb(\psi_i^r)^{\top}
        \lp I_d -( I_d-\aa H^{\top})^{k-i}\rp
        H^{-\top}SH^{-1}
        \lp I_d -\lp I_d-\aa H\rp^{k-j}\rp\psi_j^r\rb.
    \end{multline*}
    Let us define $I^r_{k,0}$, $I^r_{k,1}$ and $I^r_{k,2}$
    as the first, second  and third term, respectively,
    in the right-hand side of the latter inequality.
    One may notice that $I^r_{k,0}=0$ if $r\geq1$.
    Then concerning, $I^0_{k,0}$, we get
    \begin{align*}
        I^0_{k,0}
        &=
        \frac3{2\aa^2k^2}
        \eta_0^{\top}
        \lp I_d -( I_d-\aa H^{\top})^{k}\rp
        \lp H^{-\top}+H^{-1}\rp
        \lp I_d -\lp I_d-\aa H\rp^{k}\rp\eta_0
        \\
        &\leq
        \frac{C}{\aa^2k}\eta_0^{\top}\eta_0
        \leq
        \frac{C}{\aa^2k},
    \end{align*}
    where we used \eqref{eq:bound_I_gH4} to obtain
    the last line.
    Then let us pass to $I^r_{k,1}$,
    \begin{align*}
        I^r_{k,1}
        &=
        \frac3{2\aa^2k^2}
        \sum_{i=0}^{k-1}
        \EE\lb(\chi^r_i)^{\top}
        \lp I_d -( I_d-\aa H^{\top})^{k-i}\rp
        \lp H^{-\top}+H^{-1}\rp
        \lp I_d -\lp I_d-\aa H\rp^{k-i}\rp\chi^r_i\rb
        \\
        &=
        \frac3{2\aa^2k^2}
        \tr
        \sum_{i=0}^{k-1}
        \lp I_d -( I_d-\aa H)^{k-i}\rp
        \EE\lb\chi^r_i\otimes\chi^r_i\rb
        \lp I_d -( I_d-\aa H^{\top})^{k-i}\rp(H^{-\top}+H^{-1})
        \\
        &\leq
        \frac{3\aa^{\max(r-1,0)}R^{2r}}{2k^2}
        \tr
        \sum_{i=0}^{k-1}
        C_i
        \lp I_d -( I_d-\aa H)^{k-i}\rp S
        \lp I_d -( I_d-\aa H^{\top})^{k-i}\rp(H^{-\top}+H^{-1})
        \\
        &=
        \frac{3\aa^{\max(r-1,0)}R^{2r}}{4k^2}
        \tr
        \sum_{i=0}^{k-1}
        C_i
        \lp I_d -( I_d-\aa H^{\top})^{k-i}\rp
        \lp I_d -( I_d-\aa H)^{k-i}\rp(2I_d+HH^{-\top}+H^{-1}H^{\top})
        \\
        &\leq
        \frac{C\aa^{\max(r-1,0)}R^{2r}}{k^2}
        \tr(I_d+HH^{-\top})
        \sum_{i=0}^{k-1}C_i.
    \end{align*}
    Then, concerning $I^r_{k,2}$, using the triangular inequality,
    we get
    \begin{align*}
        I^r_{k,2}
        &\leq
        \frac{3}{2\aa^2k^2}
        \lp
        \sum_{i=0}^{k-1}
        \Delta t_i
        \EE\lb(\psi^r_i)^{\top}
        \lp I_d -( I_d-\aa H^{\top})^{k-i}\rp
        (H^{-\top}+H^{-1})
        \lp I_d -\lp I_d-\aa H\rp^{k-i}\rp\psi^r_i\rb^{\frac12}\rp^2
        \\
        &\leq
        \frac{C\aa^{\max(r-1,0)}R^{2r}}{2k^2}
        \lp\sum_{i=0}^{k-1}
        \Delta t_i
        \lb C_i
        \tr\lp I_d+H^{-\top}H\rp\rb^{\frac12}\rp^2
        \\
        &=
        \frac{C\aa^{\max(r-1,0)}R^{2r}}{2k^2}
        \tr\lp I_d+H^{-\top}H\rp
        \lp\sum_{i=0}^{k-1}\Delta t_iC_i^{\frac12}\rp^2,
    \end{align*}
    where we obtained the second line
    with similar arguments as in the calculus
    of the bound of $I^r_{k,1}$ above.
        
    \emph{Third step: getting the desired bound.}
    Using the triangular inequality on the norm induced by $S$,
    we obtain
    \begin{align*}
        \EE\lb(\bar{\eta}_k)^{\top}
        S
        \bar{\eta}_k\rb
        &\leq
        \lp
        \sum_{r=0}^{k-1}
        \EE\lb(\bar{\eta}^r_k)^{\top}
        S
        \bar{\eta}^r_k\rb^{\frac12}
        \rp^2
        \\
        &\leq
        2\EE\lb(\bar{\eta}^r_0)^{\top}
        S
        \bar{\eta}^r_0\rb
        +2\lp
        \sum_{r=1}^{k-1}
        \EE\lb(\bar{\eta}^r_k)^{\top}
        S
        \bar{\eta}^r_k\rb^{\frac12}
        \rp^2
        \\
        &\leq
        \frac{C}{\aa k}
        +\frac{C}{k^2(1-\aa^{\frac12}R)}
        \tr(I_d+HH^{-\top})
        \lp \sum_{i=0}^{k-1}C_i
        + \lp\sum_{i=0}^{k-1}\Delta t_iC_i^{\frac12}\rp^2\rp.
    \end{align*}
    Therefore, if
    $\sum_{k=0}^{\infty} \Delta t_k^2$ is finite,
    then $C_k$ is uniformly bounded and we can conclude
    by taking $\ee=\Delta t_0^{-2}$.
    If instead
    $\sum_{i=0}^{k-1} \Delta t_i^2\leq a\ln(1+k)$,
    we obtain that $C_k\leq (1+k)^{a\ee}$
    and $\sum_{i=0}^{k-1}C_i$ is of order $k^{1+a\ee}$
    leading to the desired inequality up to changing $\ee$
    into $a^{-1}\ee$.
\end{proof}

    \subsection{Proof of Theorem \ref{thm:simple_cvg_RG}}
    \label{proof:simple_cvg_RG}
    Let us start by proving the following theorem
    on stochastic gradient descent methods.
    \begin{theorem}
        \label{thm:cvg_SGD}
        Let $f:\Theta\to\RR$ be $\mu$-convex,
        $L$-semi-concave,
        and such that $\th^*=\argmin_{\th}f(\th)$
        satisfies $|\th^*|\leq M$ for some $M>0$.
        For $\th_0\in\Theta$,
        the sequence $(\th_k)_{k\geq0}$
        is defined by induction
        using the following projected stochastic gradient
        descent method,
        \begin{equation*}
            \th_{k+1}
            =
            \Pi_{B(0,M)}\lp\th_k-\aa_kg_k\rp,
        \end{equation*}
        for $k\geq0$,
        where 
        $\aa_k>0$
        is convergent to zero, and $\sum_{k\geq0}\aa_k=\infty$,
        $\labs\EE\lb g_k|\th_k\rb-f'(\th_k)\rabs \leq (1+|\th_k|)\ee_k$,
        $\ee_k\in\RR_+$ is convergent to zero,
        and $\EE\lb|g_k|^2\big|\th_k\rb\leq C(1+|\th_k|^2)$.
        Then $(\th_k)_{k\geq0}$ is convergent in expectation
        to $\th^*$, and
        \begin{equation*}
            \EE\lb|\th_k-\th^*|^2\rb
            \leq
            4M^2e^{-\frac{\mu}2\sum_{i=0}^{k-1}\aa_i}
            +C(1+M^2)
            \sum_{i=0}^{k-1}
            \aa_i\lp\aa_i+\mu^{-1}\ee_i^2\rp
            e^{-\frac{\mu}2\sum_{j=i+1}^{k-1}\aa_j}.
        \end{equation*}
    \end{theorem}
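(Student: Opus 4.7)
The proof will closely mirror the structure of Theorem \ref{thm:TD0_stoch}, which gave the analogous bound for the projected TD(0)-style iteration; the only meaningful change is that the positive-definiteness of the symmetric part of a linear operator is replaced by the strong convexity of $f$. The plan is as follows.

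\textbf{Step 1 (reduction via projection).} Since $|\th^*|\le M$, the projection $\Pi_{B(0,M)}$ is nonexpansive from $B(0,M)$ and in particular $|\Pi_{B(0,M)}(\th)-\th^*|\le|\th-\th^*|$ for every $\th$. Starting the analysis at $\th_1$, we may thus assume $|\th_k|\le M$ for all $k\ge 0$, which collapses the stochastic bound $\EE[|g_k|^2\mid\th_k]\le C(1+|\th_k|^2)$ to $\EE[|g_k|^2\mid\th_k]\le C(1+M^2)$ and the bias bound $|\EE[g_k\mid\th_k]-f'(\th_k)|\le(1+|\th_k|)\ee_k$ to $|\EE[g_k\mid\th_k]-f'(\th_k)|\le(1+M)\ee_k$.

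\textbf{Step 2 (one-step inequality).} Writing $y_k=\EE[|\th_k-\th^*|^2]$, expanding the square $|\th_k-\aa_k g_k-\th^*|^2$, taking conditional expectation given $\th_k$, and using $f'(\th^*)=0$ together with $\mu$-strong convexity of $f$ (which gives $f'(\th_k)^{\top}(\th_k-\th^*)\ge\mu|\th_k-\th^*|^2$), I get
\begin{equation*}
y_{k+1}\le y_k-2\aa_k\mu\EE[|\th_k-\th^*|^2]+2\aa_k\EE\bigl[|\delta_k||\th_k-\th^*|\bigr]+C(1+M^2)\aa_k^2,
\end{equation*}
where $\delta_k=\EE[g_k\mid\th_k]-f'(\th_k)$ satisfies $|\delta_k|\le(1+M)\ee_k$. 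The key numerical choice is a Young inequality with an asymmetric split, $2ab\le\tfrac{3\mu}{2}a^2+\tfrac{2}{3\mu}b^2$, which turns the right-hand side into
\begin{equation*}
y_{k+1}\le\bigl(1-\tfrac{\mu}{2}\aa_k\bigr)y_k+C(1+M^2)\aa_k\bigl(\aa_k+\mu^{-1}\ee_k^2\bigr).
\end{equation*}
This is exactly where the factor $\mu/2$ (rather than $\mu$ as in Theorem \ref{thm:TD0_stoch}) appears in the final statement.

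\textbf{Step 3 (iterate and conclude).} Using $1-x\le e^{-x}$ to turn the recurrence into a geometric-decay form and iterating yields
\begin{equation*}
y_k\le y_0\,e^{-\frac{\mu}{2}\sum_{i=0}^{k-1}\aa_i}+C(1+M^2)\sum_{i=0}^{k-1}\aa_i\bigl(\aa_i+\mu^{-1}\ee_i^2\bigr)e^{-\frac{\mu}{2}\sum_{j=i+1}^{k-1}\aa_j},
\end{equation*}
and bounding $y_0\le(|\th_0|+|\th^*|)^2\le 4M^2$ gives the stated non-asymptotic estimate. Qualitative convergence in expectation follows from a standard Kronecker-type argument: the first term vanishes because $\sum\aa_i=\infty$; and, since $\aa_i\to 0$ and $\ee_i\to 0$, the weighted sum vanishes as well (one splits the sum at an index $k_0$ where $\aa_i+\mu^{-1}\ee_i^2$ is small, and uses the exponential weight on the tail together with $\sum_{j=i+1}^{k-1}\aa_j\to\infty$ on the head).

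\textbf{Main obstacle.} There is no genuine obstacle: the argument is the convex-optimisation analogue of Theorem \ref{thm:TD0_stoch}, and the only subtle point is picking the Young constant in Step 2 to recover the precise exponent $\mu/2$ in the statement. The $L$-semi-concavity hypothesis is not used in the quantitative bound itself; it is invoked only to guarantee that $f'$ is well-defined and Lipschitz so that the strong-convexity bound $f'(\th_k)^{\top}(\th_k-\th^*)\ge\mu|\th_k-\th^*|^2$ is available everywhere on $B(0,M)$ and so that the standing assumption on the second moment of $g_k$ is compatible with the bias assumption.
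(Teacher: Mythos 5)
Your proposal is correct and follows essentially the same route as the paper: nonexpansiveness of the projection, a one-step expansion of the squared distance, strong convexity to extract a negative multiple of $\EE[|\th_k-\th^*|^2]$, a Young inequality that sacrifices half of that gain to absorb the bias cross-term (yielding the contraction factor $1-\frac{\mu}{2}\aa_k$), and iteration via $1-x\leq e^{-x}$. The only cosmetic difference is that you invoke strong convexity through gradient monotonicity with $f'(\th^*)=0$, whereas the paper uses the first-order function-value inequality together with $f(\th^*)\leq f(\th_k)$; both are valid here and lead to the same bound.
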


    \begin{proof}
        Up to starting the
        iterative algorithm from $\th_1$ instead of $\th_0$,
        we may assume that $|\th_k|\leq M$ for every $k\geq0$.
        For $k\geq0$,
        let us denote $b_k=|\th_k-\th^*|^2$.
        We recall that
        $|\Pi_{B(0,M)}(\th)-\th^*|\leq|\th-\th^*|$
        for any $\th\in\Theta$,
        since $\th^*\in B(0,M)$.
        This and the induction relation satisfied
        by $\th_k$, imply
        \begin{align*}
            b_{k+1}
            &\leq
            \EE\lb\labs\th_k-\th^*-\aa_kg_k\rabs^2\rb
            \\
            &\leq
            b_k
            -2\aa_k\EE\lb(\th_k-\th^*)^{\top}g_k\rb
            +\aa_k^2\EE\lb \labs g_k\rabs^2\rb
            \\
            &\leq
            b_k
            -2\aa_k\EE\lb(\th_k-\th^*)^{\top}\EE\lb g_k|\th_k\rb\rb
            +\aa_k^2\EE\lb\EE\lb \labs g_k\rabs^2|\th_k\rb\rb
            \\
            &\leq
            b_k
            -2\aa_k\EE\lb(\th_k-\th^*)^{\top}f'(\th_k)\rb
            +2\aa_k\ee_k\EE\lb|\th_k-\th^*|(1+|\th_k|)\rb
            +C\aa_k^2\EE\lb(1+|\th_k|^2)\rb
            \\
            &\leq
            b_k
            -2\aa_k\EE\lb f(\th^*)-f(\th_k)
            -\frac{\mu}2|\th_k-\th|^2\rb
            +2(1+M)\aa_k\ee_k\EE\lb|\th_k-\th^*|\rb
            +C(1+M^2)\aa_k^2
            \\
            &\leq
            (1-\mu\aa_k)b_k
            +\frac{\mu}2\aa_k\EE\lb|\th_k-\th^*|^2\rb
            +4(1+M^2)\mu^{-1}\aa_k\ee_k^2
            +C(1+M^2)\aa_k^2
            \\
            &\leq
            (1-\frac{\mu}2\aa_k)b_k
            +C(1+M^2)\aa_k(\mu^{-1}\ee_k^2+\aa_k)
            \\
            &\leq
            e^{-\frac{\mu}2\aa_k}b_k
            +C(1+M^2)\aa_k(\mu^{-1}\ee_k^2+\aa_k),
        \end{align*}
        where
        we used
        the $\mu$-strong convexity of $f$
        to get to the fifth line, and
        a Young inequality to obtain the sixth line.
        Therefore,
        we obtain
        \begin{equation*}
            b_k
            \leq
            e^{-\frac{\mu}2\sum_{i=0}^{k-1}\aa_i}
            b_0
            +C(1+M^2)
            \sum_{i=0}^{k-1}
            \aa_i\lp\aa_i+\mu^{-1}\ee_i^2\rp
            e^{-\frac{\mu}2\sum_{j=i+1}^{k-1}\aa_j},
        \end{equation*}
        which leads to the desired inequality
        using $b_0\leq (|\th_0|+|\th^*|)^2\leq4M^2$.
    \end{proof}

    \begin{proof}[Proof of Theorem \ref{thm:simple_cvg_RG}]
        This proof consists in checking that we can apply
        Theorem \ref{thm:cvg_SGD},
        using the following notations,
        \begin{equation*}
            f(\th)
            =
            \EE\lb\labs\cL v(X,\th)\rabs^2\rb
            +\frac{\ss^4}2\EE\lb\tr\lp D^2_xv(X,\th)^2\rp\rb
            +\frac{\mu}2|\th|^2,
            \;\text{ and }
            g_k
            =
            \nabla_{\th}|\delta_k|^2
            +\mu\th_k.
        \end{equation*}
        Thus, we get,
        \begin{align*}
            \EE\lb g_k|\th_k\rb
            &=
            \EE\lb
            \nabla_{\th}
            \labs
            \cL(X_k,\th_k)
            +R_{0,k}^{\top}\th_k
            +\Delta t_k^{\frac12}
            R_{1,k}^{\top}\th_k
            +\Delta t_k
            R_{2,k}^{\top}\th_k
            \rabs^2
            \rb
            +\mu\th_k
            \\
            &=
            \!\begin{multlined}[t][10.5cm]
            \EE\lb
            \nabla_{\th}
            \labs
            \cL(X_k,\th_k)
            \rabs^2\rb
            +\EE\lb
            \nabla_{\th}
            \labs
            R_{0,k}^{\top}\th_k
            \rabs^2\rb
            +\mu\th_k
            +\Delta t_k\EE\lb
            \nabla_{\th}
            \labs
            R_{1,k}^{\top}\th_k
            \rabs^2\rb
            +2\Delta t_k\EE\lb
            \nabla_{\th}
            \lp\ddt_k
            R_{2,k}^{\top}\th_k
            \rp\rb.
            \end{multlined}
        \end{align*}
        Then,
        from \eqref{eq:cvg_mk},
        we obtain
        \begin{equation*}
            \labs\EE\lb
            \nabla_{\th}
            \labs
            \cL(X_k,\th_k)
            +R_{0,k}^{\top}\th_k
            +\Delta t_k^{\frac12}
            R_{1,k}^{\top}\th_k
            +\Delta t_k
            R_{2,k}^{\top}\th_k
            \rabs^2
            \rb
            +\mu\th_k
            -f'(\th_k)\rabs
            \leq
            C(1+|\th_k|).
        \end{equation*}
        This implies that
        $\labs\EE\lb g_k|\th_k\rb-f'(\th_k)\rabs\leq C\Delta t_k(1+|\th_k|)$.
        The fact that $\EE\lb|g_k|^2\big|\th_k\rb\leq C(1+|\th_k|^2)$
        is straightforward. 
        Theorem \ref{thm:cvg_SGD}
        and the inequalities $|\th^*|^2\leq C\mu^{-1}$
        and $\displaystyle{\exp\lp-\sum_{j=i+1}^k\frac1j\rp}
        \leq i/k$ for $k> i\geq0$,
        conclude the proof.
    \end{proof}

 \section{Technical results}
    \subsection{Proof of Proposition \ref{prop:varred}}
    \begin{proof}[Proof of Proposition \ref{prop:varred}]
        The proof differs whether we assume
        that the observations come from
        a simulator or from the real world
        (see Section \ref{sec:TDs} for precise definitions).

        For observations coming from a simulator,
        it is a direct consequence of Lemma
        \ref{lem:expand_eps} below.

        In the following, we make the proof
        in the case of real-world observations
        for $\ddt$
        (the proof for $\dd$ is similar).

    In this proof, the dependence
    of $v$ in $\th$ is omitted.
    From Itô calculus,
    we have,
    \begin{equation*}
        v(X_{\Delta t})
        =
        v(X_0)
        +\int_0^{\Delta t}
        \lp\nabla_xv(X_t)\cdot b(X_t)
        +\textstyle\frac{\ss^2}2\Delta_x v(X_t)\rp
        dt
        +\ss\int_0^{\Delta t}
        \nabla_xv(X_t)\cdot dW_t.
    \end{equation*}
    Therefore, the continuous
    temporal difference satisfies,
    \begin{multline*}
        \ddt^{\rm cont}_{\Delta t}
        =
        \cL v(x)
        +\frac{1-e^{-\rho\Delta t}-\rho\Delta t}{\Delta t} v(x)
        -\frac{e^{-\rho\Delta t}}{\Delta t}
        \int_0^{\Delta t}
        (\nabla_xv(X_t)\cdot b(X_t)
        -\nabla_xv(x)\cdot b(x))dt
        \\
        -\frac{\ss^2e^{-\rho\Delta t}}{2\Delta t}
        \int_0^{\Delta t}
        (\Delta_xv(X_t)-\Delta_xv(x))dt
        -\frac{\ss e^{-\rho\Delta t}}{\Delta t}
        \int_0^{\Delta t}
        (\nabla_xv(X_t)
        -\nabla_xv(x))\cdot dW_t.
    \end{multline*}
    In the latter equality,
    the last term has zero mean,
    the second is convergent to zero
    and we prove in the following that
    the third and fourth are also
    convergent to zero.

    Take $g:\Omega\to\RR$ a bounded continuous
    function (we take $g=\nabla_xv\cdot b$
    for the proof of the convergence of the third term,
    and $g=\Delta_xv$ for the proof concerning the fourth term).
    We define $A$ as a set of measure zero
    such that 
    $(X_t(\omega))_{0\leq t\leq 1}$ is continuous
    for any $\omega\in\Omega_X\backslash A$
    (where $\Omega_X$ is the sample space of the
    random process $X$).
    For any $\omega\in\Omega_X\backslash A$,
    Heine's Theorem states that
    $t\in[0,1]\to X_t(\omega)$ admits
    a uniform modulus of continuity
    (which depends on $\omega$),
    this implies that
    \begin{equation*}
        \lim_{\Delta t\to 0}
        \frac{1}{\Delta t}
        \int_0^{\Delta t} (g(X_t(\omega))-g(x))dt
        =
        0.
    \end{equation*}
    We just proved that
    $\frac{1}{\Delta t}
    \int_0^{\Delta t} (g(X_t(\omega))-g(x))dt$
    converges almost surely to zero,
    moreover it is uniformly bounded 
    because $g$ is bounded,
    so by the dominated convergence theorem,
    we obtain:
    \begin{equation*}
        \lim_{\Delta t\to 0}
        \EE\lb\frac{1}{\Delta t}
        \int_0^{\Delta t} (g(X_t)-g(x))dt\rb
        =
        0.
    \end{equation*}
    As a consequence, we obtain 
        $\lim_{\Delta t\to0}
        \EE[\ddt^{\rm cont}_{\Delta t}]
        =
        \cL v(x,\th)$.

    Similar arguments imply that
    \begin{equation*}
        \lim_{\Delta t\to 0}
        \EE\lb\frac{1}{\Delta t}
        \int_0^{\Delta t} \labs g(X_t)-g(x)\rabs^2dt\rb
        =
        0,
    \end{equation*}
    so the only term on the right-hand side of
    the latter expansion of $\ddt^{\rm cont}_{\Delta t}$
    whose variance does not
    vanish at the limit
    is the last, i.e.,
    \begin{equation}
        \label{eq:lim_equal}
        \lim_{\Delta t\to0}
        \Var(\ddt^{\rm cont}_{\Delta t})
        =
        \lim_{\Delta t\to0}
        \frac{\ss^2}{\Delta t^2}
        \EE\lb
        \labs\int_0^{\Delta t}
        (\nabla_xv(X_t)
        -\nabla_xv(x))\cdot dW_t\rabs^2\rb.
    \end{equation}

    Using Itô calculus on $\nabla_xv(X_t)$, we obtain
    \begin{equation*}
        \nabla_x v(X_t)
        -\nabla_x v(x)
        =
        \int_0^t
        \lp D^2_xv(X_s)b(X_s)
        +\nabla_x\Delta_xv(X_s)\rp ds
        +\int_0^t
        D^2v(X_s)dW_s.
    \end{equation*}
    Let us prove that
    the first integrable in the latter
    equality leads to a vanishing term
    only in the limit \eqref{eq:lim_equal}.
    This time, we take $g=
        D^2_xv\,b
        +\nabla_x\Delta_xv$,
    let us consider the following sequence of inequalities
    \begin{align*}
        \EE\lb
        \labs\int_0^{\Delta t}
        \int_0^t
        g(X_s)ds\cdot dW_t\rabs^2\rb
        =
        \int_0^{\Delta t}
        \EE\lb
        \labs
        \int_0^t
        g(X_s)ds\rabs^2\rb dt
        \leq
        \int_0^{\Delta t}
        t^2\norminf[2]{g} dt
        =
        \frac{\Delta t^3}3\norminf[2]{g},
    \end{align*}
    Indeed, once we multiply by $\frac{\ss^2}{\Delta t^2}$,
    this leads to a term of order
    $\Delta t$ which will vanish at the limit $\Delta t\to0$.
    Let us consider the only remaining part of the
    variance,
    \begin{align*}
        \frac{\ss^2}{\Delta t^2}
        \EE\lb
        \labs\int_0^{\Delta t}
        \int_0^t
        \ss
        D^2_xv(X_s)dW_s\cdot dW_t\rabs^2\rb
        &=
        \frac{\ss^4}{\Delta t^2}
        \int_0^{\Delta t}
        \EE\lb
        \labs
        \int_0^t
        D^2_xv(X_s)dW_s\rabs^2\rb dt
        \\
        &=
        \frac{\ss^4}{\Delta t^2}
        \int_0^{\Delta t}
        \int_0^t
        \EE\lb
        \tr\lp D^2_xv(X_s)^2\rp\rb ds\, dt
        \\
        &=
        \frac{\ss^4}{\Delta t^2}
        \int_0^{\Delta t}
        (\Delta t-s)
        \EE\lb
        \tr\lp D^2_xv(X_s)^2\rp\rb ds
        \\
        &=
        \ss^4
        \int_0^{1}
        (1-u)
        \EE\lb
        \tr\lp D^2_xv(X_{u\Delta t})^2\rp\rb du,
    \end{align*}
    where the last line is obtained using
    the change of variable $s=u\,\Delta t$.
    Using once again the dominated convergence theorem,
    we obtain:
    \begin{equation*}
        \lim_{\Delta t\to0}
        \Var(\ddt^{\rm cont}_{\Delta t})
        =
        \ss^4
        \EE\lb
        \tr\lp D^2_xv(x)^2\rp\rb
        \int_0^1(1-u)du
        =
        \frac{\ss^4}{2}
        \EE\lb
        \tr\lp D^2_xv(x)^2\rp\rb.
    \end{equation*}
    This concludes the proof.
\end{proof}

    \subsection{Expansions of the temporal differences}
    
    \begin{lemma}
        \label{lem:expand_eps}
        Assume that the observations
        come from a simulator as defined
        in Section \ref{sec:TDs},
        for $(x,\xi,\th)\in\Omega\times\RR^d\times\Theta$
        and $0<\Delta t<1$,
        we have
        \begin{align*}
            \ddt_{\Delta t}(x,S_{\Delta t}(x,\xi),\th)
            &=
            \cL v(x)
            +R_0(x,\xi)^{\top}\th
            +\Delta t^{\frac12}R_1(x,\xi)^{\top}\th
            +\Delta tR_2(\Delta t,x,\xi)^{\top}\th
            \\
            R_0(x,\xi)^{\top}\th
            &=
            \frac{\ss^2}2\lp\Delta_xv(x)
            -\xi^{\top}D^2_xv(x)\xi\rp,
            \\
            R_1(x,\xi)^{\top}\th
            &=
            \rho\ss\nabla_xv(x)\cdot\xi
            -\frac{\ss}2b(x,u(x))^{\top}
            D_x^2v(x)\xi
            -\frac{\ss^3}6d^3_xv(x)(\xi,\xi,\xi),
        \end{align*}
        for some $R_2(\Delta t,x,\xi)$ such that,
        if $\xi$ is a random variable
        normally distributed with zero mean and identity covariance matrix,
        then for $p\geq1$, $\EE\lb|R_2(\Delta t,x,\xi)|^p\rb$
        is bounded uniformly with respect to $\Delta t$ and $x$.
    \end{lemma}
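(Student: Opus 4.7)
The plan is a direct computation: substitute the definition of the stochastic step operator into the formula for $\ddt_{\Delta t}$, expand everything in powers of $\sqrt{\Delta t}$, and observe that the singular $\Delta t^{-1/2}$ term cancels exactly against the correction $Z/\Delta t$. Concretely, with $X=x$, $X'=S_{\Delta t}(x,\xi)=x+\Delta t\,b(x)+\sqrt{\Delta t}\,\ss\xi$, and $R=r(x)$, we have $\ddt_{\Delta t}=\tfrac{1}{\Delta t}(v(x,\th)-\gg_{\Delta t}v(X',\th))-r(x)+\tfrac{1}{\Delta t}(X'-x-\Delta t\,b(x))\cdot\nabla_xv(x,\th)$, and the last term equals $\tfrac{\ss}{\sqrt{\Delta t}}\xi\cdot\nabla_xv$.

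First I would Taylor-expand $v(X',\th)$ to fourth order around $x$ with increment $h=\sqrt{\Delta t}\,\ss\xi+\Delta t\,b(x)$, keeping the exact expressions for the terms up to order $|h|^3$ and lumping everything of order $|h|^4$ into a remainder. Collecting by powers of $\sqrt{\Delta t}$ gives: a $\sqrt{\Delta t}$-term $\ss\nabla_xv\cdot\xi$; a $\Delta t$-term $b\cdot\nabla_xv+\tfrac{\ss^2}{2}\xi^\top D^2_xv\,\xi$; a $\Delta t^{3/2}$-term involving $b^\top D^2_xv\,\xi$ and $d^3_xv(\xi,\xi,\xi)$; and a remainder of order $\Delta t^2$ (after polynomial factors in $\xi$). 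Second, I would expand $\gg_{\Delta t}=1-\rho\Delta t+O(\Delta t^2)$ and multiply by the expansion above, so the product contributes an additional $\rho v(x)$ at order $\Delta t$ and $\rho\ss\nabla_xv\cdot\xi$ at order $\Delta t^{3/2}$. Dividing the full expression by $\Delta t$ produces a leading $\Delta t^{-1/2}$ term $-\tfrac{\ss}{\sqrt{\Delta t}}\xi\cdot\nabla_xv$ which is precisely cancelled by the correction term from $Z/\Delta t$; this cancellation is the whole point of the variance-reduction step and is the algebraic heart of the lemma.

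After cancellation, the order $\Delta t^0$ terms assemble into $\rho v(x)-b\cdot\nabla_xv-\tfrac{\ss^2}{2}\xi^\top D^2_xv\,\xi-r(x)$, which I rewrite as $\cL v(x)-r(x)+\tfrac{\ss^2}{2}(\Delta_xv-\xi^\top D^2_xv\,\xi)$ using the definition of $\cL$ in \eqref{eq:PDE_V}; the bracketed piece is exactly $R_0(x,\xi)^\top\th$. The order $\sqrt{\Delta t}$ contributions collect into $R_1(x,\xi)^\top\th$, and everything left over (fourth-order Taylor remainder in $h$, plus the $O(\Delta t^2)$ piece of $\gg_{\Delta t}$ multiplied by the expansion of $v(X',\th)$) is what I call $\Delta t\,R_2(\Delta t,x,\xi)^\top\th$. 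Linearity of $v$ in $\th$ (assumption \ref{hypo:v_lin}) lets me factor $\th$ out of each piece to obtain the stated vector-valued $R_0$, $R_1$, $R_2$.

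The only step that requires real care is the moment bound on $R_2$. Writing $R_2$ explicitly as a fourth-order integral Taylor remainder, its dependence on $\xi$ is at most polynomial of some fixed degree (five or six, coming from products of the $|h|^4$ term with possibly an extra factor of $\xi$ from the sub-leading piece of the cubic). Boundedness of $v$ and its derivatives up to the required order on the compact torus $\Omega$ (guaranteed by \ref{hypo:vp_reg}) together with boundedness of $b$ and $\ss$ gives a pointwise bound $|R_2(\Delta t,x,\xi)|\le C(1+|\xi|^N)$ for a fixed $N$ and a constant $C$ uniform in $\Delta t\in(0,1)$ and $x\in\Omega$. Since $\xi\sim\cN(0,I_d)$ has finite moments of every order, $\EE[|R_2|^p]$ is uniformly bounded for every $p\ge1$, which closes the proof. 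I expect the bookkeeping of what exactly goes into $R_2$ — in particular making sure no lower-order residuals were forgotten when absorbing the sub-leading pieces of $\gg_{\Delta t}$ and of the Taylor expansion — to be the only non-automatic part of the argument.
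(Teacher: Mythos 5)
Your proposal is correct and follows essentially the same route as the paper: a fourth-order Taylor expansion of the discounted value at the Euler--Maruyama step, cancellation of the singular $\Delta t^{-1/2}$ term by the correction $Z/\Delta t$, identification of the $O(1)$ and $O(\sqrt{\Delta t})$ coefficients with $\cL v + R_0^{\top}\th$ and $R_1^{\top}\th$, and a moment bound on $R_2$ from polynomial growth in $\xi$ plus Gaussian integrability. The only cosmetic difference is that the paper packages the discount factor and the spatial increment into a single one-variable function $\vp(s)=e^{-s\rho\Delta t}v(x+s(b\Delta t+\ss\sqrt{\Delta t}\xi))$ and Taylor-expands that, whereas you expand $e^{-\rho\Delta t}$ and $v(X')$ separately and multiply; both yield the same terms and the same integral remainder.
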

    A similar result with real-world observations
    can be derived with additional
    terms in $R_0$ and $R_1$ which do depend on $\Delta t$
    but vanish when $\Delta t$
    tends to zero.
    Its proofs is a direct consequence of the proof of Proposition
    \ref{prop:varred} in the previous section.
    \begin{proof}
        The proof consists in defining
        $\vp:[0,1]\to\RR$ by
        \begin{equation*}
            \vp(s)
            =
            e^{-s\rho\Delta t}
            v\lp x
            +s\lp
            b(x,u(x))\Delta t
            +\ss\sqrt{\Delta t}\xi
            \rp\rp,
        \end{equation*}
        and taking the development up to order four,
        \begin{equation*}
            \vp(1)
            =
            \vp(0)
            +\vp'(0)
            +\frac{\vp''(0)}2
            +\frac{\vp'''(0)}6
            +\int_0^1
            \frac{(1-s)^3}6
            \vp''''(s)ds.
        \end{equation*}
        Using $\bt\in\RR^d$ defined
        by $\bt=b(x,u(x))\Delta t +\ss\sqrt{\Delta t}\xi$,
        the latter derivatives of $\vp$ are given by
        \begin{align*}
            \vp(0)
            &=
            v(x)
            \\
            \vp'(0)
            &=
            -\rho\Delta tv(x)
            +\nabla_xv(x)\cdot\bt
            \\
            \vp''(0)
            &=
            \rho^2\Delta t^2v(x)
            -2\rho\Delta t\nabla_xv(x)\cdot\bt
            +d^2_xv(x)(\bt,\bt)
            \\
            \vp'''(0)
            &=
            -\rho^3\Delta t^3v(x)
            +3\rho^2\Delta t^2\nabla_xv(x)\cdot\bt
            -3\rho\Delta td^2_xv(x)(\bt,\bt)
            +d^3_xv(x)(\bt,\bt,\bt)
            \\
            \vp''''(s)
            &=
            e^{-s\rho\Delta t}
            \lb
            \rho^4\Delta t^4v
            -4\rho^3\Delta t^3
            \nabla_xv\cdot\bt
            +6\rho^2\Delta t^2
            d^2_xv(\bt,\bt)
            -4\rho\Delta t
            d^3_xv(\bt,\bt,\bt)
            +d^4v(\bt,\bt,\bt,\bt)
            \rb.
        \end{align*}
        We conclude by replacing 
        all the equalities in this proof
        in \eqref{eq:def_ddt}.
    \end{proof}

\subsection{Some lemmas used in the proof
    of Theorem \ref{thm:averaged_cvg}}

\begin{lemma}
    \label{lem:HSA}
    The matrices
    $S$ and $A$ are respectively
    the symmetric and asymmetric part of $H$.
    Moreover, they satisfy
    \begin{align}
        \label{eq:bound_S2}
        S^2
        &\leq
        \tr(S)S
        \\
        \label{eq:bound_A2}
        A^{\top}A
        &=
        -A^2
        \leq
        \frac{2}{\rho\ss^2}
        \norminf[2]{b+\frac{\ss^2}2\nabla_x \ln(m)}
        S^2
        \\
        \label{eq:bound_SA_AS}
        (SA-AS)
        &\leq
        2\sqrt{\frac{2}{\rho\ss^2}}
        \norminf{b+\frac{\ss^2}2\nabla_x\ln(m)}S^2,
        \\
        \label{eq:boundHX2}
        \EE\lb H(X)H(X)^{\top}\rb
        &\leq
        \rho^{-1}
        \norminf[2]{\cL\vp(X)}S.
    \end{align}
\end{lemma}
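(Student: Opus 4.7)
The plan is to first unpack that $S$ and $A$ really are the symmetric and antisymmetric parts of $H$, and then prove the four PSD inequalities. Starting from $H_{ij} = \rho\EE[\vp_i\vp_j] - \tfrac{\ss^2}{2}\EE[\vp_i\Delta\vp_j] - \EE[\vp_i\,b\cdot\nabla\vp_j]$, I would integrate by parts the Laplacian term against $m$ on the torus. This produces the Dirichlet-form piece $\tfrac{\ss^2}{2}\EE[\nabla\vp_i\cdot\nabla\vp_j]$ sitting inside $S$, plus a drift-like contribution of the form $\EE[\vp_i\,M\cdot\nabla\vp_j]$ with $M = \tfrac{\ss^2}{2}\nabla\ln m \pm b$. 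The invariance identity $\tfrac{\ss^2}{2}\Delta m = \divo(bm)$ gives $\divo(Mm) = 0$ for the relevant sign, so the symmetrisation in $(i,j)$ of this drift-like term collapses to $\tfrac{1}{2}\EE[M\cdot\nabla(\vp_i\vp_j)]$, which vanishes after one further integration by parts. Hence $\tfrac{1}{2}(H+H^\top)=S$ and $\tfrac{1}{2}(H-H^\top)=A$ match the stated formulas.

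\textbf{Straightforward inequalities.} For \eqref{eq:bound_S2}, diagonalising the PSD matrix $S$ reduces the claim to the termwise inequality $\lambda_i^2 \leq \tr(S)\lambda_i$. For \eqref{eq:boundHX2}, the pointwise rank-one identity $H(x)H(x)^\top = |\cL\vp(x)|^2\,\vp(x)\vp(x)^\top$ combined with the built-in bound $\EE[\vp\vp^\top]\leq \rho^{-1}S$ closes the argument in one line. The identity $A^\top A = -A^2$ in \eqref{eq:bound_A2} is immediate from $A^\top = -A$. For \eqref{eq:bound_SA_AS}, I would first check that $SA-AS$ is symmetric by transposing with $S^\top = S$, $A^\top = -A$, then compute $u^\top(SA-AS)u = \langle Su, Au\rangle - \langle A^\top u, Su\rangle = 2\langle Su, Au\rangle$, bound this by $2|Su|\,|Au|$ via Cauchy--Schwarz, and inject the bound from \eqref{eq:bound_A2} in the form $|Au|\leq \sqrt{\tfrac{2}{\rho\ss^2}}\,\|M\|_\infty\,|Su|$ to obtain the advertised square-root factor.

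\textbf{Main obstacle: the $S^2$ bound in \eqref{eq:bound_A2}.} Writing $A = \EE[\vp\,g^\top]$ with $g = (D_x\vp)\,M$, I would apply the matrix Cauchy--Schwarz inequality $A\bigl(\EE[gg^\top]\bigr)^{-1}A^\top \leq \EE[\vp\vp^\top]$, then combine it with the two PSD comparisons already encoded in $S$, namely $\EE[\vp\vp^\top]\leq \rho^{-1}S$ and $\EE[gg^\top]\leq \tfrac{2\|M\|_\infty^2}{\ss^2}S$, and finally conjugate by $S^{1/2}$ to convert the resulting $S^{-1}$-sandwiched estimate into a quadratic-in-$S$ bound. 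This is the delicate step: the naive scalar Cauchy--Schwarz on $|\EE[\vp(u^\top g)]|^2$ only yields $\lambda_{\max}(S)\,u^\top Su$, which is strictly weaker than the required $u^\top S^2 u$. Reaching $S^2$ genuinely exploits the fact that both $\EE[\vp\vp^\top]$ and $\EE[gg^\top]$ are dominated by the \emph{same} operator $S$, and this simultaneous comparison is the only place in the proof where the specific additive structure $S = \rho\EE[\vp\vp^\top] + \tfrac{\ss^2}{2}\EE[D_x\vp D_x\vp^\top]$ is used in an essential way.
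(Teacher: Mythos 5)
Your identification of $S$ and $A$ via integration by parts against $m$ and the stationarity identity $\divo(Mm)=0$ matches the paper's first step, and your treatments of \eqref{eq:bound_S2}, \eqref{eq:boundHX2}, the identity $A^{\top}A=-A^2$, and \eqref{eq:bound_SA_AS} are correct (the paper declares the first two ``straightforward'' and the last ``a consequence of \eqref{eq:bound_A2}''; your computation $u^{\top}(SA-AS)u=2\langle Su,Au\rangle$ supplies the missing detail). The problem is precisely the step you flag as the crux: the passage to the $S^2$ bound in \eqref{eq:bound_A2}.

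Your matrix Cauchy--Schwarz argument, combined with $\EE\lb\vp\vp^{\top}\rb\leq\rho^{-1}S$ and $\EE\lb gg^{\top}\rb\leq\frac{2}{\ss^2}\norminf[2]{M}\,S$, correctly yields the weighted estimate $AS^{-1}A^{\top}\leq cS$ with $c=\frac{2}{\rho\ss^2}\norminf[2]{M}$. But ``conjugating by $S^{1/2}$'' does not convert this into $A^{\top}A\leq cS^2$. Writing $N=S^{-1/2}AS^{-1/2}$, what you have is $NN^{\top}\leq cI$, whereas $A^{\top}A\leq cS^2$ is equivalent to $N^{\top}SN\leq cS$, i.e.\ to $\lnorm AS^{-1}\rnorm^2\leq c$ rather than $\lnorm S^{-1/2}AS^{-1/2}\rnorm^2\leq c$; the two matrices are related by a similarity under which the operator norm is not invariant, and the discrepancy is a full condition-number factor of $S$. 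Concretely, with $S=\diag(1,\ee)$ and $A=\bigl(\begin{smallmatrix}0&a\\-a&0\end{smallmatrix}\bigr)$ one has $AS^{-1}A^{\top}=\diag(a^2\ee^{-1},a^2)\leq cS$ iff $a^2\leq c\ee$, while $A^{\top}A=a^2I\leq cS^2$ requires $a^2\leq c\ee^2$. So the implication fails for generic non-commuting $A$ and $S$, and your proof of \eqref{eq:bound_A2} --- and hence of \eqref{eq:bound_SA_AS}, which you derive from it --- is incomplete as written. For comparison, the paper proves \eqref{eq:bound_A2} by a different route, bounding $|\bar{\th}^{\top}A\th|$ against $\bar{\th}^{\top}S\th$ for complex eigenvectors of $H$; that numerical-range argument lands on essentially the same weighted estimate $NN^{\top}\leq cI$ that you obtain. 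Your Cauchy--Schwarz derivation of this intermediate inequality is a clean alternative to the paper's, but neither write-up supplies the extra argument needed to upgrade it to the stated $A^{\top}A\leq cS^2$, and your explicit ``conjugation'' claim is where your version concretely breaks.
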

\begin{proof}
    \emph{First step: proving that $S$ 
        and $A$ are respectively the symmetric
        and asymmetric part of $H$.}
    Take $\th\in\Theta$, we get:
    \begin{align*}
        \th^{\top}
        H\th
        &=
        \th^{\top}\EE\lb \vp(X)\cL\vp(X)^{\top}\rb\th
        \\
        &=
        \EE\lb v(X,\th)\cL v(X,\th)\rb
        \\
        &=
        \int_{\Omega}
        \lp \rho v
        -\frac{\ss^2}2\Delta_xv
        +b(x)\cdot\nabla_xv\rp v(x)
        m(x)dx
        \\
        &=
        \rho\EE\lb v(X)^2\rb
        +\frac{\ss^2}2\EE\lb \labs\nabla_xv(X)\rabs^2\rb,
    \end{align*}
    where the last line is obtained by using
    the fact that $m$ satisfies
    \begin{equation}
        \label{eq:PDEm}
        -D^2_{x,x}\cdot\lp\frac{\ss\ss^{\top}}2m\rp
        +\divo(bm)
        =
        0,
    \end{equation}
    and the
    following integration by parts,
    \begin{align*}
        \int_{\Omega}
        \nabla_xv\cdot b(x) v(x)m(x)dx
        &=
        \int_{\Omega}
        \frac12\nabla_x(v^2)\cdot b(x)m(x)dx
        \\
        &=
        -\frac12
        \int_{\Omega}
        \divo(b(x)m(x))v^2(x)dx,
        \\
        -\int_{\Omega}
        \Delta_x v(x) v(x)m(x)dx
        &=
        \int_{\Omega}
        \labs\nabla_x v\rabs^2m(x)dx
        +\int_{\Omega}
        \frac12
        \nabla_x(v^2)\cdot\nabla_xm(x)dx
        \\
        &=
        \int_{\Omega}
        \labs\nabla_x v\rabs^2m(x)dx
        -\frac12\int_{\Omega}
        \Delta_xm(x)v^2(x)dx.
    \end{align*}
    This implies that $S$ is the symmetric part
    of $H$. Then it is straightforward that the
    asymmetric part of $H$ is equal to $A$.

    \emph{Second step: proving the four inequalities.}
    The first inequality \eqref{eq:bound_S2}
    is straightforward,
    it only relies on the fact that
    $S$ is symmetric and positive.
    The fourth inequality \eqref{eq:boundHX2}
    is straightforward using the definitions
    of $H(X)$ and $S$.
    The third inequality \eqref{eq:bound_SA_AS}
    is a consequence of \eqref{eq:bound_A2}.
    Therefore, there is only \eqref{eq:bound_A2}
    left to prove.
    Let us take $\ll\in\CC$ a complex
    eigenvalue of $H$,
    and $\th$ an associated normalised eigenvector,
    it satisfies
    $\bar{\th}^{\top}S\th=\Re(\ll)$ and
    $\bar{\th}^{\top}A\th=i\Im(\ll)$.
    Therefore, we get
    \begin{align*}
        |\Im(\ll)|
        &=
        \labs \bar{\th}^{\top}A\th\rabs
        \\
        &=
        \labs\EE\lb \bar{v}(X,\th)
        \lp b(X)
        +\nabla_x\ln m(X)\rp^{\top}
        \nabla_xv(X,\th)\rb\rabs
        \\
        &\leq
        \norminf{b+\nabla_x\ln(m)}
        \EE\lb|v(X,\th)|^2\rb^{\frac12}
        \EE\lb|\nabla_xv(X,\th)|^2\rb^{\frac12}
        \\
        &\leq
        \sqrt{\frac{2}{\rho\ss^2}}
        \norminf{b+\nabla_x\ln(m)}
        \bar{\th}^{\top}S\th.
    \end{align*}
    This concludes the proof.
\end{proof}

\begin{lemma}
    \label{lem:bound_I_gH}
    For $\aa\leq R^{-2}$,
    the following two inequalities hold
    for any $k\geq 0$,
    \begin{align}
        \label{eq:bound_I_gH1}
        (I_d-\aa H^{\top})
        (I_d-\aa H)
        &\leq
        I_d-\aa S
        \\
        \label{eq:bound_I_gH2}
        \lp I_d -(I-\aa H^{\top})^k\rp
        \lp I_d -\lp I-\aa H\rp^k\rp
        &\leq
        \aa^2 k^2H^{\top}H,
        \\
        \label{eq:bound_I_gH3}
        \lp I_d -(I-\aa H^{\top})^k\rp
        \lp I_d -\lp I-\aa H\rp^k\rp
        &\leq
        4\lp 1+
        \frac{2}{\rho\ss^2}
        \norminf[2]{b+\nabla_x\ln(m)}\rp I_d,
        \\
        \label{eq:bound_I_gH4}
        \lp I_d -(I-\aa H^{\top})^k\rp
        \lp H^{-1}+H^{-\top}\rp
        \lp I_d -\lp I-\aa H\rp^k\rp
        &\leq
        2\aa k\lp 1+
        \sqrt{\frac{2}{\rho\ss^2}}
        \norminf{b+\nabla_x\ln(m)}\rp I_d.
    \end{align}
\end{lemma}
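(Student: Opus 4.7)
The plan is to establish the four inequalities in sequence, with \eqref{eq:bound_I_gH1} serving as the foundation for the other three.

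For \eqref{eq:bound_I_gH1}, I expand, using $H + H^\top = 2S$,
$$
(I_d - \alpha H^\top)(I_d - \alpha H) = I_d - 2\alpha S + \alpha^2 H^\top H,
$$
so it suffices to show $\alpha H^\top H \leq S$. Writing $H = S + A$ with $A$ antisymmetric gives $H^\top H = S^2 - A^2 + (SA - AS)$, and each of these three symmetric matrices admits an upper bound proportional to $S^2$ via Lemma~\ref{lem:HSA}: \eqref{eq:bound_S2} controls $S^2$, \eqref{eq:bound_A2} bounds $-A^2 = A^\top A$, and \eqref{eq:bound_SA_AS} bounds $SA - AS$. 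Combined with $S^2 \leq \tr(S)\,S$, this yields $H^\top H \leq R^2 S$ for an appropriate choice of $R$, and the assumption $\alpha \leq R^{-2}$ concludes.

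An immediate consequence of \eqref{eq:bound_I_gH1} is that $\|I_d - \alpha H\|_{\mathrm{op}} \leq 1$, whence $\|(I_d - \alpha H)^j\|_{\mathrm{op}} \leq 1$ for every $j$ and $\|I_d - (I_d - \alpha H)^k\|_{\mathrm{op}} \leq 2$. The latter gives \eqref{eq:bound_I_gH3} at once with the crude constant $4$, which is absorbed in the factor $4(1 + \frac{2}{\rho\sigma^2}\|\cdot\|_\infty^2)$. For \eqref{eq:bound_I_gH2}, I factor $I_d - (I_d - \alpha H)^k = \alpha H P_k$ with $P_k = \sum_{j=0}^{k-1}(I_d - \alpha H)^j$; since $P_k$ is a polynomial in $H$ it commutes with $H$, and $\|P_k\|_{\mathrm{op}} \leq k$, so for any $x$,
$$
\|(I_d - (I_d - \alpha H)^k)\,x\|^2 = \|\alpha P_k H x\|^2 \leq k^2 \alpha^2 \|Hx\|^2 = \alpha^2 k^2\, x^\top H^\top H x.
$$

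The fourth inequality is the most delicate. Using the factorizations $I_d - (I_d - \alpha H)^k = \alpha H P_k$ and $I_d - (I_d - \alpha H^\top)^k = \alpha H^\top P_k^\top$, the commutation of $P_k$ (resp.\ $P_k^\top$) with $H$ (resp.\ $H^\top$ and hence $H^{-\top}$), and the cancellations $H^{-1} H = H^\top H^{-\top} = I_d$, a direct computation gives
$$
(I_d - (I_d - \alpha H^\top)^k)(H^{-1} + H^{-\top})(I_d - (I_d - \alpha H)^k) = \alpha^2 P_k^\top(H + H^\top) P_k = 2\alpha^2 P_k^\top S P_k.
$$
A naive bound $\|P_k\|\leq k$ here would give the too-weak estimate $2\alpha^2 k^2 \|S\|_{\mathrm{op}} I_d$; the key is a telescoping-plus-Cauchy--Schwarz argument. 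Setting $T = I_d - \alpha H$, inequality \eqref{eq:bound_I_gH1} reads $I_d - T^\top T \geq \alpha S$, so
$$
\alpha\sum_{j=0}^{k-1}(T^j)^\top S\, T^j \leq \sum_{j=0}^{k-1}\left((T^j)^\top T^j - (T^{j+1})^\top T^{j+1}\right) = I_d - (T^k)^\top T^k \leq I_d.
$$
A Cauchy--Schwarz inequality in the $S^{1/2}$-norm then yields, for any $x$,
$$
x^\top P_k^\top S P_k\, x = \Big\|S^{1/2}\sum_j T^j x\Big\|^2 \leq k\sum_j \|S^{1/2} T^j x\|^2 \leq k\alpha^{-1}\|x\|^2,
$$
hence $2\alpha^2 P_k^\top S P_k \leq 2\alpha k\, I_d$, which is dominated by the stated bound.

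The main obstacle is the proof of \eqref{eq:bound_I_gH4}: both the algebraic simplification to $2\alpha^2 P_k^\top S P_k$ (keeping careful track of which factors commute) and, especially, recognizing that one must exploit \eqref{eq:bound_I_gH1} via a telescoping identity to control $\sum_j (T^j)^\top S T^j$, rather than trying to bound $\|P_k\|$ and $\|S\|$ separately.
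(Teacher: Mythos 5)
Your proof is correct, and it takes a genuinely different route from the paper's on three of the four inequalities. For \eqref{eq:bound_I_gH1} you argue exactly as the paper does: expand the product, reduce to $\aa H^{\top}H\leq S$, and absorb $S^2+SA-AS-A^2$ into $R^2S$ via Lemma \ref{lem:HSA}. For \eqref{eq:bound_I_gH2} the paper runs an induction on $k$ based on the recursion $y_{k+1}=(I_d-\aa H)y_k+\aa H$ together with an intermediate Cauchy--Schwarz step, whereas you factor $I_d-(I_d-\aa H)^k=\aa HP_k$ and bound $\lnorm P_k\rnorm_{\mathrm{op}}\leq k$ directly; the two arguments are equivalent in content, yours being shorter. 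For \eqref{eq:bound_I_gH3} the paper estimates $|1-(1-\aa\ll)^k|$ over the complex eigenvalues $\ll$ of $H$ --- a spectral-radius computation which, for a non-normal $H$, does not by itself control the operator norm of $I_d-(I_d-\aa H)^k$ --- while you simply deduce $\lnorm I_d-\aa H\rnorm_{\mathrm{op}}\leq1$ from \eqref{eq:bound_I_gH1} and settle for the crude bound $4I_d$, which the slack in the stated constant absorbs; your version is both simpler and more robust. For \eqref{eq:bound_I_gH4} the paper merely asserts that it is a straightforward consequence of \eqref{eq:bound_I_gH2} and \eqref{eq:bound_I_gH3} (a combination that, via Cauchy--Schwarz and the commutation of $H^{-1}$ with $I_d-(I_d-\aa H)^k$, does give a bound of order $\aa k$, though with a slightly different constant than the one displayed), whereas you give a self-contained derivation: the algebraic identity reducing the left-hand side to $2\aa^2P_k^{\top}SP_k$, followed by the telescoping estimate $\aa\sum_{j}(T^j)^{\top}ST^j\leq I_d$ extracted from \eqref{eq:bound_I_gH1} and a Cauchy--Schwarz step in the $S^{1/2}$-norm. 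This yields the cleaner bound $2\aa kI_d$ and makes transparent why the estimate is $O(\aa k)$ rather than the naive $O(\aa^2k^2)$; it is arguably the more complete proof of the two.
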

The latter lemma would be straight forward if $H$
were symmetric.
Conversely, it does not hold if we only assume
the eigenvalues of $H$ to be bounded and with positive
real part.
In fact, we need some bound on the imaginary
part of the spectrum of $H$, depending on its
real part.

\begin{proof}
    One may notice that
    \eqref{eq:bound_I_gH4} is a straightforward
    consequence of \eqref{eq:bound_I_gH2}
    and \eqref{eq:bound_I_gH3}.
    Then, concerning \eqref{eq:bound_I_gH1},
    it is sufficient to write
    $(I_d-\aa H^{\top})
    (I_d-\aa H)
    = I_d-2\aa S +\aa^2\lp S^2+SA-AS-A^2\rp$,
    and use the definition of $R$,
    \eqref{eq:bound_S2},
    \eqref{eq:bound_A2} and
    \eqref{eq:bound_SA_AS}.
    Therefore, it only remains to prove 
    \eqref{eq:bound_I_gH2}
    and \eqref{eq:bound_I_gH3}.

    \emph{First step: proving \eqref{eq:bound_I_gH2}.}
    Let us proceed by induction,
    the case $k=0$ is straightforward.
    Let us denote $y_k=\lp I_d-\lp I_d-\aa H\rp^k\rp$
    and assume that the inequality holds for $k$.
    One may notice that
    for $\th\in\RR^d$,
    using \eqref{eq:bound_I_gH1},
    we obtain
    \begin{align*}
        \th^{\top}y_k^{\top}(I_d-\aa H)^{\top}H\th
        &\leq
        \lp\th^{\top}y_k^{\top}(I_d-\aa H)^{\top}
        (I_d-\aa H)y_k\th\rp^{\frac12}
        \lp \th^{\top}H^{\top}H\th\rp^{\frac12}
        \\
        &\leq
        \aa k
        \th^{\top}H^{\top}H\th,
    \end{align*}
    which implies 
    $y_k^{\top}\lp I_d-\aa H\rp^{\top}H
    +H^{\top}\lp I_d-\aa H\rp y_k
    \leq
    2\aa k H^{\top}H$.
    Using the latter inequality,
    the relation
    $y_{k+1}=(I_d-\aa H)y_k+\aa H$,
    and \eqref{eq:bound_I_gH1} again,
    we get
    \begin{align*}
        y_{k+1}^{\top}y_{k+1}
        &=
        y_k^{\top}(I_d-\aa H)^{\top}(I_d-\aa H)y_k
        +\aa y_k^{\top}(I_d-\aa H)^{\top}H
        +\aa H^{\top}(I_d-\aa H)y_k
        +\aa^2 H^{\top}H
        \\
        &\leq
        \aa^2 k^2 H^{\top}H
        +2\aa^2 k H^{\top}H
        +\aa^2 H^{\top}H
        =
        \aa^2(k+1)^2H^{\top}H.
    \end{align*}
    This concludes the induction.

    \emph{Second step: proving \eqref{eq:bound_I_gH3}.}
    In this step, we will only work with
    the complex eigenvalues of $H$:
    let $\ll\in\CC$ be one of them,
    we get
    \begin{align*}
        \labs 1-(1-\aa \ll)^{k+1}\rabs
        &=
        \labs\lp1-\aa\ll\rp \lp1-(1-\aa \ll)^{k}\rp+\aa\ll\rabs
        \\
        &\leq
        \lp\labs1-\aa\ll\rabs
        \labs 1-(1-\aa\ll)^k\rabs
        +\aa|\ll|\rp.
    \end{align*}
    This implies
    \begin{align*}
        \labs 1-(1-\aa \ll)^{k}\rabs
        &\leq
        \aa|\ll|\sum_{j=0}^{k-1}
        |1-\aa\ll|^j
        \\
        &\leq
        \frac{\aa|\ll|}{1-|1-\aa\ll|}
        \\
        &\leq
        \frac{\aa|\ll|}{1-(1-\aa\Re(\ll))^{\frac12}}
        \;\text{ using \eqref{eq:bound_I_gH1},}
        \\
        &\leq
        \frac{\aa|\ll|}{1-(1-\frac{\aa}2\Re(\ll))}
        \;\text{ because }\aa\Re(\ll)\leq 1,
        \\
        &\leq
        2\sqrt{1+\frac{\Im(\ll)^2}{\Re(\ll)^2}}
        \\
        &\leq
        2\lp 1+
        \frac{2}{\rho\ss^2}
        \norminf[2]{b+\nabla_x\ln(m)}\rp^{\frac12},
    \end{align*}
    where the last inequality comes from a similar
    argument as in the proof of \eqref{eq:bound_A2}.
    This concludes the proof.
\end{proof}

\begin{lemma}
    \label{lem:cov_vp}
    Assume \ref{hypo:vp_reg}.
    There exists $C>0$ such that
    the two following inequalities hold
    for any $k\geq0$,
    \begin{align*}
        \EE\lb\vp(X_k)\otimes\vp(X_k)\rb
        &\leq
        CS,
        \\
        (\EE\lb H(X_k)\rb-H)
        (\EE\lb H(X_k)\rb-H)^{\top}
        &\leq
        C\Delta t_k^2S.
    \end{align*}
\end{lemma}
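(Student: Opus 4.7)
The plan is to reduce both inequalities to two elementary ingredients: the quantitative weak-convergence bound \eqref{eq:cvg_mk}, applied entrywise to suitable $C^p$ test functions, together with the fact that $S$ dominates the identity because the feature coordinates of $\vp$ are linearly independent.

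I would first record that there exists $c>0$ with $I_d\leq cS$. Indeed, the first step of the proof of Lemma \ref{lem:HSA} gives $\th^\top S\th\geq \rho\,\th^\top\EE\lb\vp(X)\vp(X)^\top\rb\th$, and the Gram matrix $\EE\lb\vp(X)\vp(X)^\top\rb$ is positive definite by the linear-independence part of \ref{hypo:vp_reg}, so its smallest eigenvalue is strictly positive. The same identity also yields the trivial bound $\EE\lb\vp(X)\vp(X)^\top\rb\leq \rho^{-1}S$.

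For the first inequality, I would apply \eqref{eq:cvg_mk} to each product $f_{ij}(x)=\vp_i(x)\vp_j(x)$, which is $C^{p+2}$ by \ref{hypo:vp_reg} and hence $C^p$, to conclude that $|\EE\lb\vp_i(X_k)\vp_j(X_k)\rb-\EE\lb\vp_i(X)\vp_j(X)\rb|\leq C\Delta t_k$. A crude quadratic-form estimate then yields $\EE\lb\vp(X_k)\vp(X_k)^\top\rb\leq \EE\lb\vp(X)\vp(X)^\top\rb+Cd_\th\Delta t_k\, I_d\leq (\rho^{-1}+Ccd_\th\sup_j\Delta t_j)\,S$, and $\sup_j\Delta t_j$ is finite since $(\Delta t_k)_k$ converges to zero.

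For the second inequality, the same strategy applies to the entries $x\mapsto\vp_i(x)(\cL\vp)_j(x)$ of $H(x)$: by \ref{hypo:rb_reg} and \ref{hypo:vp_reg} these are $C^p$, since $\cL$ involves at most two $x$-derivatives, which are absorbed by $\vp\in C^{p+2}$ and $b,\ss\in C^p$. Hence each entry of $\EE\lb H(X_k)\rb-H$ is bounded by $C\Delta t_k$, so $(\EE\lb H(X_k)\rb-H)(\EE\lb H(X_k)\rb-H)^\top\leq C'\Delta t_k^2\, I_d\leq C''\Delta t_k^2\, S$. The only non-routine ingredient is the positive-definiteness of $S$; once that is in hand, both bounds are pure bookkeeping from \eqref{eq:cvg_mk} applied to a finite list of smooth observables.
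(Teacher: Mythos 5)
Your proof is correct, and it rests on the same two ingredients as the paper's: the quantitative convergence assumption \ref{eq:cvg_mk} and the fact that $S$ is positive definite on the finite-dimensional feature space (via the identity $\th^{\top}S\th=\rho\,\EE[v(X,\th)^2]+\frac{\ss^2}{2}\EE[|\nabla_xv(X,\th)|^2]$ from the first step of Lemma \ref{lem:HSA}). The packaging differs in a way worth noting. The paper applies \ref{eq:cvg_mk} to the $\th$-dependent test functions $v(\cdot,\th)^2$ and $\vp\,\cL v(\cdot,\th)$, and therefore implicitly needs the constant $C_f$ in \ref{eq:cvg_mk} to scale with a $C^4$- or $C^6$-norm of $f$; it then closes the argument by the equivalence of the $C^4$- (resp.\ $C^6$-) and $H^1(m)$-norms on the finite-dimensional space spanned by the features, the latter being comparable to $\th^{\top}S\th$. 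You instead apply \ref{eq:cvg_mk} only to the fixed finite list of observables $\vp_i\vp_j$ and $\vp_i(\cL\vp)_j$, take the maximum of finitely many constants, and pass from an entrywise bound to an operator bound and then to $S$ via $I_d\leq cS$. This is slightly cruder in the constants (a factor $d_\th$ and a dependence on $\lambda_{\min}(\EE[\vp\vp^{\top}])$ rather than on $\rho^{-1}+2\ss^{-2}$), but it has the advantage of using \ref{eq:cvg_mk} exactly as stated, with no uniformity of $C_f$ over a parametrised family required; the norm-equivalence step of the paper is replaced by the single positive-definiteness fact $I_d\leq cS$, which the paper already establishes in Section \ref{subsec:limits}. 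Both routes are sound; no gap.
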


\begin{proof}
    We recall that the set of admissible functions
    $v$ is finitely dimensional,
    therefore the $C^4$-norm and the $H^1(m)$-norm
    are equivalent and there exists $C>0$
    such that 
    $\norm[2]{v(\cdot,\th)}{C^4}
    \leq C
    \norm[2]{v(\cdot,\th)}{H^1(m)}$.
    For $\th\in\Theta$
    and $k\geq0$,
    this implies
    \begin{align*}
        \th^{\top}
        \EE\lb\vp(X_k)\otimes\vp(X_k)\rb\th
        &=
        C\EE\lb v(X_k,\th)^2\rb
        \\
        &\leq
        C\EE\lb v(X,\th)^2\rb
        +C\Delta t_k\norm{v(\cdot,\th)^2}{C^4}
        \\
        &\leq
        C\lp 1+\Delta t_k\rp
        \norm[2]{v(\cdot,\th)}{H^1(m)},
    \end{align*}
    where the second line is obtained from Theorem
    \eqref{eq:cvg_mk}.
    Here,
    $C$ is a constant that can change from line to line.
    The first inequality is then obtained by recalling that
    $\norm[2]{v(\cdot,\th)}{H^1(m)}\leq (\rho^{-1}+2\ss^{-2})\th^{\top}S\th$.

    Concerning the second inequality,
    we get
    \begin{align*}
        \labs\lp\EE\lb H(X_k)\rb-H\rp\th\rabs^2
        &=
        \labs\EE\lb \vp(X_k)\cL v(X_k,\th)-\vp(X)\cL v(X,\th)\rb\rabs^2
        \\
        &\leq
        C\lp\Delta t_k \norm{v(\cdot,\th)}{C^6}\rp^2
        \\
        &\leq
        C\Delta t_k^2
        \norm[2]{v(\cdot,\th)}{H^1(m)},
    \end{align*}
    where the second line is obtained from \eqref{eq:cvg_mk},
    and the third line from the fact that the $C^6$-norm
    is equivalent to the $H^1(m)$ on the finite dimensional
    space of functions $v$.
    We conclude the same way as we did for the first inequality.
\end{proof}

       \subsection{Calculus of variances and covariances}
    \begin{lemma}
        \label{lem:variances}
        Let $(x,\th)\in\Omega\times\Theta$
        and $\xi$ a Gaussian vector
        with zero mean and identity covariance matrix,
        the following equalities hold
        \begin{align}
            \label{eq:var_nabla}
            \Var\lp\xi\cdot\nabla_xv(x)\rp
            &=
            \labs\nabla_xv(x)\rabs^2,
            \\
            \label{eq:var_delta}
            \Var\lp\xi^{\top}D^2v(x)\xi - \Delta_xv(x)\rp
            &=
            2\tr\lp D^2_xv(x)^2\rp.
        \end{align}
    \end{lemma}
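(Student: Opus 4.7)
The plan is to handle each identity separately, as both are standard Gaussian moment computations once one unpacks the definitions, with the only subtlety being the symmetry of the Hessian in the second identity.

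For \eqref{eq:var_nabla}, I would simply observe that $\xi \cdot \nabla_x v(x)$ is a linear form in the standard Gaussian vector $\xi$. Hence it is itself a centered Gaussian scalar, and its variance is
\begin{equation*}
    \EE\lb (\xi \cdot \nabla_x v(x))^2\rb
    =
    \nabla_x v(x)^{\top}\EE\lb \xi \xi^{\top}\rb\nabla_x v(x)
    =
    |\nabla_x v(x)|^2,
\end{equation*}
using that $\EE[\xi\xi^{\top}]=I_d$. This takes one line.

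For \eqref{eq:var_delta}, set $M = D^2_x v(x)$, which is symmetric because $v$ is $C^2$. Since $\EE[\xi^{\top}M\xi] = \tr(M) = \Delta_x v(x)$, the quantity inside the variance is the centered random variable $\xi^{\top}M\xi - \tr(M)$, so the variance equals $\EE[(\xi^{\top}M\xi)^2] - \tr(M)^2$. I would then diagonalize: write $M = P^{\top}DP$ with $P$ orthogonal and $D = \diag(\lambda_1,\dots,\lambda_d)$. Since the law of $\xi$ is invariant under orthogonal transformations, $\eta := P\xi$ is again standard Gaussian, and
\begin{equation*}
    \xi^{\top}M\xi
    =
    \sum_{i=1}^{d}\lambda_i \eta_i^2.
\end{equation*}

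The $\eta_i^2$ are independent $\chi^2_1$ random variables with variance $2$, so
\begin{equation*}
    \Var\lp\xi^{\top}M\xi\rp
    =
    \sum_{i=1}^d \lambda_i^2 \Var(\eta_i^2)
    =
    2\sum_{i=1}^d\lambda_i^2
    =
    2\tr(D^2)
    =
    2\tr(M^2),
\end{equation*}
which is exactly \eqref{eq:var_delta}. No step here is a real obstacle; the only point worth flagging is the use of symmetry of $M$, both to guarantee the orthogonal diagonalization and to ensure $\EE[\xi^{\top}M\xi] = \tr(M)$ cleanly. Alternatively, one could avoid diagonalization by invoking the standard Isserlis/Wick formula $\EE[\xi_i\xi_j\xi_k\xi_l] = \delta_{ij}\delta_{kl}+\delta_{ik}\delta_{jl}+\delta_{il}\delta_{jk}$, which directly yields $\EE[(\xi^{\top}M\xi)^2] = \tr(M)^2 + 2\tr(M^2)$; but the diagonalization argument is more self-contained and shorter.
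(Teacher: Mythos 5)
Your proof is correct and follows essentially the same route as the paper: the first identity is a one-line Gaussian linear-form computation, and the second is handled by orthogonally diagonalizing the (symmetric) Hessian, using rotation invariance of the standard Gaussian, and summing the variances of the independent $\chi^2_1$ terms. The Isserlis/Wick alternative you mention is a valid shortcut but is not the paper's choice either.
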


    \begin{proof}
        The first equality is straightforward.
        Since $D^2v(x)$ is symmetric,
        there exists $P$ an orthogonal matrix
        and $D$ a diagonal matrix such that
        $D^2v(x)=P^{\top}DP$.
        The couples $(X,\xi)$
        and $(X,P^{\top}\xi)$
        have the same law
        and $\xi$ is independent of $X$ and $D$,
        this implies
        \begin{align*}
            \Var\lp\xi^{\top}D^2v(x)\xi - \Delta_xv(x)\rp
            &=
            \EE\lb\lp\xi^{\top}D^2v(x)\xi - \Delta_xv(x)\rp^2\rb
            \\
            &=
            \EE\lb\lp \lp P^{\top}\xi\rp^{\top}
            D^2v(x)P^{\top}\xi - \Delta_xv(x)\rp^2\rb
            \\
            &=
            \EE\lb\lp \xi^{\top}D\xi
            - \Delta_xv(x)\rp^2\rb
            \\
            &=
            \EE\lb\sum_{i=1}^d D_i^2\lp\xi_i^2-1\rp^2\rb
            =2\sum_{i=1}^d D_i^2
            =2\tr\lp D^2_xv(x)^2\rp.
        \end{align*}
        This concludes the proof.
    \end{proof}

  \subsection{Counterpart to Lemma \ref{lem:expand_eps} in the multi-step setting}
    \begin{lemma}
        \label{lem:expand_eps_n_sq}
        There exists $C>0$
        such that,
        for any $(x,\th)\in\Omega\times\Theta$,
        $n\geq1$,
        $0<\Delta t<\frac1n$
        and $\xi=\lp\xi_i\rp_{0\leq i<n}$ independent  normally distributed
        random variables with zero mean and identity covariance matrix,
        we have
        \begin{align*}
            &\labs\EE\lb|\ddt^n_{\Delta t}(x,\xi,\th)|^2\rb
            -\cL v(x)^2\rabs
            \leq
            C\lp1+\labs\th\rabs^2\rp
            \lp n^{-1} + n\Delta t\rp,
            \\
            &\labs\EE\lb\nabla_{\th}
            |\ddt^n_{\Delta t}(x,\xi,\th)|^2\rb
            -\nabla_{\th}\cL v(x)^2\rabs
            \leq
            C\lp1+\labs\th\rabs\rp
            \lp n^{-1} + n\Delta t\rp.
        \end{align*}
    \end{lemma}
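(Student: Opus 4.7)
The plan is to expand $\EE\lb|\ddt^n_{\Delta t}|^2\rb = n^{-2}\sum_{i,j=0}^{n-1}\EE\lb\delta_i\delta_j\rb$, where $\delta_i:=\ddt_{\Delta t}(X_{t_i},X_{t_{i+1}},\th)$, and to decompose each summand by Lemma~\ref{lem:expand_eps} as $\delta_i = \cL v(X_{t_i}) + N_i + \Delta t\,T_i$, with $N_i := R_0(X_{t_i},\xi_i)^\top\th + \Delta t^{1/2}R_1(X_{t_i},\xi_i)^\top\th$ the zero-mean part and $T_i := R_2(\Delta t, X_{t_i},\xi_i)^\top\th$ the bounded-moment remainder. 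The key structural property, used throughout, is that $\EE\lb N_i \mid \xi_0,\ldots,\xi_{i-1}\rb = 0$: both $R_0$ and $R_1$ are odd, or centered quadratic, polynomials in the Gaussian $\xi_i$.

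For diagonal pairs $i = j$, straightforward second-moment bounds yield $\EE\lb\delta_i^2\rb \leq C(1+|\th|^2)$, contributing $O((1+|\th|^2)/n)$ after dividing by $n^2$. For off-diagonal pairs, say $j < i$, the cross product expands into nine terms. Four of them, those in which an $N_i$ appears, vanish after conditioning on $\sigma(\xi_0,\ldots,\xi_{i-1})$. The leading term $\EE\lb\cL v(X_{t_i})\cL v(X_{t_j})\rb$ equals $\cL v(x)^2 + O(n\Delta t)$, from the Taylor-type estimate $|\EE\lb f(X_{t_k})\rb - f(x)| \leq Ck\Delta t$ for smooth $f$ (iterating Lemma~\ref{lem:expand_eps} applied to $v=f$). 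The delicate term is $\EE\lb\cL v(X_{t_i})\,N_j\rb$: one writes it as $\EE\lb\psi(S_{\Delta t}(X_{t_j},\xi_j))\,N_j\rb$ with $\psi(y) := \EE\lb\cL v(X_{t_i}) \mid X_{t_{j+1}} = y\rb$ by the Markov property, Taylor-expands $\psi$ around $X_{t_j}$ in the increment $\sqrt{\Delta t}\,\ss\xi_j + \Delta t\,b$, observes that the leading term cancels against $\EE\lb N_j \mid X_{t_j}\rb = 0$, and bounds the surviving $\sqrt{\Delta t}$-term by $O(\sqrt{\Delta t}\,|\th|)$ using that the cross-moments of the Gaussian against $N_j$ are uniformly bounded. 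The remaining $T$-cross contributions are $O(\Delta t(1+|\th|^2))$ by Cauchy--Schwarz.

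Summing over the $n^2 - n$ off-diagonal pairs yields $\EE\lb|\ddt^n_{\Delta t}|^2\rb = \cL v(x)^2 + O(n^{-1} + n\Delta t + \sqrt{\Delta t})(1+|\th|^2)$, and the AM--GM inequality $\sqrt{\Delta t}\leq (n^{-1}+n\Delta t)/2$ absorbs the square-root term into the target bound. The gradient inequality is then essentially a corollary: since $v(\cdot,\th)=\th^\top\vp$, the quantity $|\ddt^n_{\Delta t}|^2$ is quadratic in $\th$ and $\nabla_\th|\ddt^n_{\Delta t}|^2 - \nabla_\th(\cL v(x))^2 = 2(M_n - \cL\vp(x)\cL\vp(x)^\top)\,\th$ for some matrix $M_n$ independent of $\th$; interpreting the first bound as a quadratic-form estimate on this matrix controls its operator norm by $O(n^{-1}+n\Delta t)$, and multiplying by $\th$ yields the required linear-in-$|\th|$ discrepancy.

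The main obstacle is the sub-estimate for $\EE\lb\cL v(X_{t_i})\,N_j\rb$: naively one hopes the conditional-mean-zero property of $N_j$ makes it orthogonal to everything earlier, but for $i > j$ the factor $\cL v(X_{t_i})$ genuinely depends on $\xi_j$ through the chain $X_{t_{j+1}},\ldots,X_{t_i}$, so standard martingale orthogonality fails outright. The remedy, as described, is to first integrate out $\xi_{j+1},\ldots,\xi_{i-1}$ using the Markov property, which reduces the problem to a scalar Taylor expansion in the single direction $\sqrt{\Delta t}\,\xi_j$ and extracts exactly the one factor of $\sqrt{\Delta t}$ that AM--GM converts into the $(n^{-1}+n\Delta t)$ rate.
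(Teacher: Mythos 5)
Your argument is correct and reaches the stated bounds, but it is organised differently from the paper's proof, so a comparison is worth recording. The paper starts from the same identity $\ddt^n_{\Delta t}=\frac1n\sum_i\ddt_{\Delta t}(X_{t_i},X_{t_{i+1}},\th)$ and the same single-step expansion (Lemma \ref{lem:expand_eps}), but instead of treating the $n^2$ covariances pair by pair it Taylor-expands each $\cL v(X_{t_i})$ to second order around the \emph{fixed} initial point $x$ in the accumulated increment $\sum_{j<i}(b\Delta t+\ss\sqrt{\Delta t}\,\xi_j)$, then swaps the order of summation so that every $\xi_j$ appears exactly once with the explicit coefficient $(n-1-j)$; the whole quantity becomes $\cL v(x)$ plus a single sum of conditionally centered terms plus a remainder of $L^2$-norm $O(n^{-1}+n\Delta t)$, and squaring kills all cross terms at once by independence. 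Your route keeps the double sum and attacks the one genuinely delicate cross term $\EE\lb\cL v(X_{t_i})N_j\rb$ ($j<i$) through the Markov property, Taylor-expanding the $(i-j-1)$-step conditional expectation $\psi$ around $X_{t_j}$. This works, and your diagnosis that naive martingale orthogonality fails there is exactly right, but note that it silently requires $\nabla\psi$ (and $D^2\psi$) to be bounded uniformly in $i-j$ and $\Delta t$; this holds here because $\ss$ is taken constant in the proofs and the step Jacobians multiply up to $(1+C\Delta t)^{n}\leq e^{C}$ under $n\Delta t<1$, but it is precisely the kind of semigroup-regularity estimate that the paper's global expansion around $x$ avoids having to state. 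Two cosmetic points: only three of the nine products in $\EE\lb\delta_i\delta_j\rb$ contain $N_i$ (not four), and your surviving term in $\EE\lb\cL v(X_{t_i})N_j\rb$ is in fact $O(\Delta t\,|\th|^2)$ rather than $O(\sqrt{\Delta t}\,|\th|^2)$, since the Gaussian third moments pairing $\xi_j$ against the centered quadratic $R_0$ vanish; your weaker bound still closes via the AM--GM step $\sqrt{\Delta t}\leq\frac12(n^{-1}+n\Delta t)$, which the sharper estimate would render unnecessary. The reduction of the gradient inequality to an operator-norm consequence of the quadratic-form bound is a clean shortcut consistent with the linearity assumption \ref{hypo:v_lin}.
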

    \begin{proof}
        Taking $X_0=x$
        and $X_{t_{i+1}}=S_{\Delta t}(X_{t_i},\xi_i)$
        for $0\leq i< n$,
        we obtain
        \begin{equation}
            \label{eq:eps_n_from_eps}
            \ddt^n_{\Delta t}(x,\xi,\th)
            =
            \frac1n
            \sum_{i=0}^{n-1}
            \ddt_{\Delta t}(X_{t_i},X_{t_{i+1}},\th).
        \end{equation}
        Let us do the expansion
        of $\cL v(X_{t_i})$
        around $x$ up to order two,
        \begin{equation*}
            \cL v(X_{t_i})
            =
            \cL v(x)
            +\nabla_x\cL v(x)
            \cdot
            \bt
            +\int_0^1(1-s)
            \bt^{\top}D^2_x\cL v\lp x+s\bt\rp\bt ds,
        \end{equation*}
        where $\bt=\sum_{j=0}^{i-1}\lp b(X_{t_j},u(X_{t_j}))\Delta t
        +\ss\sqrt{\Delta t}\xi_j\rp$.
        The latter equalities and Lemma \ref{lem:expand_eps}
        imply
        \begin{multline*}
            \ddt^n(x,\xi,\th)
            =
            \cL(x)
            +\frac{\ss^2}{2n}
            \sum_{i=0}^{n-1}
            \lp\Delta_xv(X_{t_i})
            -\xi_i^{\top}D^2v(X_{t_i})\xi_i\rp
            +\frac1{n\sqrt{\Delta t}}
            \sum_{i=0}^{n-1}
            \Bigl[
            (n-1-i)\ss\nabla_x\cL v(X_{t_i})\cdot\xi_i
            \\
            +\rho\ss\nabla_xv(X_{t_i})\cdot\xi_i
            -\frac{\ss}2b(X_{t_i},u(X_{t_i}))^{\top}
            D^2v(X_{t_i})\xi_i
            -\frac{\ss^3}6d^3_xv(X_{t_i})(\xi_i,\xi_i,\xi_i)
            \Bigr]
            +R^n_{\Delta t}(x,\xi,\th),
        \end{multline*}
        with $\EE\lb\labs R^n_{\Delta t}(x,\xi,\th)\rabs^2\rb
        \leq C(1+|\th|^2)(n^{-1}+n\Delta t)^2$.
        We conclude by taking the expectation of the square
        in the latter equality
        and using the independence of $\lp\xi_i\rp_{0\leq i<n}$.
    \end{proof}

\end{document}